\definecolor{citecolor}{HTML}{0071bc}
\title{RLIF: Interactive Imitation Learning as Reinforcement Learning}
\newtheorem{theorem}{Theorem}[section]
\newtheorem{lemma}[theorem]{Lemma}
\newtheorem{corollary}[theorem]{Corollary}
\newtheorem{proposition}[theorem]{Proposition}
\newtheorem{definition}[theorem]{Definition}
\newtheorem{example}[theorem]{Example}
\newtheorem{assumption}[theorem]{Assumption}
\renewcommand{\mathbf}{\boldsymbol}
\newcommand{\mc}{\mathcal}
\newcommand{\bb}{\mathbb}
\newcommand{\eps}{\varepsilon}
\newcommand{\indicator}[1]{\mathbf 1\left\{#1\right\}}
\def\Ddots{\mathinner{\mkern1mu\raise\p@
\vbox{\kern7\p@\hbox{.}}\mkern2mu
\raise4\p@\hbox{.}\mkern2mu\raise7\p@\hbox{.}\mkern1mu}}
\newcommand{\wt}{\widetilde}
\newcommand{\norm}[2]{\left\| #1 \right\|_{#2}}
\newcommand{\abs}[1]{\left| #1 \right|}
\newcommand{\paren}[1]{\left( #1 \right)}
\newcommand{\brac}[1]{\left[ #1 \right]}
\newcommand{\Brac}[1]{\left\{ #1 \right\}}
\newcommand{\piexp}{\pi^\mr{exp}}
\newcommand{\pistar}{\pi^\star}
\newcommand{\pitilde}{\tilde{\pi}}
\newcommand{\piref}{{\pi}^\mr{ref}}
\newcommand{\Pioptdelta}{\Pi^\mr{opt}_{\delta}}
\newcommand{\dpistar}{d^{\pi^\star}}
\newcommand{\dpiexp}{d^{\pi^\mr{exp}}}
\newcommand{\dpiref}{d^{\pi^\mr{ref}}}
\newcommand{\dpitilde}{d^{\pitilde}}
\newcommand{\muint}{\mu^{\mr{int}}}
\newcommand{\pihat}{\hat{\pi}}
\newcommand{\rtilde}{\tilde{r}_\delta}
\newcommand{\rmax}{1}
\newcommand{\vtilde}{V_{\rtilde}}
\newcommand{\subopt}{\mr{SubOpt}}
\newcommand{\mr}{\mathrm}
\numberwithin{equation}{section}
\def\eqref#1{equation~\ref{#1}}
\def\1{\bm{1}}
\def\eps{{\epsilon}}
\DeclareMathAlphabet{\mathsfit}{\encodingdefault}{\sfdefault}{m}{sl}
\SetMathAlphabet{\mathsfit}{bold}{\encodingdefault}{\sfdefault}{bx}{n}
\DeclareMathOperator*{\argmin}{arg\,min}
\newcommand{\ours}{RLIF}
\definecolor{crimsonglory}{rgb}{0.75, 0.0, 0.2}
\author{Jianlan Luo$^*$ \,\, Perry Dong$^*$\,\, Yuexiang Zhai\,\,  Yi Ma\,\, Sergey Levine \\ 
  UC Berkeley; \texttt{\{jianlanluo, perrydong\}@berkeley.edu
}}
\begin{document}

\maketitle
\def\thefootnote{*}\footnotetext{Equal contributions.}

\begin{abstract}
Although reinforcement learning methods offer a powerful framework for automatic skill acquisition, for practical learning-based control problems in domains such as robotics, imitation learning often provides a more convenient and accessible alternative. In particular, an interactive imitation learning method such as DAgger, which queries a near-optimal expert to intervene online to collect correction data for addressing the distributional shift challenges that afflict na\"{i}ve behavioral cloning, can enjoy good performance both in theory and practice without requiring manually specified reward functions and other components of full reinforcement learning methods. 
In this paper, we explore how off-policy reinforcement learning can enable improved performance under assumptions that are similar but potentially even more practical than those of interactive imitation learning. 
Our proposed method uses reinforcement learning with user intervention signals \emph{themselves} as rewards.
This relaxes the assumption that intervening experts in interactive imitation learning should be near-optimal and enables the algorithm to learn behaviors that improve over the potential suboptimal human expert.
We also provide a unified framework to analyze our RL method and DAgger; for which we present the asymptotic analysis of the suboptimal gap for both methods as well as the non-asymptotic sample complexity bound of our method.
We then evaluate our method on challenging high-dimensional continuous control simulation benchmarks as well as real-world robotic vision-based manipulation tasks. 
The results show that it strongly outperforms DAgger-like approaches across the different tasks, especially when the intervening experts are suboptimal. Additional ablations also empirically verify the proposed theoretical justification that the performance of our method is associated with the choice of intervention model and suboptimality of the expert. Code and videos can be found on the project website: \url{rlif-page.github.io}

\end{abstract}

\section{Introduction}\label{sec:intro}

\begin{wrapfigure}{r}{0.6\columnwidth}
\vspace{-0.5cm}
\begin{center}
{\includegraphics[width=\linewidth]{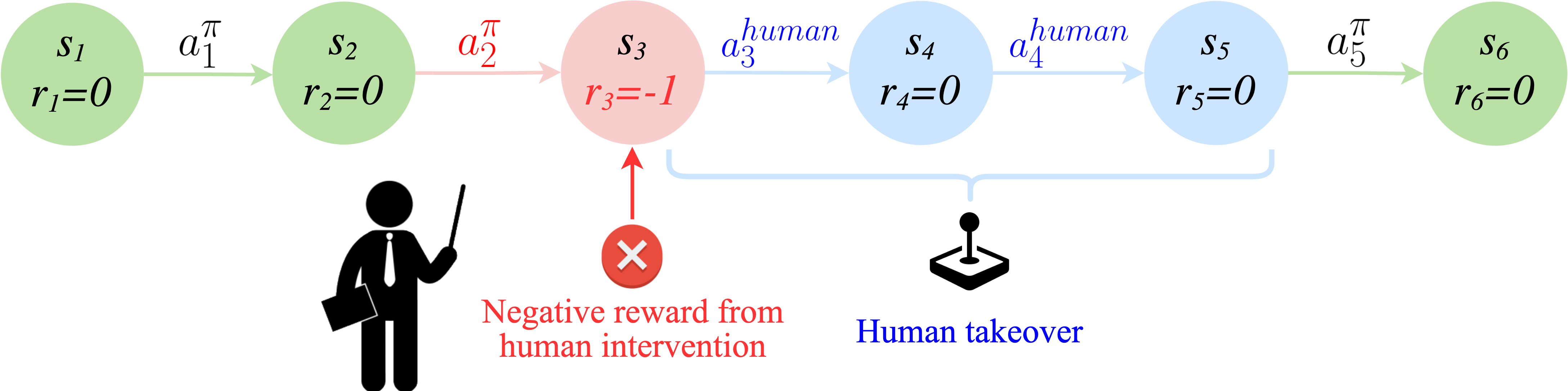}}
\caption{
\footnotesize{\ours{} uses RL to learn without ground truth rewards, with data collected with suboptimal human interventions.}}
\vspace{-0.4cm}

\label{fig:teaser}
\end{center}
\end{wrapfigure}

Reinforcement learning methods have exhibited great success in domains where well-specified reward functions are available, such as optimal control, games, and aligning large language models (LLMs) with human preferences~\citep{levine2016end, kalashnikov2018qtopt, silver2017mastering, ouyang2022training}.
However, imitation learning methods are still often preferred in some domains, such as robotics, because they are often more convenient, accessible, and easier to use. An often-cited weakness of na\"{i}ve behavioral cloning is the compounding distributional shift induced by accumulating errors when deploying a learned policy. Interactive imitation learning methods, like the DAgger family of algorithms~\citep{ross2011reduction, Kelly2018HGDAgger, FleetDAggerIR, Menda2018EnsembleDAggerAB, ross2014aggrevate}, address this issue by querying expert actions online and retraining the model iteratively in a supervised learning fashion. This performs well in practice, and in theory reduces the quadratic regret of imitation learning methods to be linear in the episode horizon. One particularly practical instantiation of this idea involves a human expert observing a learned policy, and intervening to provide \emph{corrections} (short demonstrations) when the policy exhibits undesirable behavior~\citep{Kelly2018HGDAgger, spencer2020intervention}.
However, such interactive imitation learning methods still rely on interventions that are near-optimal, and offer no means to improve over the performance of the expert. Real human demonstrators are rarely optimal, and in domains such as robotics, teleoperation often does not afford the same degree of grace and dexterity as a highly tuned optimal controller. Can we combine the best parts of reinforcement learning and interactive imitation learning, combining the reward-maximizing behavior of RL, which can improve over the best available human behavior, with the accessible assumptions of interactive imitation learning?
The key insight we leverage in this work is that the decision to intervene during an interactive imitation episode itself can provide a reward signal for reinforcement learning, allowing us to instantiate RL methods that operate under similar but potentially weaker assumptions as interactive imitation methods, learning from human interventions but not assuming that such interventions are optimal. Intuitively, for many problems, it's easier to detect a mistake than it is to optimally correct it.
Imagine an autonomous driving scenario with a safety driver. While the driver could intervene when the car deviates from good driving behavior, such interventions themselves might often be relatively uninformative and suboptimal -- for example, if the human driver intervenes right before a collision by slamming on the breaks, simply teaching the policy to slam on the breaks is probably not the best solution, as it would be much better for the policy to learn to avoid the situations that necessitated such an intervention in the first place.
Motivated by this observation, we propose a method that runs RL on data collected from DAgger-style interventions, where a human operator observes the policy's behavior and intervenes with \emph{suboptimal} corrections when the policy deviates from optimal behavior. Our method labels the action that leads to an intervention with a negative reward and then uses RL to minimize the occurrence of intervention by maximizing these reward signals. We call our method \emph{RLIF}: Reinforcement Learning via Intervention Feedback.
This offers a convenient mechanism to utilize non-expert interventions: the final performance of the policy would not be bottlenecked by the suboptimality of the intervening expert, but rather the policy would improve to more optimally avoid interventions happening at all. Of course, the particular intervention strategy influences the behavior of such a method, and we require some additional assumptions on when interventions occur. We formalize several such assumptions and evaluate their effect on performance, finding that several reasonable strategies for selecting \emph{when} to intervene lead to good performance.
We also provide a theoretical justification for the proposed method, via both an asymptotic analysis of the suboptimality gap that generalizes the theoretical framework of DAgger~\citep{ross2011reduction}, and non-asymptotic analysis on learning an $\eps$-optimal policy with finite samples using the intervention rewards.


Our main contribution is a practical RL algorithm that can be used under assumptions that closely resemble interactive imitation learning, without requiring ground truth reward signals. We provide a theoretical analysis that studies under which conditions we expect this method to outperform DAgger-style interactive imitation techniques. 
Empirically, we evaluate our approach in comparison to DAgger on a variety of challenging continuous control tasks, such as the Adroit dexterous manipulation and Gym locomotion environments~\citep{d4rl}. 
Our empirical results show that our method is on average \textbf{2-3x} better than best-performing DAgger variants, and this difference is much more pronounced as the suboptimality gap expands. We also demonstrate our method scales to a challenging real-world robotic task involving an actual human providing feedback.
Our empirical results are well justified by our theory: suboptimal experts can in principle deteriorate the performance on both imitation learning and RL, but RL is generally more powerful than imitation learning. This is because RL can still recover the optimal policy $\pistar$ with additional samples, while imitation learning methods perform poorly due to the introduced suboptimality.

\section{Related Work}\label{sec:related_work}

\paragraph{Interactive imitation learning.} Imitation learning extracts policies from static offline datasets via supervised learning~\citep{Billard2008,ilsurvey,gail,ROB-053, laskey2017dart}. Deploying such policies incurs distributional shift, because the states seen at deployment-time differ from those seen in training when the learned policy doesn't perfectly match the expert, potentially leading to poor results~\citep{ross2010efficient,ross2011reduction}. Interactive imitation learning leverages additional online human interventions from states visited by the learned policy to address this issue~\citep{FleetDAggerIR,Kelly2018HGDAgger,Hoque2021ThriftyDAggerBN,Menda2018EnsembleDAggerAB,ross2011reduction}. These methods generally assume that the expert interventions are near-optimal. Our method relaxes this assumption, by using RL to train on data collected in this interactive fashion, with rewards derived from the user's choice of when to intervene.
\paragraph{Imitation learning with reinforcement learning.} Another line of related work uses RL to improve on suboptimal human demonstrations~\citep{vecerik2017leveraging,sun2017aggrevated,cheng2018fast,sun2018truncated,nairicra,ainsworth2019mo,rajeswaran2018dapg,kidambi2020morel,Luo2021RobustMP,xie2022policy,xue2023guarded,song2022hybrid,ball2023efficient}. These methods typically initialize the RL replay buffer with human demonstrations, and then improve upon those human demonstrations by running RL with the task reward. In contrast to these methods, our approach does not require any task reward, but rather recovers a reward signal implicitly from intervention feedback. Some works use RL with interventions but assume the expert is optimal~\citep{li2022efficient}, which our method does not assume. Other works incorporate example high-reward states specified by a human user in place of demonstrations~\citep{Reddy2019SQILIL,eysenbach2021replacing}. While this is related to our approach of assigning negative rewards at intervention states, our interventions are collected interactively during execution under assumptions that match interactive imitation learning, rather than being provided up-front.
Closely related to our approach, \citet{Kahn2020LaNDLT} proposed a robotic navigation system that incorporates \emph{disengagement} feedback, where a model is trained to predict states where a user will halt the robot, and then avoids those states. Our framework is model-free and operates under general interactive imitation assumptions, and utilizes more standard DAgger-style interventions rather than just disengagement signals.

\section{Preliminaries and Problem Setup}\label{sec:prelim}


In this section, we set up the interactive imitation learning and RL formalism, and then introduce our problem statement. 
\paragraph{Behavioral cloning and interactive imitation learning.} The most basic form of imitation learning is behavioral cloning, which simply trains a policy $\pihat(a|s)$ on a dataset of demonstrations $D$, conventionally assumed to be produced by an optimal expert policy $\pistar(a|s)$, with $d_{\pistar}(s)$ as its state marginal distribution. Then, for each $(s,a) \in D$, behavioral cloning assumes that $s \sim d_{\pistar}(s)$ and $a \sim \pistar(a|s)$. Behavioral cloning then chooses $\pihat = \argmin_{\pi \in \Pi} \sum_{s, a \in D} \ell(s, a, \pi)$, where $\ell(s, a, \pi)$ is some loss function, such as the negative log-likelihood (i.e., \mbox{$\ell(s, a, \pi) = -\log \pi(a|s)$}). Na\"{i}ve behavioral cloning is known to accumulate regret quadratically in the time horizon $H$: when $\pihat$ differs from $\pistar$ even by a small amount, erroneous actions will lead to distributional shift in the visited states, which in turn will lead to larger errors~\citep{ross2011reduction}. Interactive imitation learning methods, such as DAgger and its 
\setlength{\intextsep}{6pt}
\begin{wrapfigure}{r}{0.5\textwidth}
\begin{minipage}{.5\textwidth}
\begin{algorithm}[H]
\caption{Interactive imitation}\label{alg:hg-dagger}
\begin{algorithmic}[1] 
\REQUIRE $\pi$, $\piexp$, $D$
\FOR{\texttt{trial} $i = 1$ to $N$}
    \STATE Train $\pi$ on D via supervised learning
    \FOR{\texttt{timestep} $t=1$ to $T$}
        \IF{$\piexp$ intervenes at $t$} 
            \STATE append $(s_t,a^{\piexp}_t)$ to $D_i$
        \ENDIF
    \ENDFOR
    \STATE $D \leftarrow D \cup D_i$
\ENDFOR
\end{algorithmic}
\end{algorithm}
\end{minipage}
\end{wrapfigure}variants~\citep{ross2011reduction,ross2010efficient,Kelly2018HGDAgger,FleetDAggerIR,Menda2018EnsembleDAggerAB, Hoque2021ThriftyDAggerBN}, propose to address this problem, reducing the error to be \emph{linear} in the horizon by gathering additional training data by running the learned policy $\pihat$, essentially adding new samples $(s,a)$ where $s \sim d_{\pihat}(s)$, and $a \sim \pistar(a|s)$. Different interactive imitation learning methods prescribe different strategies for adding such labels. Classic DAgger~\citep{ross2011reduction} runs $\pihat$ and then asks a human expert to relabel the resulting states with $a \sim \pistar(a|s)$. This is often unnatural in time-sensitive control settings, such as robotics and driving, and a more user-friendly alternative such as HG-DAgger and its variants~\citep{Kelly2018HGDAgger} instead allows a human expert to \emph{intervene}, taking over control from $\pihat$ and overriding it with an expert action. We illustrate this in Algorithm~\ref{alg:hg-dagger}.

Although this changes the state distribution, the essential idea of the method (and its regret bound) remain the same. However, as we will analyze further in Sec.~\ref{sec:theory}, when the expert actions are \emph{not} optimal (i.e., they come from a policy $\pi^\text{exp}$ that is somewhat worse than $\pistar$), the regret gap for DAgger-like methods expands.
Our aim in this work will be to apply RL to this setting to address this issue, potentially even outperforming the expert.
\paragraph{Reinforcement learning.} RL algorithms aim to learn optimal policies in Markov decision processes (MDPs). We will use an infinite-horizon formulation in our analysis. The MDP is defined as $\mc M = \{\mc S,\mc A,P,r,\gamma\}$. $\mc M$ comprises: $\mc S$, a state space of cardinality $S$, $\mc A$, an action space with size $A$, $P:\mc S\times\mc A\to\Delta(\mc S)$, representing the transition probability of the MDP, ${r:\mc S\times \mc A\to [0,\rmax]}$ is the reward function, and $\gamma\in (0,1)$ represents the discount factor. We use $\pi:\mc S\to\Delta(\mc A)$. We introduce the value function $V^\pi(s)$ and the Q-function $Q^\pi(s,a)$ associated with policy $\pi$ as: $V^\pi(s):=\bb E\brac{\sum_{t=0}^\infty \gamma^t r(s_{t},a_{t})|s_0 = s;\pi},\forall s\in \mc S$ and $\forall (s,a)\in \mc S\times \mc A: Q^\pi(s,a):=\bb E\brac{\sum_{t=0}^\infty \gamma^t r(s_{t},a_{t})|s_0 = s,a_0=a;\pi}$, as is standard in reinforcement learning analysis. We assume the initial state distribution is given by $\mu$: $s_0\sim \mu$, and $\mu\in \Delta(\mc S)$ and we slightly abuse the notation by using $V^\pi(\mu)$ to denote $\bb E_{s\sim\mu}V^\pi(s)$. The goal of RL is to learn an optimal policy $\pistar$ in the policy class $\Pi$ that maximizes the expected cumulative reward within the horizon $H$: $\pistar = \arg\max_{\pi\in \Pi}V^\pi(\mu)$~\citep{bertsekas2019reinforcement}. Without loss of generality, we assume the optimal policy $\pistar$ to be {\em deterministic}~\citep{bertsekas2019reinforcement,li2022settling}. We slightly abuse the notation by using $V^\star,Q^\star$ to denote $V^{\pistar},Q^{\pistar}$.  Additionally, we use ${d_{\mu}^{\pi}(s) = (1-\gamma)\sum_{t=0}^\infty \gamma^t \bb P^\pi(s_t=s|s_0\sim\mu)}$, to denote the state occupancy distribution under policy $\pi$ on the initial state distribution $s_0\sim \mu$. We also slightly abuse the notation by using $d^\pi_\mu\in \bb R^{S}$ to denote a vector, whose entries are $d_\mu^\pi(s)$.

\paragraph{Problem setup.} Our aim will be to develop a reinforcement learning algorithm that operates under assumptions that resemble interactive imitation learning, where the algorithm is not provided with a reward function, but instead receives demonstrations followed by interactive interventions, as discussed above. We will not assume that the actions in the interventions themselves are optimal, but will make an additional mild assumption that the choice of \emph{when} to intervene itself carries valuable information. We will discuss the specific assumption used in our analysis in Section~\ref{sec:theory}, and we utilize several intervention strategies in our experiments, but intuitively we assume that the expert is more likely to intervene when $\pihat$ takes a bad action. This in principle can provide an RL algorithm with a signal to alter its behavior, as it suggests that the steps leading up to this intervention deviated significantly from optimal behavior. Thus, we will aim to relax the strong assumption that the expert is optimal in exchange for the more mild and arguably natural assumption that the expert's \emph{choice of when to intervene} correlates with the suboptimality of the learned policy.

\section{Interactive Imitation Learning as Reinforcement Learning}\label{sec:method}

\setlength{\intextsep}{6pt}
\begin{wrapfigure}{r}{0.5\textwidth}
\vspace{-1em}
\begin{minipage}{0.5\textwidth}
\begin{algorithm}[H]
\caption{RLIF}\label{alg:rlif}
\begin{algorithmic}[1] 
\REQUIRE $\pi$, $\piexp$, $D$
\FOR{\texttt{trial} $i = 1$ to $N$}
    \STATE Train $\pi$ on D via {\color[HTML]{B85450} reinforcement learning.}
    \FOR{\texttt{timestep} $t=1$ to $T$}
        \IF{$\piexp$ intervenes at $t$} 
            \STATE {\color[HTML]{B85450}  label $(s_{t-1},a_{t-1}, s_{t})$ with -1 reward}, \\ append to $D_i$
        \ELSE 
            \STATE {\color[HTML]{B85450} label $(s_{t-1},a_{t-1}, s_{t})$ with 0 reward}, \\ append to $D_i$
        \ENDIF
    \ENDFOR
    \STATE $D \leftarrow D \cup D_i$
\ENDFOR
\end{algorithmic}
\end{algorithm}
\end{minipage}
\end{wrapfigure}

The key observation of this paper is that, under typical interactive imitation learning settings, the \emph{intervention signal alone} can provide useful information for RL to optimize against, without assuming that the expert demonstrator actually provides optimal actions.

Our method, which we refer to as RLIF (reinforcement learning from intervention feedback), follows a similar outline as Algorithm~\ref{alg:hg-dagger}, but using reinforcement learning in place of supervised learning. The generic form of our method is provided in Algorithm~\ref{alg:rlif}. On Line 2, the policy is trained on the aggregated dataset $D$ using RL, with rewards derived from interventions. The reward is simply set to 0 for any transition where the expert does not intervene, and -1 for the previous transition where an expert intervenes. After an intervention, the expert can also optionally take over the control for a few steps; after which it will be released to the RL agent. 
This way, an expert can also intervene multiple times during one episode.
All transitions are added to the dataset $D$, not just those that contain the expert reward; in practice, we can optionally initialize $D$ with a small amount of offline data to warm-start the process.
An off-policy RL algorithm can utilize all of the data and can make use of the reward labels to avoid situations that cause interventions. 
This approach has a number of benefits: unlike RL, it doesn't require the true task reward to be specified, and unlike interactive imitation learning, it does not require the expert interventions to contain optimal actions, though it does require the \emph{choice} of when to intervene to correlate with the suboptimality of the policy (as we will discuss later). 
Intuitively, we expect it to be less of a burden for experts to \textit{only} point out which states are undesirable rather than actually \textit{act optimally} in those states.

\paragraph{Practical implementation.} To instantiate Algorithm~\ref{alg:rlif} in a practical deep RL framework, it is important to choose the RL algorithm carefully. The data available for RL consists of a combination of on-policy samples and potentially suboptimal near-expert interventions, which necessitates using a suitable off-policy RL algorithm that can incorporate prior (near-expert) data easily but also can efficiently improve with online experience. While a variety of algorithms designed for online RL with offline data could be suitable~\citep{song2022hybrid,lee2022offline,nakamoto2023cal}, we adopt the recently proposed RLPD algorithm~\citep{ball2023efficient}, which has shown compelling results on sample-efficient robotic learning. RLPD is an off-policy actor-critic reinforcement learning algorithm that builds on soft-actor critic~\citep{sac}, but makes some key modifications to satisfy the desiderata above such as a high update-to-data ratio, layer-norm regularization during training, and using ensembles of value functions, which make it more suitable for incorporating offline data into online RL. For further details on this method, we refer readers to prior work~\citep{ball2023efficient}, though we emphasize that our method is generic and in principle could be implemented relatively easily on top of a variety of RL algorithms. The RL algorithm itself is not actually modified, and our method can be viewed as a meta-algorithm that simply changes the dataset on which the RL algorithm operates.


\section{Experiments}\label{sec:experiments}

Since our method operates under standard interactive imitation learning assumptions, our experiments aim to compare RLIF to DAgger under different types of suboptimal experts and intervention modes. We seek to answer the following questions: (1) Are the intervention rewards sufficient signal for RL to learn effective policies? (2) How well does our method perform compared to DAgger, especially with suboptimal experts? (3) What are the implications of different intervention strategies on empirical performance?

\subsection{Intervention Strategies}
\label{subsec:intervention_mode}

\begin{wrapfigure}{rb}{0.4\textwidth}
    \vspace{-3em}
    \centering
     \includegraphics[width=0.4\textwidth]{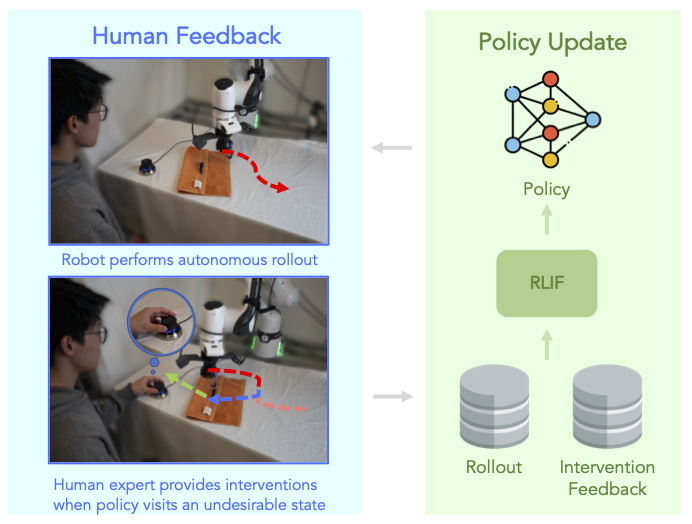} 
    \vspace{-1.5em}

    \caption{A human operator supervises policy training and provides intervention with a 3D mouse.}
    \label{fig:human}
\end{wrapfigure}

In order to learn from interventions, we need the intervening experts to convey useful information about the task through their decision about \emph{when} to intervene. Since real human experts are likely to be imperfect in making this decision, we study a variety of intervention strategies in simulation empirically to validate the stability of our method with respect to this assumption. The baseline strategy, which we call \textbf{Random Intervention}, simply intervenes uniformly at random, with equal probability at every step. The more intelligent model, \textbf{Value-Based Intervention}, strategy assumes the expert intervenes with probability $\beta$ when there is a gap $\delta$ between actions from the expert and agent w.r.t. a reference value function $Q^{\piref}$. This model aims to capture an uncertain and stochastic expert, who might have a particular policy $\piexp$ that they use to choose actions (which could be highly suboptimal), and a \emph{separate} value function $Q^{\piref}$ with its corresponding policy $\piref$ that they use to determine if the robot is doing well or not, which they use to determine when to intervene. Note that $\piref$ might be much better than $\piexp$ -- for example, a human expert might correctly determine the robot is choosing poor actions for the task, even if their own policy $\piexp$ is not good enough to perform the task either.
$\delta$ represents the confidence level of the intervening expert: the smaller $\delta$ is, the more willing they are to intervene on even slightly suboptimal robot actions.
We formalize this model in Eq.~\ref{eq:soft-rational-intervention}. In practice, we choose a value for $\beta$ close to 1, such as 0.95.
\begin{equation}
\label{eq:soft-rational-intervention}
\bb P(\textit{Intervention} | s) = 
  \begin{cases}
    \beta, & \text{if} \; Q^{\piref}(s, \piexp    (s)) > Q^{\piref}(s, \pi(s)) + \delta\\
    1 - \beta, & \text{otherwise}.
  \end{cases}
\end{equation}
This model may not be fully representative of real human behavior, so we also evaluate our method with real human interventions, where an expert user provides intervention feedback to a real-world robotic system performing a peg insertion task, shown in Figure~\ref{fig:human}. We discuss this further in Sec.~\ref{subsec:robot_task}.

In Appendix~\ref{sec:appendix_gridworld}, we also present a didactic experiment with the Grid World environment where we use value iteration as the underlying RL algorithm, to verify that RLIF does indeed converge to the optimal policy under idealized assumptions when we remove approximation and sampling error of the RL method from consideration.

\subsection{Performance on Continuous Control Benchmark Tasks}
First, we evaluate RLIF in comparison with various interactive imitation learning methods on several high-dimensional continuous control benchmark tasks. These experiments vary both the optimality of the expert's policies and the expert's intervention strategy.

\begin{wrapfigure}{r}{0.5\textwidth}
    \centering
     \includegraphics[width=0.5\textwidth]{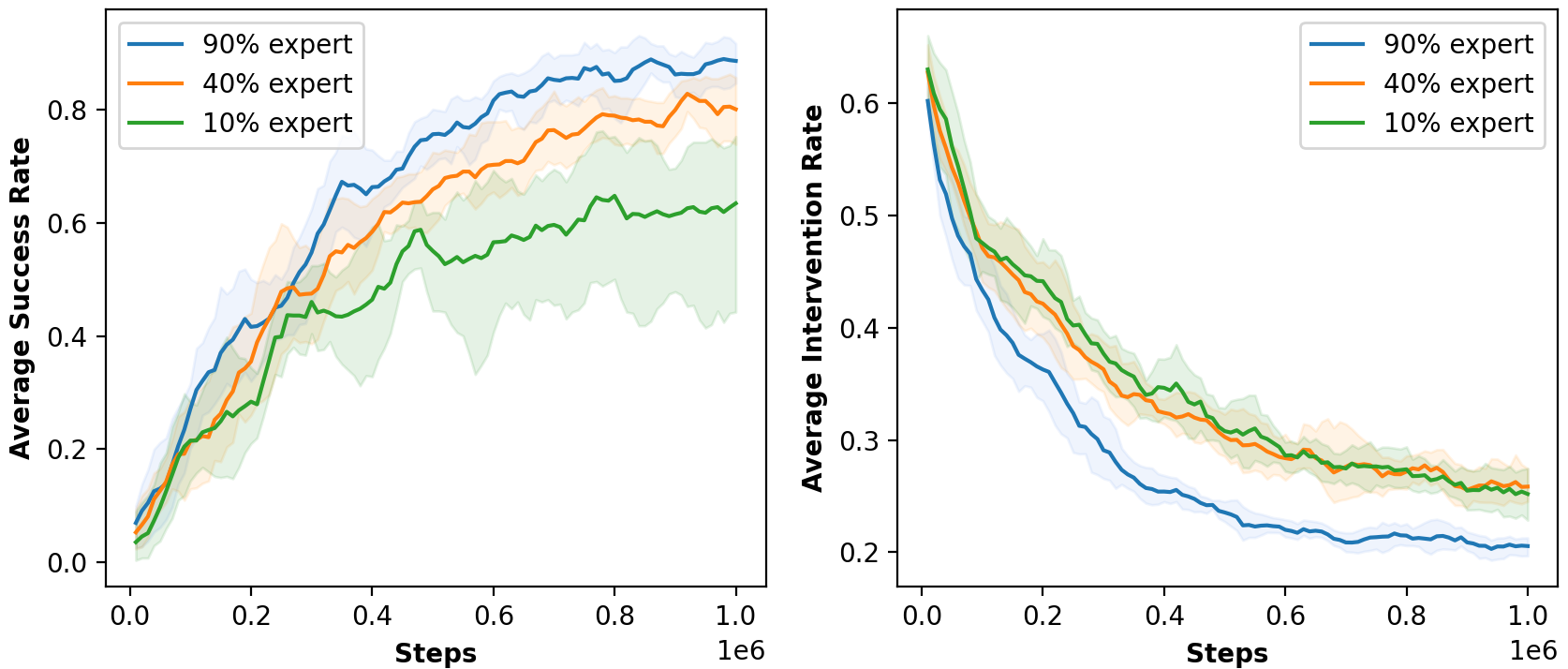}
    \caption{Average success rate and intervention rate for the Adroit-Pen task during training, as the agent improves, the intervention decreases.}
    \label{fig:rlif_train}
\end{wrapfigure}

\paragraph{Simulation experiment setup.} 
We use Gym locomotion and Adroit dexterous manipulation tasks in these experiments, based on the D4RL environments~\citep{fu2020d4rl}. The Adroit environments require controlling a 24-DoF robotic hand to perform tasks such as opening a door or rotating a pen to randomly sampled goal locations. The Gym locomotion task (walker) requires controlling a planar 2-legged robot to walk forward. Both domains require continuous control at relatively high frequencies, where approximation errors and distributional shift can accumulate quickly for na\"{i}ve imitation learning methods. 
Note that although these benchmarks are often used to evaluate the performance of RL algorithms, we do not assume access to any reward function beyond the signal obtained from the expert's interventions, and therefore our main point of comparison are DAgger variants, rather than prior RL algorithms. 

\begin{figure*}[t!]

    \centering
    \vspace{-1cm}
    \setlength{\tabcolsep}{6pt}
    \begin{tabular}[t]{ccccc}
     \includegraphics[width=.16\textwidth]{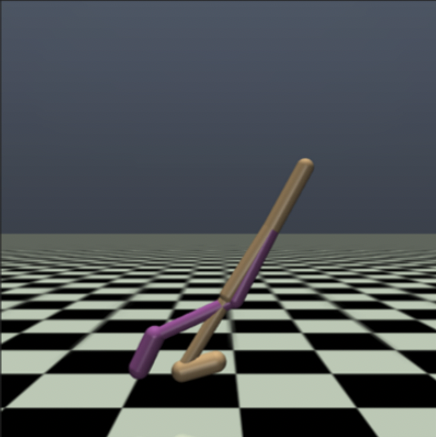} 
     & \includegraphics[width=.183\textwidth]{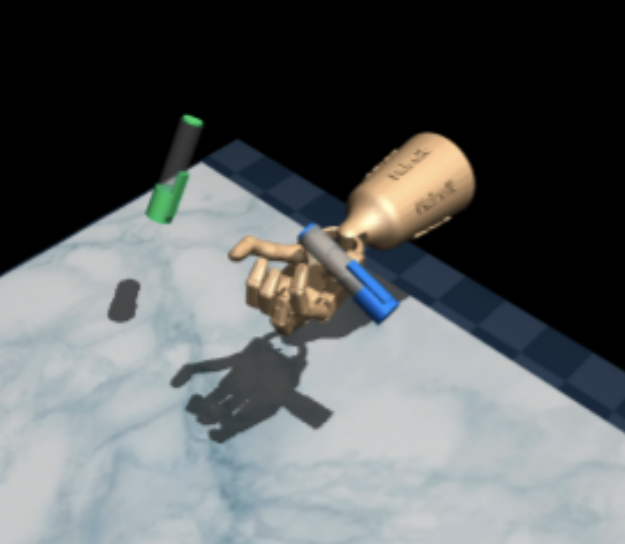}
     & \includegraphics[width=.16\textwidth]{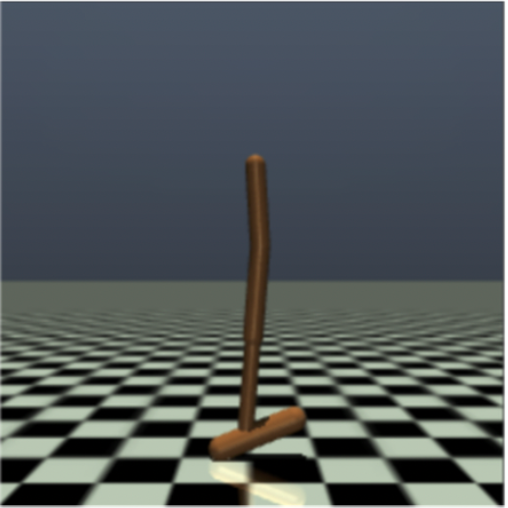}
     & \includegraphics[width=.1715\textwidth]{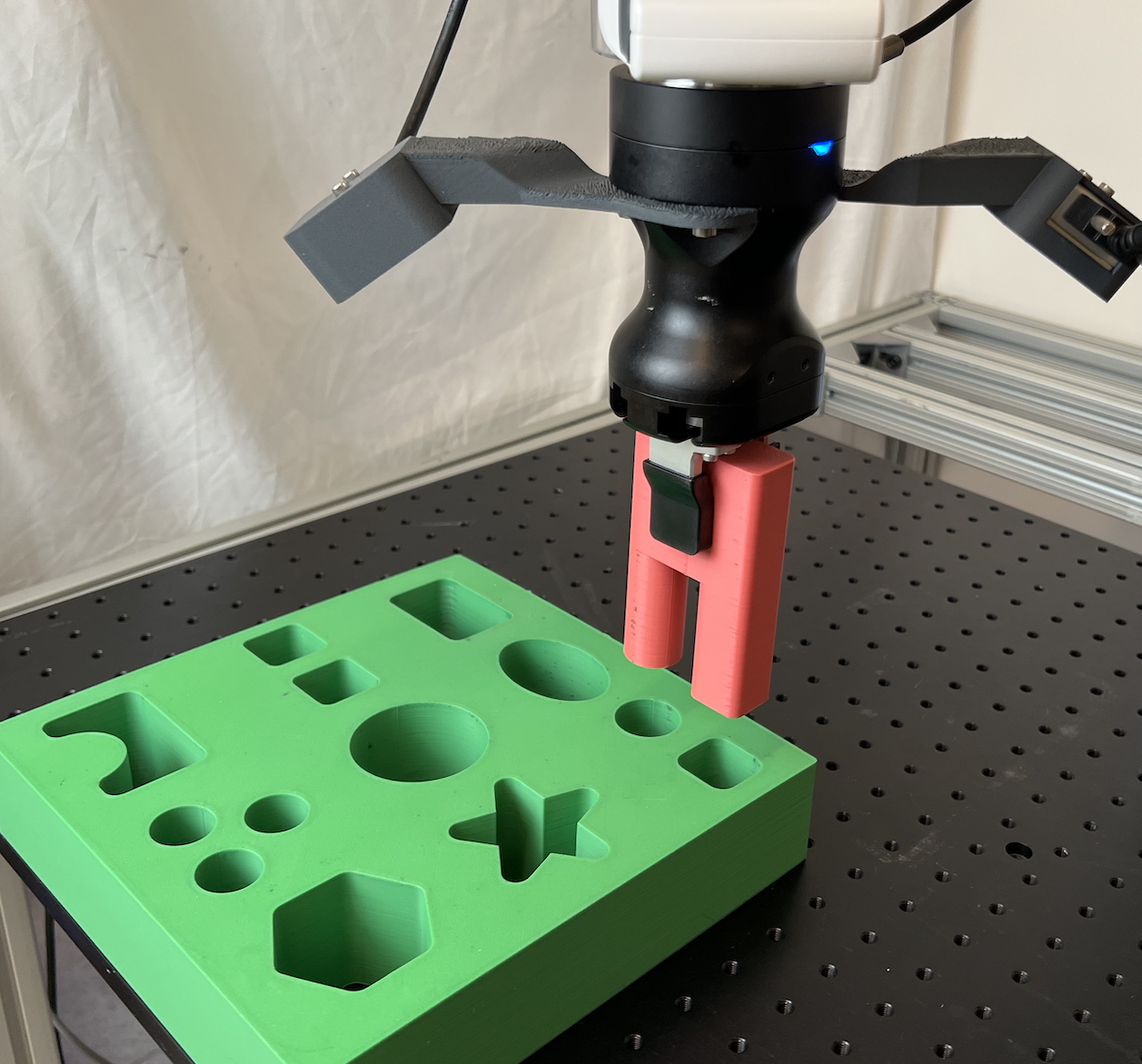}
     & \includegraphics[width=.173\textwidth]{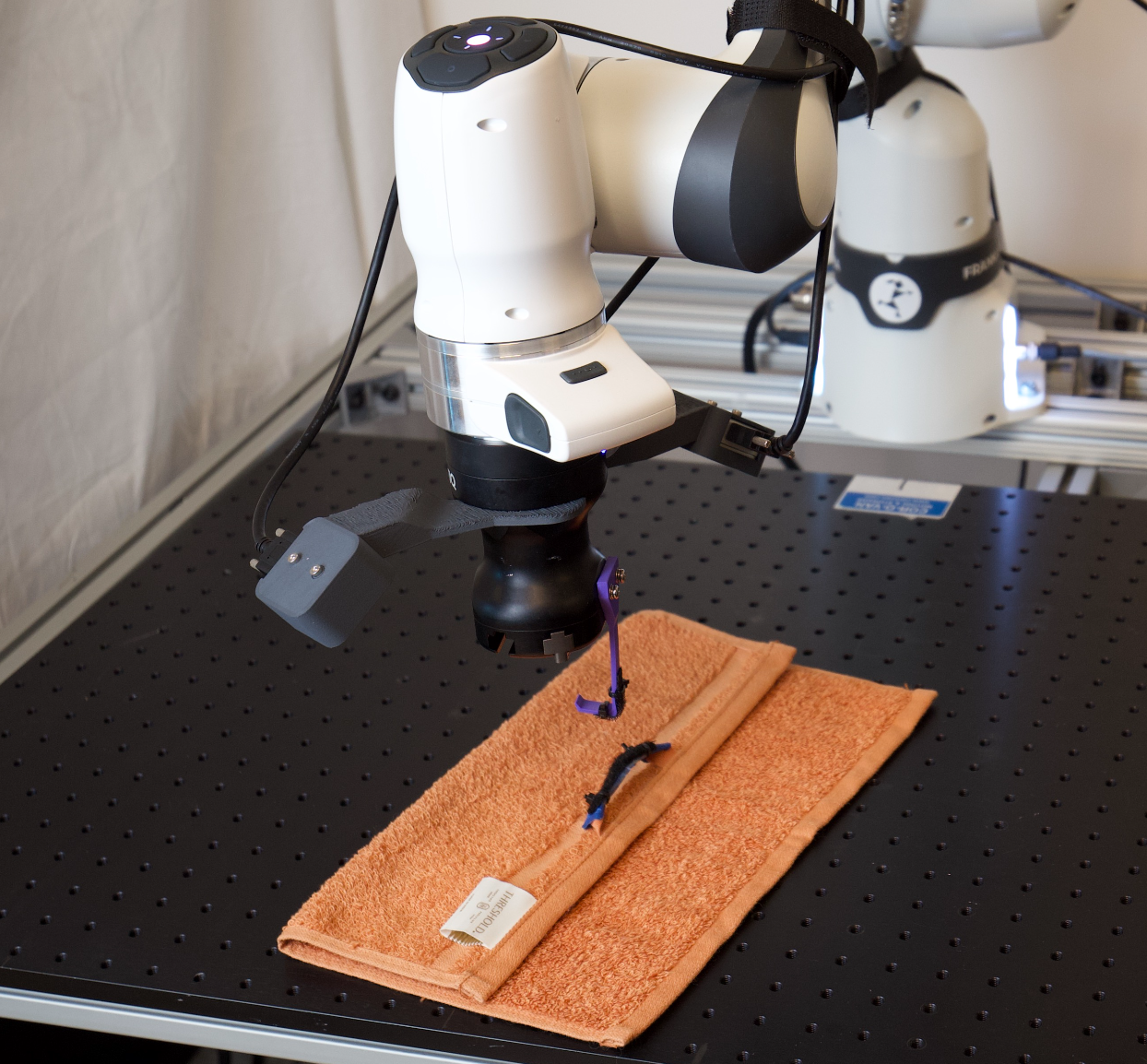}
     \\
    \end{tabular}

    \caption{\textbf{Tasks in our experimental evaluation}: Benchmark tasks Walker2d, Pen, and Hopper and two vision-based contact-rich manipulation tasks on a real robot. The benchmark tasks require handling complex high-dimensional dynamics and underactuation. The robotic insertion task requires additionally addressing complex inputs such as images, non-differentiable dynamics such as contact, and all sensor noise associated with real-world robotic settings.}
    \label{fig:d4rl}
    \vspace{-0.3cm}
\end{figure*}

\begin{figure*}[h!]

    \centering
    \vspace{-0.9cm}
    \setlength{\tabcolsep}{6pt}
    \begin{tabular}[t]{ccccc}
     \includegraphics[width=.17\textwidth]{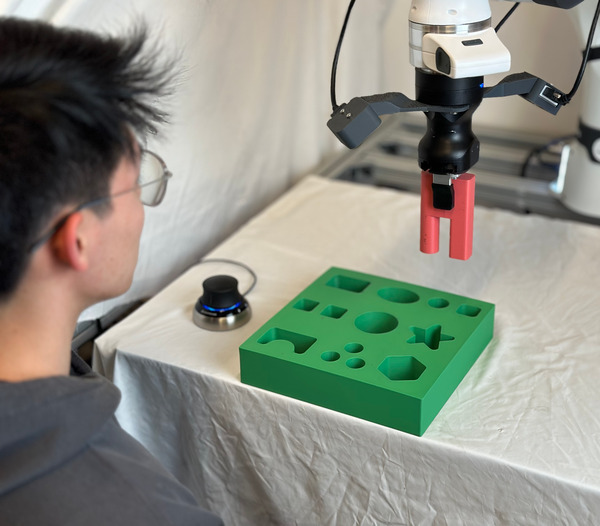} 
     & \includegraphics[width=.17\textwidth]{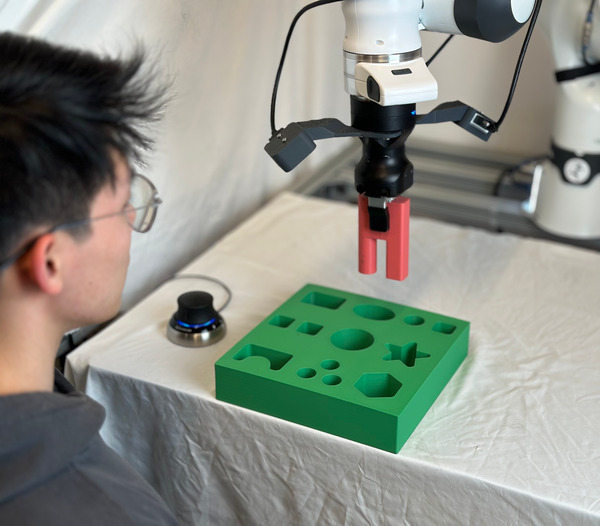}
     & \includegraphics[width=.17\textwidth]{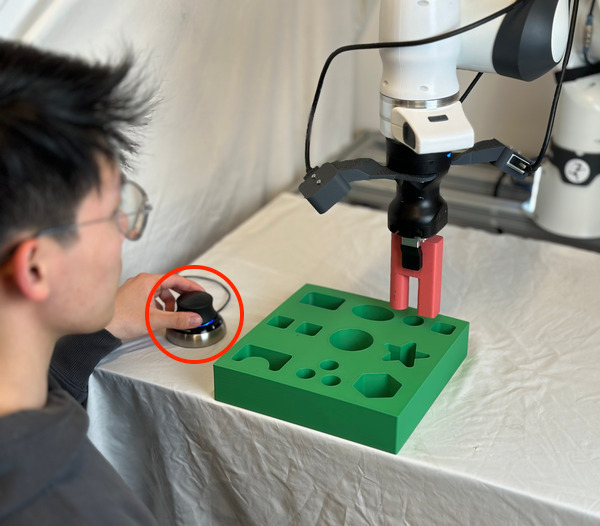}
     & \includegraphics[width=.17\textwidth]{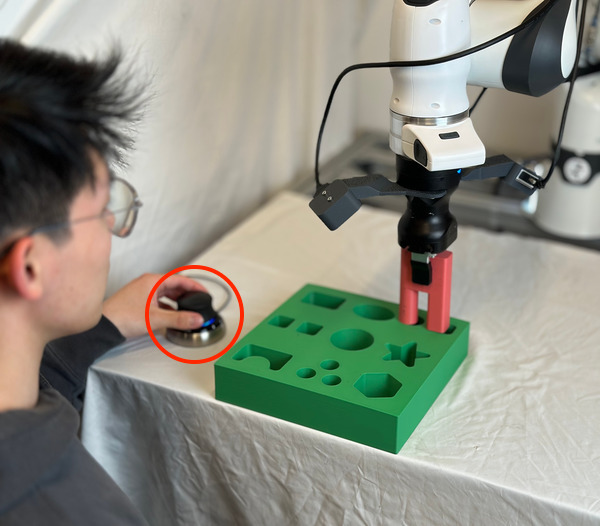}
     & \includegraphics[width=.17\textwidth]{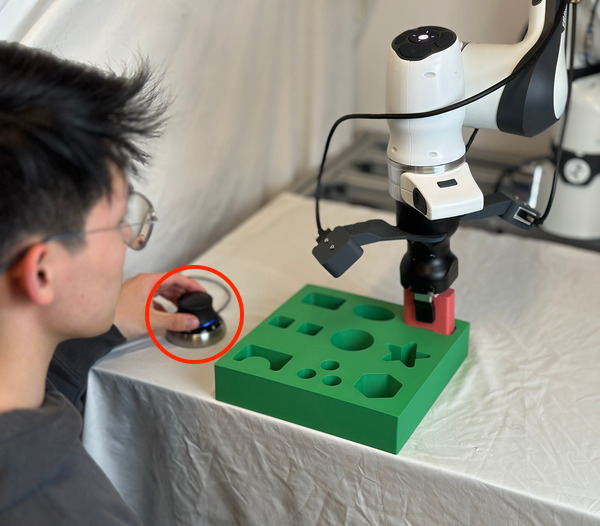}
     \\
     \includegraphics[width=.17\textwidth]{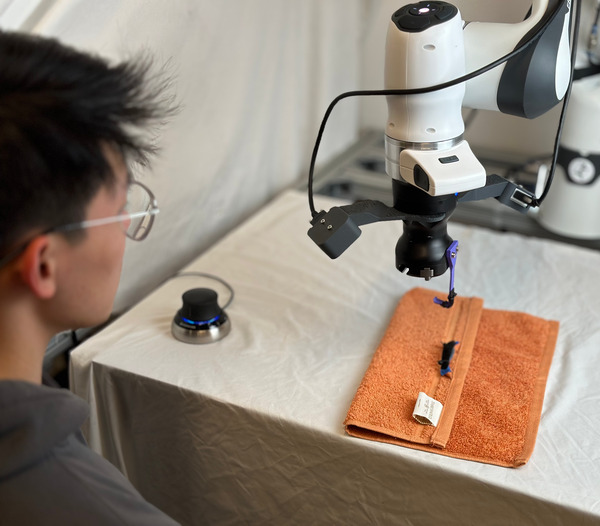} 
     & \includegraphics[width=.17\textwidth]{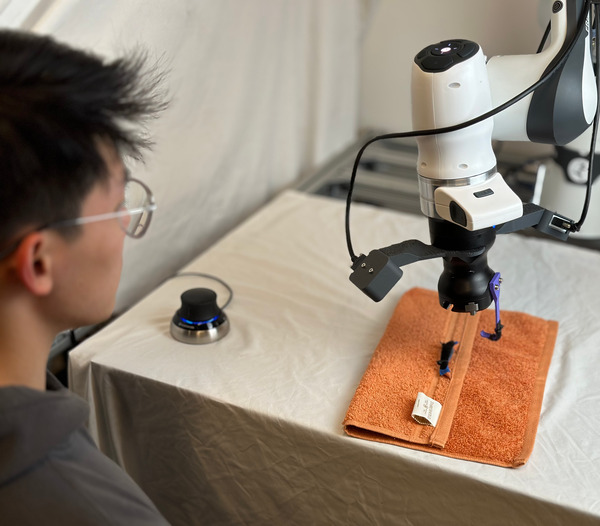}
     & \includegraphics[width=.17\textwidth]{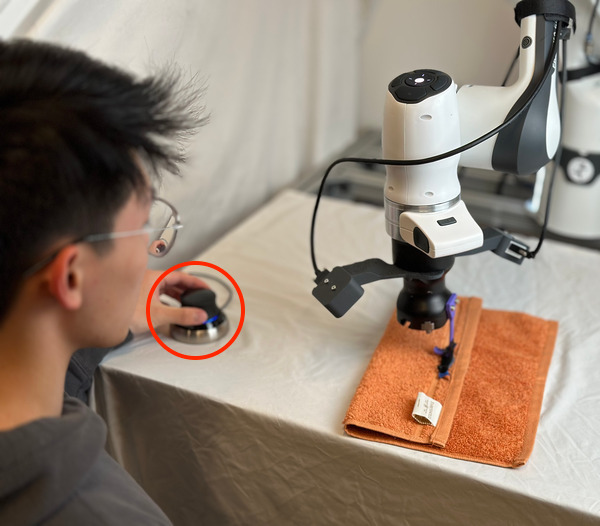}
     & \includegraphics[width=.17\textwidth]{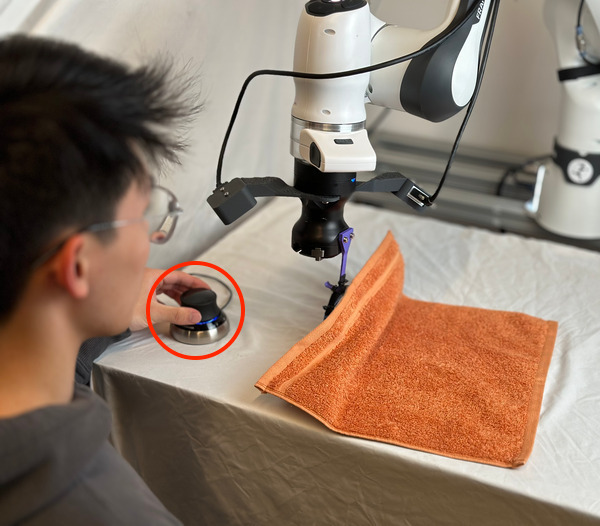}
     & \includegraphics[width=.17\textwidth]{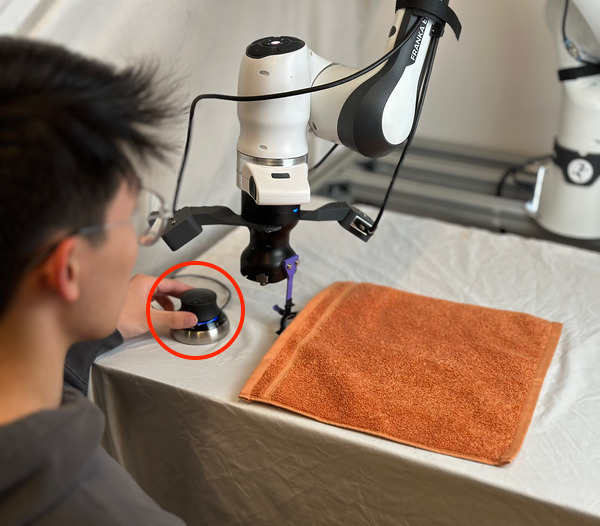}
     \\
    \end{tabular}

    \caption{Sequential steps of robot manipulation for the peg insertion and cloth unfolding tasks on a real robot.}
    \label{fig:filmstrip}
    \vspace{-0.0cm}
\end{figure*}

\begin{table*}[t]
\scriptsize{
\begin{center}
\scalebox{0.9}{
\begin{tabular}
{p{1.5cm} p{0.7cm}|| p{1.2cm} p{1.3cm}|p{1.3cm} p{1.3cm} p{1.0cm} p{1.4cm} |p{1.2cm}}

Domain & Expert Level & RLIF with Value Based Intervention & RLIF with Random Intervention & HG-DAgger & HG-DAgger with 85\% Random Intervention & DAgger & DAgger with 85\% Random Intervention & BC \\ \hline
\texttt{adroit-pen} &\texttt{$\sim$90\%}  & \textbf{88.47$\pm$3.06} &  42.87$\pm$12.86 & 73.47$\pm$6.19 & 74.27$\pm$5.79 & 78.13$\pm$3.24 & 79.07$\pm$9 &  \\
 &\texttt{$\sim$40\%}  & \textbf{80.87$\pm$6.01} & 34.13$\pm$10.32 &  60$\pm$3.58 & 29.33$\pm$6.56 & 35.73$\pm$7.49 & 38.67$\pm$2.06 & 54.13$\pm$14.24 \\
 & \texttt{$\sim$10\%} &  \textbf{64.04$\pm$17.59} & 28.33$\pm$4.43 & 28.53$\pm$7.66 &  9.47$\pm$4.09 & 8.93$\pm$1.43 & 12.8$\pm$6.25 & \\ \hline 
 & average & \textbf{77.79$\pm$8.89}  & 35.11$\pm$9.20 & 54$\pm$5.81 & 37.69$\pm$5.48 & 40.93$\pm$4.05 & 43.51$\pm$5.77 & 54.13$\pm$14.24 \\ \hline 
\texttt{locomotion-}  &\texttt{$\sim$110\%} & 108.99$\pm$5.28 & 106.51$\pm$0.47 & 53.55$\pm$9.76 & \textbf{112.7$\pm$2.51} & 57.94$\pm$8.69 & 76.13$\pm$3.27 &  \\
 \texttt{walker2d} &\texttt{$\sim$70\%}  & \textbf{99.66$\pm$5.9} & 75.62$\pm$50.02 &  44.75$\pm$3.46 & 69.73$\pm$5.99 & 20.49$\pm$3.15 & 43.59$\pm$2.56 & 44.46$\pm$13.59 \\
 & \texttt{$\sim$20\%} & \textbf{102.85$\pm$2.26} & 19.11$\pm$24.08 & 11.94$\pm$0.88  & 19.66$\pm$3.69 & 12.37$\pm$2.96 & 20.1$\pm$2.17 & \\ \hline 
 & average & \textbf{103.83$\pm$4.48} & 67.08$\pm$43.83 & 36.75$\pm$4.7 & 67.36$\pm$4.06 & 30.27$\pm$4.93 & 46.61$\pm$2.67 & 44.46$\pm$13.59 \\ \hline 
\texttt{locomotion-} & \texttt{$\sim$110\%}  & \textbf{109.17$\pm$0.16} &  93.76$\pm$7.87 & 80.3$\pm$14.74 & 86.93$\pm$4.85 & 70.58$\pm$9.98 & 61.64$\pm$11.36 &  \\
\texttt{hopper} &\texttt{$\sim$40\%} & \textbf{108.42$\pm$0.62} & 103.9$\pm$10.28 &  40.66$\pm$3.35 & 42.65$\pm$2.86 & 38.7$\pm$3.7 & 19.63$\pm$2.3 & 64.77$\pm$10.23 \\
 & \texttt{$\sim$15\%}  & \textbf{108.01$\pm$0.64} & 75.12$\pm$28.95  &  25.2$\pm$3.58 & 24.37$\pm$2.26 & 19.54$\pm$2.14 & 10.29$\pm$1.24 & \\ \hline 
& average & \textbf{108.53$\pm$0.47} & 90.93$\pm$15.7 &  48.72$\pm$7.22 & 51.32$\pm$3.32 & 42.94$\pm$5.27 & 30.46$\pm$4.97 & 64.77$\pm$10.23 \\ 
\end{tabular}
}
\vspace{-0.2cm}
\caption{A comparison of RLIF, HG-DAgger, DAgger, and BC on continuous control tasks. RLIF consistently performs better than HG-DAgger and DAgger baselines for each individual expert level as well as averaged over all expert levels.} 
\label{tab:performance}
\vspace{-0.6cm}
\end{center}
}
\end{table*}

The experts and $Q^{\piref}$ associated with intervention strategies are obtained by training policies on subsampled datasets to induce the desired level of suboptimality. Further details on initial policy and intervention expert training can be found in  Appendix.~\ref{subsec:appen_exp_setup}. 




\paragraph{Results and discussion.} We report our main results in Table.~\ref{tab:performance}, with additional learning curves for analyzing learning progress in terms of the number of iterations presented in Appendix.~\ref{subsec:appen_exp_plots} and an IQL baseline in Appendix.~\ref{subsec:baseline}. 
Each table depicts different expert performance levels on different tasks (rows), and different methods with different intervention strategies (columns). Note the ``expert level'' in the table indicates the suboptimality of the expert; for example, a ``90\%'' expert means it can achieve 90\% of the reference optimal expert score of a particular task. This score is normalized across environments to be 100.0. Since we trained such experts using our curated datasets, the normalized score can exceed 100.0. For the rest of the table, we report the performance of different methods w.r.t. this normalized reference score.
\begin{wrapfigure}{r}{0.65\textwidth}
    \centering
     \includegraphics[width=0.65\textwidth]{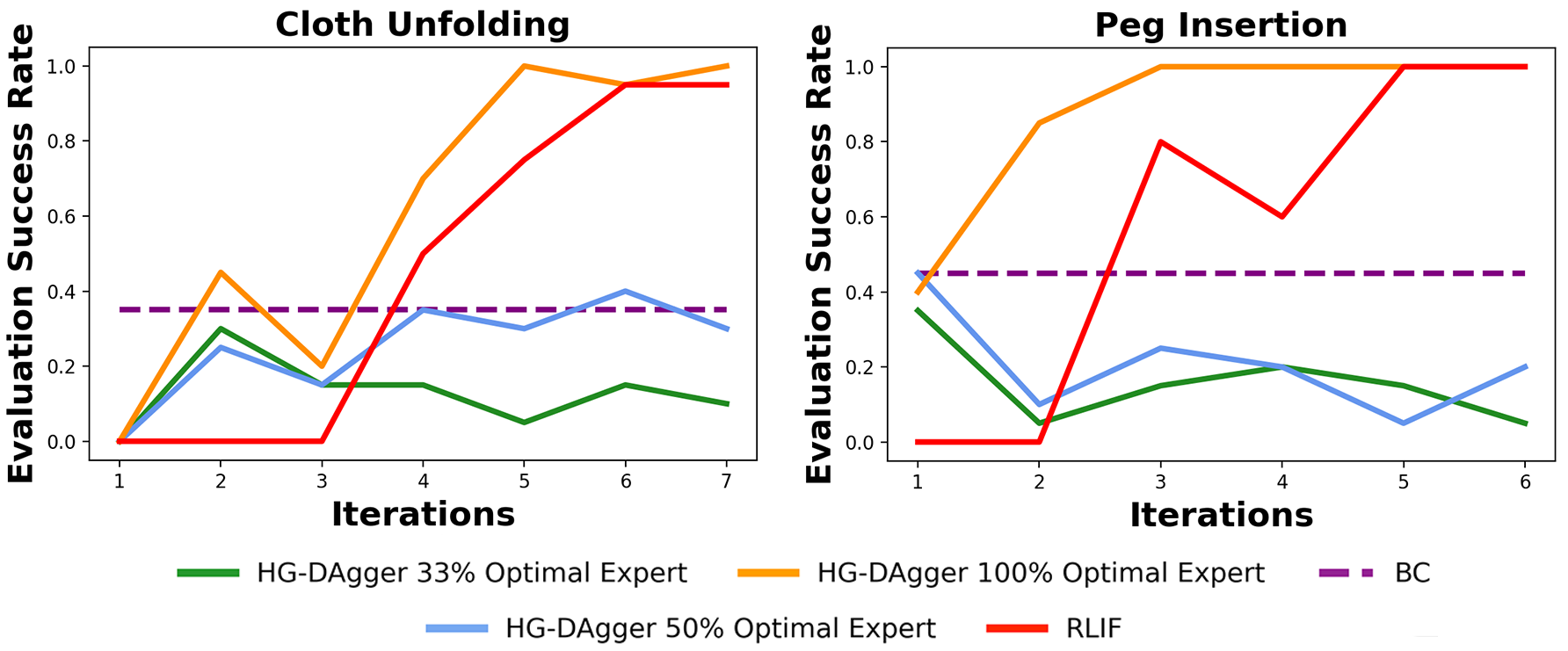}
    \caption{RLIF on the real-world robotic manipulation task.}
    \vspace{-0.5em}
    \label{fig:robot}
\end{wrapfigure}
To start the learning process, we initialize the replay buffer with a small number of samples. The details of the datasets can be found in Appendix. ~\ref{sec:appendix_exp}. We compare RLIF with DAgger and HG-DAgger under different intervention modes and expert levels. Specifically, we run DAgger under two intervention modes:
1) the expert issues an intervention if the difference between the agent's action and the expert's action is larger than a threshold, as used in~\cite{Kelly2018HGDAgger, ross2011reduction}, 2) the expert issues random intervention uniformly with a given probability at each step of the episode.  
The results suggest several takeaways: (1) We see that regardless of the suboptimality of the experts, RLIF with value-based interventions can indeed reach good performance, even if the suboptimality gap is \emph{very large}, such as improving a 15\% expert to a score over 100.0 in the 2D-walker task. (2) RLIF with a value-based simulated interventions outperforms RLIF with random interventions, especially when the suboptimality is large. This confirms that the interventions carry a meaningful signal about the task, providing implicit rewards to RLIF. We also observe that the value-based intervention rate goes down as the agent improves, as shown in Fig.~\ref{fig:rlif_train}. This confirms our proposed intervention model does carry useful information about the task, and works reasonably as the agent learns.
(3) RLIF outperforms both DAgger and HG-DAgger consistently across all tasks, and the performance gap is more pronounced when the expert is \emph{more suboptimal}, and can be as large as \textbf{5x}; crucially, this resonates with our motivation: the performance of DAgger-like algorithms will be subject to the suboptimality of the experts, while our method can reach good performance even with suboptimal experts by learning from the expert's decision of \emph{when} to intervene. 


\paragraph{Ablations on Intervention Modes.} As stated in Sec.~\ref{subsec:intervention_mode}, the performance of RLIF critically depends on when the expert chooses to intervene. Now we analyze the effect of different $Q^{\piref}$ value functions on the final performance of RLIF.  We report the numbers in Table.~\ref{tab:ablation-performance} of Appendix.~\ref{subsec:ablation}.
We can observe that the particular choice of $Q^{\piref}$ does heavily influence our algorithm's performance: as the $\piref$ deteriorates, $Q^{\piref}$ becomes increasingly inaccurate, making the intervention decision more ``uncalibrated.'' This translates to worse policy performance.

\subsection{Real-World Vision-Based Robotic Manipulation Task}\label{subsec:robot_task}

While we have shown that RLIF works well under reasonable models of the expert's intervention strategy, to confirm that this model actually reflects real human interventions, we next conduct an experiment where a human operator supplies the interventions directly for a real-world robotic manipulation task. The first task, shown in Fig.~\ref{fig:human}, involves controlling a 7-DoF robot arm to fit a peg into its matching shape with a very tight tolerance (1.5mm), directly from image observations. This task is difficult because it involves dealing with discontinuous and non-differentiable contact dynamics, complex high-dimensional inputs from the camera, and an imperfect human operator. The second task involves controlling the same robot arm to unfold a piece of cloth by hooking onto it and pulling it open. This task is difficult because it requires manipulation of a deformable object: specifying rewards programmatically for such a task is very challenging, and the perception system trained via RL must be able to keep track of the complex object geometry. Filmstrips of the tasks are shown in Fig.~\ref{fig:filmstrip} to visualize the sequential steps of both the peg insertion and cloth unfolding tasks. More details on the setup can be found in Appendix.~\ref{subsec:appen_robot}.
We report the results in Fig.~\ref{fig:robot}. Our method can solve the insertion task with a 100\% success rate within six rounds of interactions, which corresponds to 20 minutes in wall-clock time, including robot resets, computation, and all intended stops. It solves the unfolding task with a 95\% success rate in seven rounds of interaction.
This highlights the practical usability of our method in challenging real-world robotic tasks. We refer the training and evaluation videos of the robotic tasks to our website: \url{rlifpaper.github.io}
\section{Theoretical Analysis}\label{sec:theory}
In this section, we provide a theoretical analysis of \ours{} in terms of its suboptimality gap. We introduce our theoretical settings and goals in Sec.~\ref{subsec:assumptions}, and quantify the suboptimality gap in Sec.~\ref{subsec:main-results}.

\subsection{Theoretical Settings and Assumptions}
\label{subsec:assumptions}
We consider the setting where we cannot access to the true task reward $r$, and we can only obtain rewards $\rtilde$ through interventions. For simplicity of analysis, we define this reward function as $\rtilde=\indicator{\text{Not intervened}}$ (i.e., 1 for each step when an intervention does \emph{not} happen), which differs from the reward in Algorithm~\ref{alg:rlif} by a constant and thus does not change the result, but allows us to keep rewards in the range $[0,1]$. We aim to apply RL using the intervention reward $\rtilde$ induced by \ours{}, and quantify the suboptimality gap with respect to the true reward $r$. In particular, let $\pitilde$ denote the policy learned by \ours{} for maximizing the overall return on reward function $\rtilde$: 
\begin{equation}
    \label{eq:pitilde-property}
    \pitilde\in\Pioptdelta, \text{ s.t. }  \Pioptdelta: = \arg\max_{\pi\in\Pi} \bb E_{a_t\sim \pi(s_t),\rtilde(s_t,\pi(s_t))}\brac{\sum_{t=0}^\infty\gamma^t \rtilde(s_t,a_t)|s_0\sim\mu}.    
\end{equation}
Our goal is to bound the the suboptimality gap: $\subopt:=V^{\star}(\mu) - V^{\pitilde}(\mu)$. Based on the intervention strategy specified in Eqn.~\ref{eq:soft-rational-intervention}, we introduce our definitions of $\rtilde$ as follows. 
We leave the discussion of the property of the function space $\Pioptdelta$ to Appendix~\ref{sec:appendix:piopt-delta-property}.

\begin{assumption}
\label{assump:rlif-reward-def}
For a policy $\pi\in \Pi$, the reward function $\rtilde$ induced by Alg.~\ref{alg:rlif} is defined as $\rtilde=\indicator{\text{Not intervened}}$. The probability of intervention is similarly defined as Eqn.~\ref{eq:soft-rational-intervention}, for a constant $\beta>0.5$. We assume the expert may refer to a reference policy $\piref\in \Pioptdelta$ for determining intervention strategies as mentioned in Sec.~\ref{subsec:intervention_mode}. In addition, we assume the expert's own policy will not be intervened $\piexp\in \Pioptdelta$.\end{assumption}
The $\delta$ that appears in $\rtilde$ for determining the intervention condition is similarly defined in Eqn.~\ref{eq:soft-rational-intervention} could be referred as the ``confidence level'' of the expert.
Note that Assumption~\ref{assump:rlif-reward-def} is a generalized version of the DAgger, as in the DAgger case one can treat $\piref=\piexp$.

\subsection{Main Result}
\label{subsec:main-results}


To provide a comparable suboptimality gap with DAgger~\citep{ross2011reduction}, we introduce the similar behavioral cloning loss on $\piexp$.
\begin{definition}[Behavior Cloning Loss~\citep{ross2011reduction}]
\label{def:BC-loss}
We use the following notation to denote the 0-1 loss on behavior cloning loss w.r.t. $\piexp$ and $\piref$ as: $\ell(s,\pi(s))=\indicator{\pi(s)\neq\piexp(s)},\forall s\in \mc S$, $\ell'(s,\pi(s))=\indicator{\pi(s)\neq\piref(s)},\forall s\in \mc S$.
\end{definition}
With Assumption~\ref{assump:rlif-reward-def}, and Def.~\ref{def:BC-loss}, we now introduce the suboptimality gap of \ours{}.

\begin{theorem}[Suboptimality Gap of \ours{}]
    \label{thm:subopt-gap-main}
    Let $\pitilde\in\Pioptdelta$ denote an optimal policy from maximizing the reward function $\rtilde$ generated by \ours{}. Let ${\eps = \max\Brac{\bb E_{s\sim d_\mu^{\pitilde}}\ell(s,\pi(s)),\bb E_{s\sim d_\mu^{\pitilde}}\ell'(s,\pi(s))}}$ (Def.~\ref{def:BC-loss}). Under Assumption~\ref{assump:rlif-reward-def}, when $V^{\piref}$ is known, the \ours{} suboptimality gap satisfies:
    \begin{equation*}
        \subopt_\mr{\ours{}} = V^{\star}(\mu) - V^{\pitilde}(\mu) \leq\min\Brac{V^{\pistar}(\mu) - V^{\piref}(\mu),V^{\pistar}(\mu) - V^{\piexp}(\mu)}+\frac{\delta\eps}{1-\gamma}.
    \end{equation*}
\end{theorem}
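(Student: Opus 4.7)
The plan is to bound the suboptimality by a two-step decomposition: insert the value of either the reference or the expert policy as a pivot, and then control the remainder via the performance difference lemma (PDL) together with the intervention structure from Assumption~\ref{assump:rlif-reward-def}. Concretely, for each choice $\pi \in \{\piref, \piexp\}$, I would write
\begin{equation*}
V^\star(\mu) - V^{\pitilde}(\mu) = \bigl(V^\star(\mu) - V^{\pi}(\mu)\bigr) + \bigl(V^{\pi}(\mu) - V^{\pitilde}(\mu)\bigr),
\end{equation*}
so that the first summand is the intrinsic suboptimality of the reference/expert relative to $\pistar$; taking the minimum over the two valid choices of $\pi$ recovers the first term of the claimed bound.

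Next I would control the residual $V^{\pi}(\mu) - V^{\pitilde}(\mu)$ by applying the PDL against $\pitilde$'s occupancy:
\begin{equation*}
V^{\pi}(\mu) - V^{\pitilde}(\mu) = \frac{1}{1-\gamma}\,\E_{s\sim d^{\pitilde}_\mu}\bigl[Q^{\pi}(s,\pi(s)) - Q^{\pi}(s,\pitilde(s))\bigr],
\end{equation*}
and splitting the expectation according to whether $\pitilde(s)$ agrees with $\pi(s)$. On agreement the integrand vanishes, and Definition~\ref{def:BC-loss} bounds the disagreement mass by $\eps$. What is left is to pin the pointwise advantage at $\delta$ on disagreement states, which combined with the $\eps$ disagreement probability produces the desired $\delta\eps/(1-\gamma)$ factor.

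To produce that pointwise $\delta$-bound I would exploit $\pitilde \in \Pioptdelta$. Since $\piexp \in \Pioptdelta$ attains the optimal $\rtilde$-return (no interventions in expectation, by Assumption~\ref{assump:rlif-reward-def} with $\beta > 1/2$) and $\pitilde$ matches this return, the intervention trigger $Q^{\piref}(s,\piexp(s)) > Q^{\piref}(s,\pitilde(s)) + \delta$ must fail for $d^{\pitilde}_\mu$-almost-every $s$, giving $Q^{\piref}(s,\pitilde(s)) \geq Q^{\piref}(s,\piexp(s)) - \delta$. Invoking the analogous failure for $\piref \in \Pioptdelta$ on its own trajectory, together with the hypothesis that $V^{\piref}$ is known (so the intervener's test aligns with one-step $Q^{\piref}$-improvement, yielding $Q^{\piref}(s,\piexp(s)) \geq V^{\piref}(s) - O(\delta)$), the gap $Q^{\piref}(s,\piref(s)) - Q^{\piref}(s,\pitilde(s))$ collapses to $\delta$. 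Re-running the argument with $\pi = \piexp$ as the pivot and taking the minimum over the two branches completes the proof.

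The main obstacle is the translation step in the last paragraph: the intervention condition is phrased in terms of $Q^{\piref}(\cdot,\piexp)$, but the PDL residual involves $Q^{\pi}(\cdot,\pi(\cdot))$ for whichever pivot we choose; when the pivot is $\piexp$ one must compare $Q^{\piexp}$ against $Q^{\piref}$, which seems to require a simulation-lemma-style step. The ``$V^{\piref}$ known'' qualifier in the hypothesis is precisely what makes this bridge clean, by aligning the intervener's test policy with the pivot's own value function.
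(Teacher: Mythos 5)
You take essentially the same route as the paper: pivot the suboptimality through $\piref$ or $\piexp$, control the residual $V^{\pi}(\mu)-V^{\pitilde}(\mu)$ by an occupancy-weighted advantage over $d^{\pitilde}_\mu$ (your performance-difference lemma is exactly the telescoping recursion the paper writes out term by term, each level contributing $\gamma^t\delta\eps$), and obtain the pointwise $\delta$ bound from the observation that any maximizer of the $\rtilde$-return must avoid the intervention event almost everywhere because $\beta>1/2$. The only substantive divergence is the bridge you yourself flag as the main obstacle, and your proposed fix does not close it: the literal test in Eqn.~\ref{eq:soft-rational-intervention} only gives $Q^{\piref}(s,\piexp(s))\leq Q^{\piref}(s,\pitilde(s))+\delta$, while the two pivots need $Q^{\piref}(s,\piref(s))-Q^{\piref}(s,\pitilde(s))\leq\delta$ and $Q^{\piexp}(s,\piexp(s))-Q^{\piexp}(s,\pitilde(s))\leq\delta$, and the step ``$V^{\piref}$ known $\Rightarrow Q^{\piref}(s,\piexp(s))\geq V^{\piref}(s)-O(\delta)$'' is not derived from any stated assumption (and would degrade the constant from $\delta$ to a multiple of $\delta$ even if it were). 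The paper does not perform this translation either: its proof simply \emph{defines} the intervention event $\mc E$ as the disjunction of $Q^{\piref}(s,\piref(s))>Q^{\piref}(s,\pi(s))+\delta$ and $Q^{\piexp}(s,\piexp(s))>Q^{\piexp}(s,\pi(s))+\delta$, so that non-intervention hands you both required inequalities directly. Under that reading of Assumption~\ref{assump:rlif-reward-def} your argument goes through verbatim and no simulation-lemma step is required; under the literal Eqn.~\ref{eq:soft-rational-intervention} the gap you identified is real, but it is a gap in the paper's own setup rather than a defect unique to your proof.
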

Note that our Assumption~\ref{assump:rlif-reward-def} is a generalized version of DAgger with the additional $V^{\piref}$, we can use the same analysis framework of Thm.~\ref{thm:subopt-gap-main} to obtain a suboptimality gap of DAgger and obtain the following suboptimality gap. 
Using a similar proof strategy to Thm.~\ref{thm:subopt-gap-main}, we provide a suboptimality gap for DAgger as follows.

\begin{corollary}[Suboptimality Gap of DAgger]    
    \label{cor:subopt-gap-dagger-main}
     Let $\pitilde\in\Pioptdelta$ denote an optimal policy from maximizing the reward function $\rtilde$ generated by \ours{}. Let $\eps = \bb E_{s\sim d_\mu^{\pitilde}}\ell(s,\pi(s))$ (Def.~\ref{def:BC-loss}). Under Assumption~\ref{assump:rlif-reward-def}, when $V^{\piref}$ is unknown, the DAgger suboptimality gap satisfies:
    \begin{equation}
        \subopt_\mr{DAgger} = V^{\star}(\mu) - V^{\pitilde}(\mu) \leq V^{\pistar}(\mu) - V^{\piexp}(\mu)+ \frac{\delta\eps}{1-\gamma}.
    \end{equation}
\end{corollary}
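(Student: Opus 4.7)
The natural plan is to reduce Corollary 5.2 to Theorem 5.1, since the DAgger setting is exactly the case $\piref=\piexp$ of the generalized intervention model in Assumption~\ref{assump:rlif-reward-def}: when no separate reference value function is available, the expert must use its own policy both to decide when to intervene and to act after intervening. Substituting $\piref=\piexp$ into the bound of Theorem~\ref{thm:subopt-gap-main} collapses the $\min\{V^{\pistar}(\mu)-V^{\piref}(\mu),\,V^{\pistar}(\mu)-V^{\piexp}(\mu)\}$ term to the single quantity $V^{\pistar}(\mu)-V^{\piexp}(\mu)$, which gives precisely the stated inequality; the loss $\ell'$ coincides with $\ell$ in this case so taking the max in the definition of $\eps$ reduces to the single expected 0--1 loss used in Corollary~\ref{cor:subopt-gap-dagger-main}.

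For a self-contained argument, I would run the same decomposition as in Theorem~\ref{thm:subopt-gap-main}. First I would write $V^{\pistar}(\mu)-V^{\pitilde}(\mu) = \bigl(V^{\pistar}(\mu)-V^{\piexp}(\mu)\bigr) + \bigl(V^{\piexp}(\mu)-V^{\pitilde}(\mu)\bigr)$, isolating a ``suboptimality-of-the-expert'' term and a ``learner-vs.-expert'' term. Then I would bound the second term by a performance-difference-lemma computation over $d^{\pitilde}_\mu$: on states where $\pitilde$ and $\piexp$ agree the instantaneous advantage contribution vanishes, while on the remaining states (which by definition of $\ell$ have measure at most $\eps$ under $d^{\pitilde}_\mu$) the intervention rule guarantees that the true-reward advantage of $\piexp$ over $\pitilde$ is bounded by $\delta$, since otherwise an intervention would have occurred with probability $\beta$ and $\pitilde$ would not be in $\Pioptdelta$. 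Discounting by $1/(1-\gamma)$ then yields $V^{\piexp}(\mu)-V^{\pitilde}(\mu)\leq \delta\eps/(1-\gamma)$.

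The main obstacle, and the place the argument really differs from classical DAgger analysis~\citep{ross2011reduction}, is converting the statement ``$\pitilde$ maximizes $\rtilde$'' into a pointwise advantage bound with respect to the \emph{true} reward $r$. The intervention rule in Eqn.~\ref{eq:soft-rational-intervention} is phrased in terms of $Q^{\piref}$ with $\piref=\piexp$, so one needs to argue that any state at which $\pitilde$ disagrees with $\piexp$ must satisfy $Q^{\piexp}(s,\piexp(s))-Q^{\piexp}(s,\pitilde(s))\leq \delta$; the subtlety is that $\pitilde$ is optimal for $\rtilde$ (intervention frequency), not for $r$, and one must use the contrapositive of the definition of $\Pioptdelta$ together with $\beta>1/2$ to rule out large-advantage disagreements. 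Once this pointwise bound is established, a standard telescoping / occupancy-measure argument gives the $\delta\eps/(1-\gamma)$ factor, and adding back the expert gap $V^{\pistar}(\mu)-V^{\piexp}(\mu)$ completes the proof.
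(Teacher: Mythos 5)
Your proposal is correct and follows essentially the same route as the paper: the paper likewise treats DAgger as the $\piref=\piexp$ special case of Assumption~\ref{assump:rlif-reward-def}, restricts the intervention event to the single condition $Q^{\piexp}(s,\piexp(s))>Q^{\piexp}(s,\pi(s))+\delta$, uses $\beta>1/2$ to conclude that any $\pitilde\in\Pioptdelta$ must satisfy $Q^{\piexp}(s,\piexp(s))\leq Q^{\piexp}(s,\pitilde(s))+\delta$, and then decomposes $V^{\star}(\mu)-V^{\pitilde}(\mu)$ into the expert gap plus $\abs{V^{\piexp}(\mu)-V^{\pitilde}(\mu)}\leq\delta\eps/(1-\gamma)$. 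Your performance-difference-lemma phrasing of the last step is just a cleaner packaging of the paper's telescoping recursion, so there is no substantive difference.
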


A direct implication of Thm.~\ref{thm:subopt-gap-main} and Cor.~\ref{cor:subopt-gap-dagger-main} is that, under Assumption~\ref{assump:rlif-reward-def}, \ours{} is at least as good as DAgger, since $\subopt_\mr{\ours{}}\leq\subopt_\mr{DAgger}$.



We provide a bandit example in Appendix.~\ref{subsec:appendix:tight-subopt-gap} to show that our upper bounds in Thm.~\ref{thm:subopt-gap-main} and Cor.~\ref{cor:subopt-gap-dagger-main} are tight. We leave the proof of Thm.~\ref{thm:subopt-gap-main} and Cor.~\ref{cor:subopt-gap-dagger-main} analysis under our framework to Appendix~\ref{sec:appendix:subopt-gap-inf-sample}. For completeness, we also provide a non-asymptotic sample complexity analysis for learning an $\pihat$ that maximizes $V^\pi_{\rtilde}(\mu)$ in Appendix~\ref{sec:appendix:non-asy-bound}. Our non-asymptotic result in Cor.~\ref{coro:sample-complexity-offline} suggests that the total sample complexity {\em does not exceed} the sample complexity $\wt{O}\paren{\frac{SC^\star_\mr{exp}}{(1-\gamma)^3\eps^2}}$, where $C^\star_\mr{exp}$ is the concentrability coefficient w.r.t. $\piexp$ (formally defined in Def.~\ref{def:concen-coeff}).

\section{Discussion and Limitations}\label{sec:discussion}
We presented a reinforcement learning method that learns from interventions in a setting that closely resembles interactive imitation learning. Unlike conventional imitation learning methods, our approach does not rely strongly on access to an optimal expert, and unlike conventional reinforcement learning algorithms, it does not require access to the ground truth reward function, instead deriving a reward signal from the expert's decision about when to intervene. 
However, our approach does have a number of limitations. First, we require an RL method that can actually train on all of the data collected by both the policy and the intervening expert. This is not necessarily an easy RL problem, as it combines off-policy and on-policy data. That said, we were able to use an off-the-shelf offline RL algorithm (RLPD) without modification. Second, imitation learning is often preferred precisely because it doesn't require online deployment at all, and this benefit is somewhat undermined by interactive imitation learning methods. While in practice deploying a policy under expert oversight might still be safer (e.g., with a safety driver in the case of autonomous driving), investigating the safety challenges with such online exploration is an important direction for future work. 
\section{Acknowledgements}
This research was partly supported by ARO W911NF-21-1-0097, and ONR N00014-21-1-2838 and N00014-20-1-2383, the Berkeley Research Computing Platform (BRC), NSF Cloud Bank program.



\bibliographystyle{iclr2024_conference}
\bibliography{references}

\begin{thebibliography}{48}
\providecommand{\natexlab}[1]{#1}
\providecommand{\url}[1]{\texttt{#1}}
\expandafter\ifx\csname urlstyle\endcsname\relax
  \providecommand{\doi}[1]{doi: #1}\else
  \providecommand{\doi}{doi: \begingroup \urlstyle{rm}\Url}\fi

\bibitem[Ainsworth et~al.(2019)Ainsworth, Barnes, and
  Srinivasa]{ainsworth2019mo}
Samuel Ainsworth, Matt Barnes, and Siddhartha Srinivasa.
\newblock Mo'states mo'problems: Emergency stop mechanisms from observation.
\newblock \emph{Advances in Neural Information Processing Systems}, 32, 2019.

\bibitem[Argall et~al.(2009)Argall, Chernova, Veloso, and Browning]{ilsurvey}
Brenna~D. Argall, Sonia Chernova, Manuela Veloso, and Brett Browning.
\newblock A survey of robot learning from demonstration.
\newblock \emph{Robot. Auton. Syst.}, 57\penalty0 (5):\penalty0 469–483, may
  2009.
\newblock ISSN 0921-8890.
\newblock \doi{10.1016/j.robot.2008.10.024}.
\newblock URL \url{https://doi.org/10.1016/j.robot.2008.10.024}.

\bibitem[Azar et~al.(2017)Azar, Osband, and Munos]{azar2017minimax}
Mohammad~Gheshlaghi Azar, Ian Osband, and R{\'e}mi Munos.
\newblock Minimax regret bounds for reinforcement learning.
\newblock In \emph{International Conference on Machine Learning}, pp.\
  263--272. PMLR, 2017.

\bibitem[Ball et~al.(2023)Ball, Smith, Kostrikov, and
  Levine]{ball2023efficient}
Philip~J. Ball, Laura Smith, Ilya Kostrikov, and Sergey Levine.
\newblock Efficient online reinforcement learning with offline data.
\newblock In Andreas Krause, Emma Brunskill, Kyunghyun Cho, Barbara Engelhardt,
  Sivan Sabato, and Jonathan Scarlett (eds.), \emph{Proceedings of the 40th
  International Conference on Machine Learning}, volume 202 of
  \emph{Proceedings of Machine Learning Research}, pp.\  1577--1594. PMLR,
  23--29 Jul 2023.
\newblock URL \url{https://proceedings.mlr.press/v202/ball23a.html}.

\bibitem[Bertsekas(2019)]{bertsekas2019reinforcement}
Dimitri Bertsekas.
\newblock \emph{Reinforcement learning and optimal control}.
\newblock Athena Scientific, 2019.

\bibitem[Billard et~al.(2008)Billard, Calinon, Dillmann, and
  Schaal]{Billard2008}
Aude Billard, Sylvain Calinon, R{\"u}diger Dillmann, and Stefan Schaal.
\newblock \emph{Robot Programming by Demonstration}, pp.\  1371--1394.
\newblock Springer Berlin Heidelberg, Berlin, Heidelberg, 2008.
\newblock ISBN 978-3-540-30301-5.
\newblock \doi{10.1007/978-3-540-30301-5_60}.
\newblock URL \url{https://doi.org/10.1007/978-3-540-30301-5_60}.

\bibitem[Cheng et~al.(2018)Cheng, Yan, Wagener, and Boots]{cheng2018fast}
Ching-An Cheng, Xinyan Yan, Nolan Wagener, and Byron Boots.
\newblock Fast policy learning through imitation and reinforcement.
\newblock 2018.

\bibitem[Eysenbach et~al.(2021)Eysenbach, Levine, and
  Salakhutdinov]{eysenbach2021replacing}
Ben Eysenbach, Sergey Levine, and Russ~R Salakhutdinov.
\newblock Replacing rewards with examples: Example-based policy search via
  recursive classification.
\newblock \emph{Advances in Neural Information Processing Systems},
  34:\penalty0 11541--11552, 2021.

\bibitem[Fu et~al.(2020{\natexlab{a}})Fu, Kumar, Nachum, Tucker, and
  Levine]{d4rl}
J.~Fu, A.~Kumar, O.~Nachum, G.~Tucker, and S.~Levine.
\newblock D4rl: Datasets for deep data-driven reinforcement learning.
\newblock In \emph{arXiv}, 2020{\natexlab{a}}.
\newblock URL \url{https://arxiv.org/pdf/2004.07219}.

\bibitem[Fu et~al.(2020{\natexlab{b}})Fu, Kumar, Nachum, Tucker, and
  Levine]{fu2020d4rl}
Justin Fu, Aviral Kumar, Ofir Nachum, George Tucker, and Sergey Levine.
\newblock D4rl: Datasets for deep data-driven reinforcement learning.
\newblock \emph{arXiv preprint arXiv:2004.07219}, 2020{\natexlab{b}}.

\bibitem[Gupta et~al.(2022)Gupta, Pacchiano, Zhai, Kakade, and
  Levine]{gupta2022unpacking}
Abhishek Gupta, Aldo Pacchiano, Yuexiang Zhai, Sham Kakade, and Sergey Levine.
\newblock Unpacking reward shaping: Understanding the benefits of reward
  engineering on sample complexity.
\newblock \emph{Advances in Neural Information Processing Systems},
  35:\penalty0 15281--15295, 2022.

\bibitem[Haarnoja et~al.(2018)Haarnoja, Zhou, Abbeel, and Levine]{sac}
T.~Haarnoja, A.~Zhou, P.~Abbeel, and S.~Levine.
\newblock Soft actor-critic: Off-policy maximum entropy deep reinforcement
  learning with a stochastic actor.
\newblock In \emph{arXiv}, 2018.
\newblock URL \url{https://arxiv.org/pdf/1801.01290.pdf}.

\bibitem[Ho \& Ermon(2016)Ho and Ermon]{gail}
J.~Ho and S.~Ermon.
\newblock Generative adversarial imitation learning.
\newblock In \emph{Neural Information Processing Systems (NIPS)}, 2016.

\bibitem[Hoque et~al.(2021)Hoque, Balakrishna, Novoseller, Wilcox, Brown, and
  Goldberg]{Hoque2021ThriftyDAggerBN}
Ryan Hoque, Ashwin Balakrishna, Ellen~R. Novoseller, Albert Wilcox, Daniel~S.
  Brown, and Ken Goldberg.
\newblock Thriftydagger: Budget-aware novelty and risk gating for interactive
  imitation learning.
\newblock In \emph{Conference on Robot Learning}, 2021.

\bibitem[Hoque et~al.(2022)Hoque, Chen, Sharma, Dharmarajan, Thananjeyan,
  Abbeel, and Goldberg]{FleetDAggerIR}
Ryan Hoque, Lawrence~Yunliang Chen, Satvik Sharma, K~Dharmarajan, Brijen
  Thananjeyan, P.~Abbeel, and Ken Goldberg.
\newblock Fleet-dagger: Interactive robot fleet learning with scalable human
  supervision.
\newblock In \emph{Conference on Robot Learning}, 2022.

\bibitem[Kahn et~al.(2020)Kahn, Abbeel, and Levine]{Kahn2020LaNDLT}
Gregory Kahn, P.~Abbeel, and Sergey Levine.
\newblock Land: Learning to navigate from disengagements.
\newblock \emph{IEEE Robotics and Automation Letters}, 6:\penalty0 1872--1879,
  2020.

\bibitem[Kalashnikov et~al.(2018)Kalashnikov, Irpan, Pastor, Ibarz, Herzog,
  Jang, Quillen, Holly, Kalakrishnan, Vanhoucke, et~al.]{kalashnikov2018qtopt}
Dmitry Kalashnikov, Alex Irpan, Peter Pastor, Julian Ibarz, Alexander Herzog,
  Eric Jang, Deirdre Quillen, Ethan Holly, Mrinal Kalakrishnan, Vincent
  Vanhoucke, et~al.
\newblock Scalable deep reinforcement learning for vision-based robotic
  manipulation.
\newblock In \emph{Conference on Robot Learning}, pp.\  651--673, 2018.

\bibitem[Kelly et~al.(2018)Kelly, Sidrane, Driggs-Campbell, and
  Kochenderfer]{Kelly2018HGDAgger}
Michael Kelly, Chelsea Sidrane, K.~Driggs-Campbell, and Mykel~J. Kochenderfer.
\newblock Hg-dagger: Interactive imitation learning with human experts.
\newblock \emph{2019 International Conference on Robotics and Automation
  (ICRA)}, pp.\  8077--8083, 2018.

\bibitem[Kidambi et~al.(2020)Kidambi, Rajeswaran, Netrapalli, and
  Joachims]{kidambi2020morel}
Rahul Kidambi, Aravind Rajeswaran, Praneeth Netrapalli, and Thorsten Joachims.
\newblock Morel: Model-based offline reinforcement learning.
\newblock \emph{arXiv preprint arXiv:2005.05951}, 2020.

\bibitem[Laskey et~al.(2017)Laskey, Lee, Fox, Dragan, and
  Goldberg]{laskey2017dart}
Michael Laskey, Jonathan Lee, Roy Fox, Anca Dragan, and Ken Goldberg.
\newblock Dart: Noise injection for robust imitation learning, 2017.

\bibitem[Lee et~al.(2022)Lee, Seo, Lee, Abbeel, and Shin]{lee2022offline}
Seunghyun Lee, Younggyo Seo, Kimin Lee, Pieter Abbeel, and Jinwoo Shin.
\newblock Offline-to-online reinforcement learning via balanced replay and
  pessimistic q-ensemble.
\newblock In \emph{Conference on Robot Learning}, pp.\  1702--1712. PMLR, 2022.

\bibitem[Levine et~al.(2016)Levine, Finn, Darrell, and Abbeel]{levine2016end}
Sergey Levine, Chelsea Finn, Trevor Darrell, and Pieter Abbeel.
\newblock End-to-end training of deep visuomotor policies.
\newblock \emph{The Journal of Machine Learning Research}, 17\penalty0
  (1):\penalty0 1334--1373, 2016.

\bibitem[Li et~al.(2022{\natexlab{a}})Li, Shi, Chen, Chi, and
  Wei]{li2022settling}
Gen Li, Laixi Shi, Yuxin Chen, Yuejie Chi, and Yuting Wei.
\newblock Settling the sample complexity of model-based offline reinforcement
  learning.
\newblock \emph{arXiv preprint arXiv:2204.05275}, 2022{\natexlab{a}}.

\bibitem[Li et~al.(2023)Li, Zhai, Ma, and Levine]{li2023understanding}
Qiyang Li, Yuexiang Zhai, Yi~Ma, and Sergey Levine.
\newblock Understanding the complexity gains of single-task rl with a
  curriculum.
\newblock In \emph{International Conference on Machine Learning}, pp.\
  20412--20451. PMLR, 2023.

\bibitem[Li et~al.(2022{\natexlab{b}})Li, Peng, and Zhou]{li2022efficient}
Quanyi Li, Zhenghao Peng, and Bolei Zhou.
\newblock Efficient learning of safe driving policy via human-ai copilot
  optimization, 2022{\natexlab{b}}.

\bibitem[Luo et~al.(2021)Luo, Sushkov, Pevceviciute, Lian, Su, Vecer{\'i}k, Ye,
  Schaal, and Scholz]{Luo2021RobustMP}
Jianlan Luo, Oleg~O. Sushkov, Rugile Pevceviciute, Wenzhao Lian, Chang Su, Mel
  Vecer{\'i}k, Ning Ye, Stefan Schaal, and Jonathan Scholz.
\newblock Robust multi-modal policies for industrial assembly via reinforcement
  learning and demonstrations: A large-scale study.
\newblock \emph{ArXiv}, abs/2103.11512, 2021.

\bibitem[Menda et~al.(2018)Menda, Driggs-Campbell, and
  Kochenderfer]{Menda2018EnsembleDAggerAB}
Kunal Menda, K.~Driggs-Campbell, and Mykel~J. Kochenderfer.
\newblock Ensembledagger: A bayesian approach to safe imitation learning.
\newblock \emph{2019 IEEE/RSJ International Conference on Intelligent Robots
  and Systems (IROS)}, pp.\  5041--5048, 2018.

\bibitem[Nair et~al.(2018)Nair, McGrew, Andrychowicz, Zaremba, and
  Abbeel]{nairicra}
Ashvin Nair, Bob McGrew, Marcin Andrychowicz, Wojciech Zaremba, and Pieter
  Abbeel.
\newblock Overcoming exploration in reinforcement learning with demonstrations.
\newblock In \emph{2018 IEEE International Conference on Robotics and
  Automation (ICRA)}, pp.\  6292--6299, 2018.
\newblock \doi{10.1109/ICRA.2018.8463162}.

\bibitem[Nakamoto et~al.(2023)Nakamoto, Zhai, Singh, Mark, Ma, Finn, Kumar, and
  Levine]{nakamoto2023cal}
Mitsuhiko Nakamoto, Yuexiang Zhai, Anikait Singh, Max~Sobol Mark, Yi~Ma,
  Chelsea Finn, Aviral Kumar, and Sergey Levine.
\newblock Cal-ql: Calibrated offline rl pre-training for efficient online
  fine-tuning.
\newblock \emph{arXiv preprint arXiv:2303.05479}, 2023.

\bibitem[Osa et~al.(2018)Osa, Pajarinen, Neumann, Bagnell, Abbeel, and
  Peters]{ROB-053}
Takayuki Osa, Joni Pajarinen, Gerhard Neumann, J.~Andrew Bagnell, Pieter
  Abbeel, and Jan Peters.
\newblock An algorithmic perspective on imitation learning.
\newblock \emph{Foundations and Trends® in Robotics}, 7\penalty0
  (1-2):\penalty0 1--179, 2018.
\newblock ISSN 1935-8253.
\newblock \doi{10.1561/2300000053}.
\newblock URL \url{http://dx.doi.org/10.1561/2300000053}.

\bibitem[Ouyang et~al.(2022)Ouyang, Wu, Jiang, Almeida, Wainwright, Mishkin,
  Zhang, Agarwal, Slama, Ray, Schulman, Hilton, Kelton, Miller, Simens, Askell,
  Welinder, Christiano, Leike, and Lowe]{ouyang2022training}
Long Ouyang, Jeff Wu, Xu~Jiang, Diogo Almeida, Carroll~L. Wainwright, Pamela
  Mishkin, Chong Zhang, Sandhini Agarwal, Katarina Slama, Alex Ray, John
  Schulman, Jacob Hilton, Fraser Kelton, Luke Miller, Maddie Simens, Amanda
  Askell, Peter Welinder, Paul Christiano, Jan Leike, and Ryan Lowe.
\newblock Training language models to follow instructions with human feedback,
  2022.

\bibitem[Rajeswaran et~al.(2018)Rajeswaran, Kumar, Gupta, Vezzani, Schulman,
  Todorov, and Levine]{rajeswaran2018dapg}
Aravind Rajeswaran, Vikash Kumar, Abhishek Gupta, Giulia Vezzani, John
  Schulman, Emanuel Todorov, and Sergey Levine.
\newblock Learning complex dexterous manipulation with deep reinforcement
  learning and demonstrations.
\newblock In \emph{Robotics: Science and Systems}, 2018.

\bibitem[Rashidinejad et~al.(2021)Rashidinejad, Zhu, Ma, Jiao, and
  Russell]{rashidinejad2021bridging}
Paria Rashidinejad, Banghua Zhu, Cong Ma, Jiantao Jiao, and Stuart Russell.
\newblock Bridging offline reinforcement learning and imitation learning: A
  tale of pessimism.
\newblock \emph{arXiv preprint arXiv:2103.12021}, 2021.

\bibitem[Reddy et~al.(2019)Reddy, Dragan, and Levine]{Reddy2019SQILIL}
Siddharth Reddy, Anca~D. Dragan, and Sergey Levine.
\newblock Sqil: Imitation learning via reinforcement learning with sparse
  rewards.
\newblock \emph{arXiv: Learning}, 2019.

\bibitem[Ross \& Bagnell(2010)Ross and Bagnell]{ross2010efficient}
St{\'e}phane Ross and Drew Bagnell.
\newblock Efficient reductions for imitation learning.
\newblock In \emph{International Conference on Artificial Intelligence and
  Statistics (AISTATS)}, pp.\  661--668, 2010.

\bibitem[Ross \& Bagnell(2014)Ross and Bagnell]{ross2014aggrevate}
Stephane Ross and J.~Andrew Bagnell.
\newblock Reinforcement and imitation learning via interactive no-regret
  learning.
\newblock In \emph{Advances in Neural Information Processing Systems}, 2014.

\bibitem[Ross et~al.(2011)Ross, Gordon, and Bagnell]{ross2011reduction}
St{\'e}phane Ross, Geoffrey Gordon, and Drew Bagnell.
\newblock A reduction of imitation learning and structured prediction to
  no-regret online learning.
\newblock In \emph{International Conference on Artificial Intelligence and
  Statistics (AISTATS)}, pp.\  627--635, 2011.

\bibitem[Silver et~al.(2017)Silver, Schrittwieser, Simonyan, Antonoglou, Huang,
  Guez, Hubert, Baker, Lai, Bolton, et~al.]{silver2017mastering}
David Silver, Julian Schrittwieser, Karen Simonyan, Ioannis Antonoglou, Aja
  Huang, Arthur Guez, Thomas Hubert, Lucas Baker, Matthew Lai, Adrian Bolton,
  et~al.
\newblock Mastering the game of go without human knowledge.
\newblock \emph{nature}, 550\penalty0 (7676):\penalty0 354--359, 2017.

\bibitem[Song et~al.(2022)Song, Zhou, Sekhari, Bagnell, Krishnamurthy, and
  Sun]{song2022hybrid}
Yuda Song, Yifei Zhou, Ayush Sekhari, J~Andrew Bagnell, Akshay Krishnamurthy,
  and Wen Sun.
\newblock Hybrid rl: Using both offline and online data can make rl efficient.
\newblock \emph{arXiv preprint arXiv:2210.06718}, 2022.

\bibitem[Spencer et~al.(2020)Spencer, Choudhury, Barnes, Schmittle, Chiang1,
  Ramadge1, and Srinivasa2]{spencer2020intervention}
Jonathan Spencer, Sanjiban Choudhury, Matthew Barnes, Matthew Schmittle, Mung
  Chiang1, Peter Ramadge1, and Siddhartha Srinivasa2.
\newblock Learning from interventions: Human-robot interaction as both explicit
  and implicit feedback.
\newblock In \emph{Proceedings of Robotics: Science and Systems}, 2020.

\bibitem[Sun et~al.(2017)Sun, Venkatraman, Gordon, Boots, and
  Bagnell]{sun2017aggrevated}
Wen Sun, Arun Venkatraman, Geoffrey~J. Gordon, Byron Boots, and J.~Andrew
  Bagnell.
\newblock Deeply aggrevated: Differentiable imitation learning for sequential
  prediction.
\newblock In \emph{Proceedings of Machine Learning Research}, 2017.

\bibitem[Sun et~al.(2018)Sun, Bagnell, and Boots]{sun2018truncated}
Wen Sun, J.~Andrew Bagnell, and Byron Boots.
\newblock Truncated horizon policy search: Combining reinforcement learning and
  imitation learning.
\newblock 2018.

\bibitem[Tan \& Le(2019)Tan and Le]{efficientnet}
Mingxing Tan and Quoc Le.
\newblock {E}fficient{N}et: Rethinking model scaling for convolutional neural
  networks.
\newblock In Kamalika Chaudhuri and Ruslan Salakhutdinov (eds.),
  \emph{Proceedings of the 36th International Conference on Machine Learning},
  volume~97 of \emph{Proceedings of Machine Learning Research}, pp.\
  6105--6114. PMLR, 09--15 Jun 2019.
\newblock URL \url{https://proceedings.mlr.press/v97/tan19a.html}.

\bibitem[Vecerik et~al.(2017)Vecerik, Hester, Scholz, Wang, Pietquin, Piot,
  Heess, Roth{\"o}rl, Lampe, and Riedmiller]{vecerik2017leveraging}
Mel Vecerik, Todd Hester, Jonathan Scholz, Fumin Wang, Olivier Pietquin, Bilal
  Piot, Nicolas Heess, Thomas Roth{\"o}rl, Thomas Lampe, and Martin Riedmiller.
\newblock Leveraging demonstrations for deep reinforcement learning on robotics
  problems with sparse rewards.
\newblock \emph{arXiv preprint arXiv:1707.08817}, 2017.

\bibitem[Wainwright(2019)]{wainwright2019high}
Martin~J Wainwright.
\newblock \emph{High-dimensional statistics: A non-asymptotic viewpoint},
  volume~48.
\newblock Cambridge university press, 2019.

\bibitem[Xie et~al.(2021)Xie, Jiang, Wang, Xiong, and Bai]{xie2021policy}
Tengyang Xie, Nan Jiang, Huan Wang, Caiming Xiong, and Yu~Bai.
\newblock Policy finetuning: Bridging sample-efficient offline and online
  reinforcement learning.
\newblock \emph{Advances in neural information processing systems},
  34:\penalty0 27395--27407, 2021.

\bibitem[Xie et~al.(2022)Xie, Jiang, Wang, Xiong, and Bai]{xie2022policy}
Tengyang Xie, Nan Jiang, Huan Wang, Caiming Xiong, and Yu~Bai.
\newblock Policy finetuning: Bridging sample-efficient offline and online
  reinforcement learning.
\newblock 2022.

\bibitem[Xue et~al.(2023)Xue, Peng, Li, Liu, and Zhou]{xue2023guarded}
Zhenghai Xue, Zhenghao Peng, Quanyi Li, Zhihan Liu, and Bolei Zhou.
\newblock Guarded policy optimization with imperfect online demonstrations,
  2023.

\end{thebibliography}

\appendix
\newpage

\section{Additional Experiment Results and Details}\label{sec:appendix_exp}

\subsection{Grid World Navigation as a Diagnostic Task}\label{sec:appendix_gridworld}

\begin{wrapfigure}{rb}{0.4\textwidth}
\vspace{-0.5em}
  \centering
  \begin{subfigure}[b]{0.19\textwidth}
    \centering
    \includegraphics[width=\textwidth]{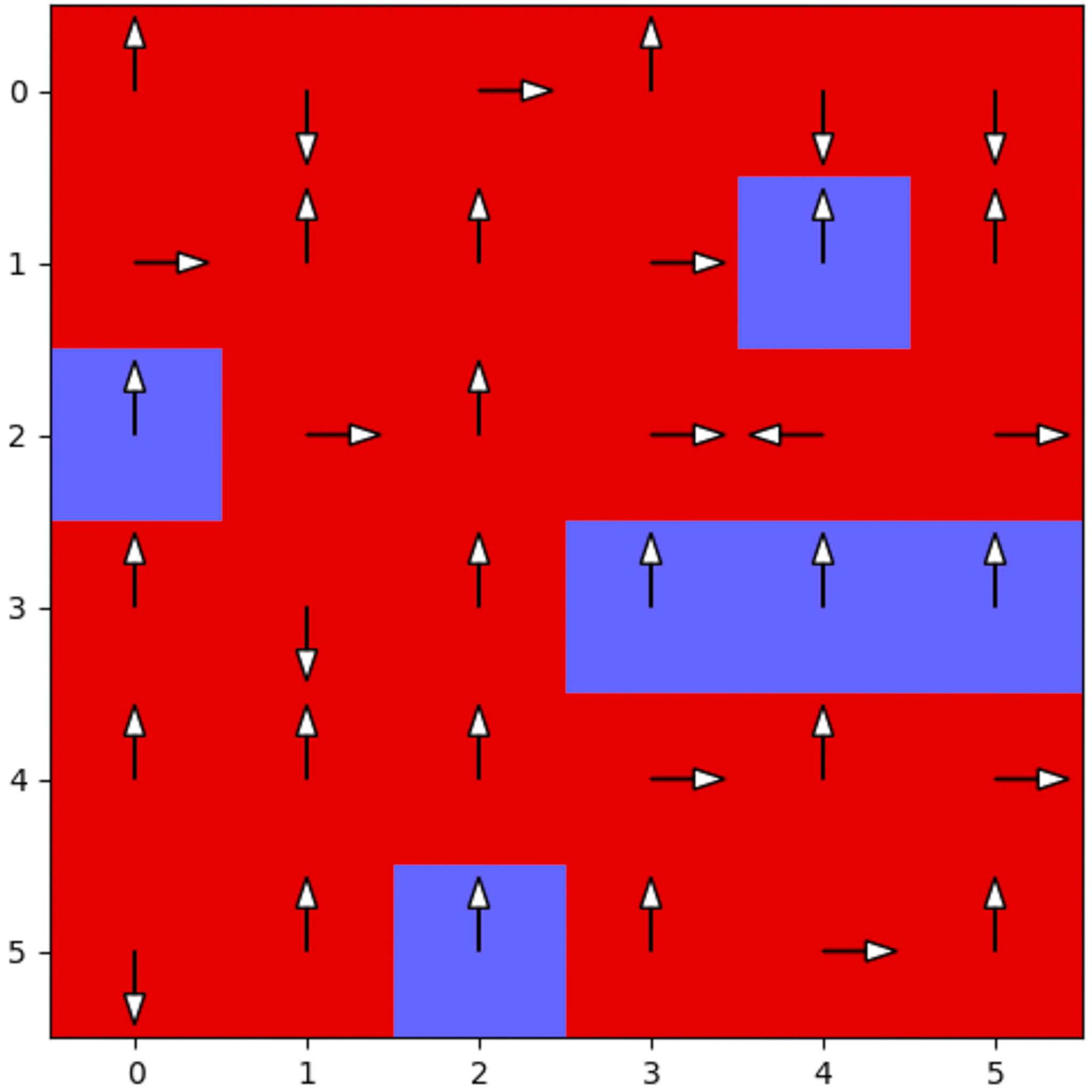}
    \caption{N=2 Rounds}
    \label{fig:figure1}
  \end{subfigure}
  \begin{subfigure}[b]{0.19\textwidth}
    \centering
    \includegraphics[width=\textwidth]{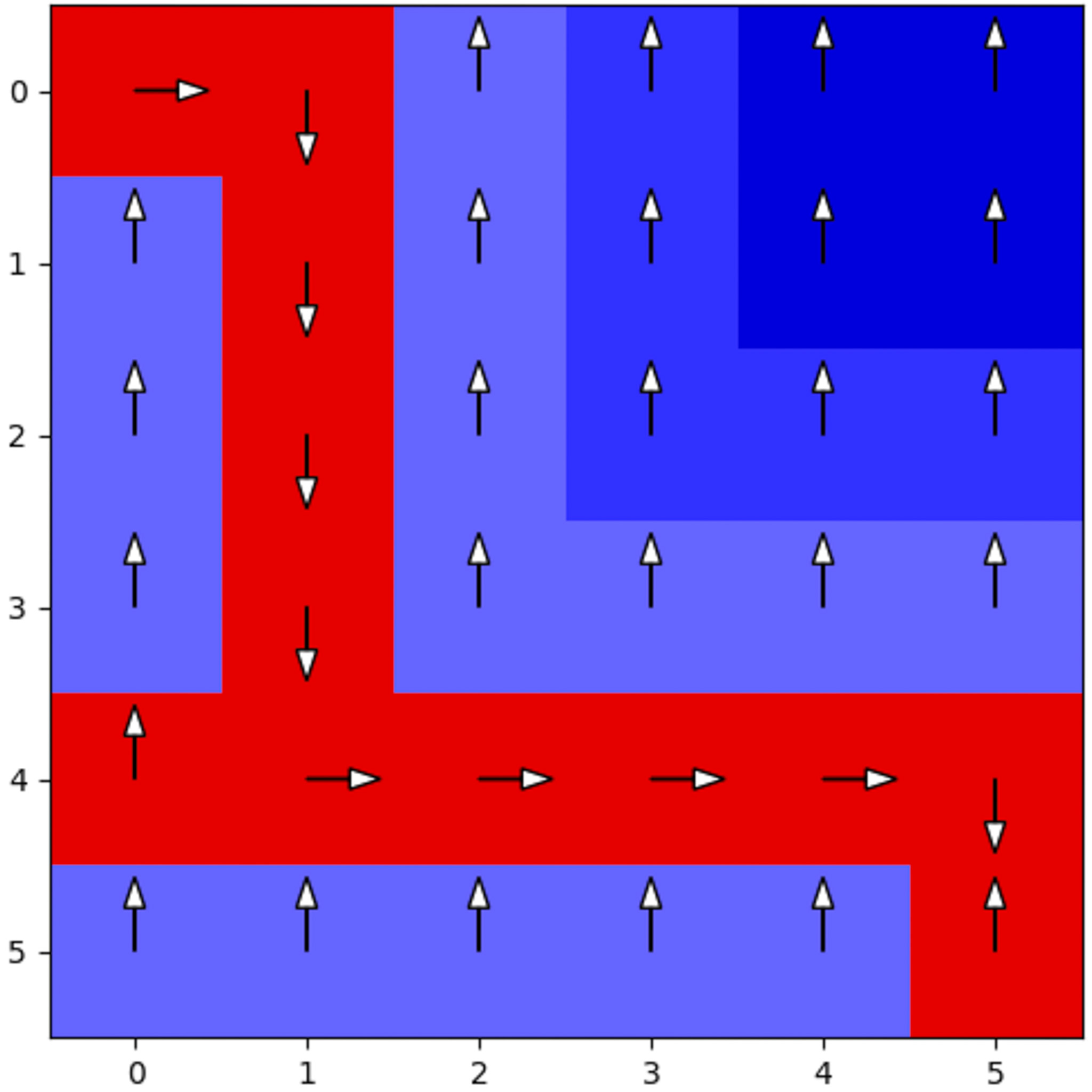}
    \caption{N=10 Rounds}
    \label{fig:figure3}
  \end{subfigure}
  \begin{subfigure}[b]{0.19\textwidth}
    \centering
    \includegraphics[width=\textwidth]{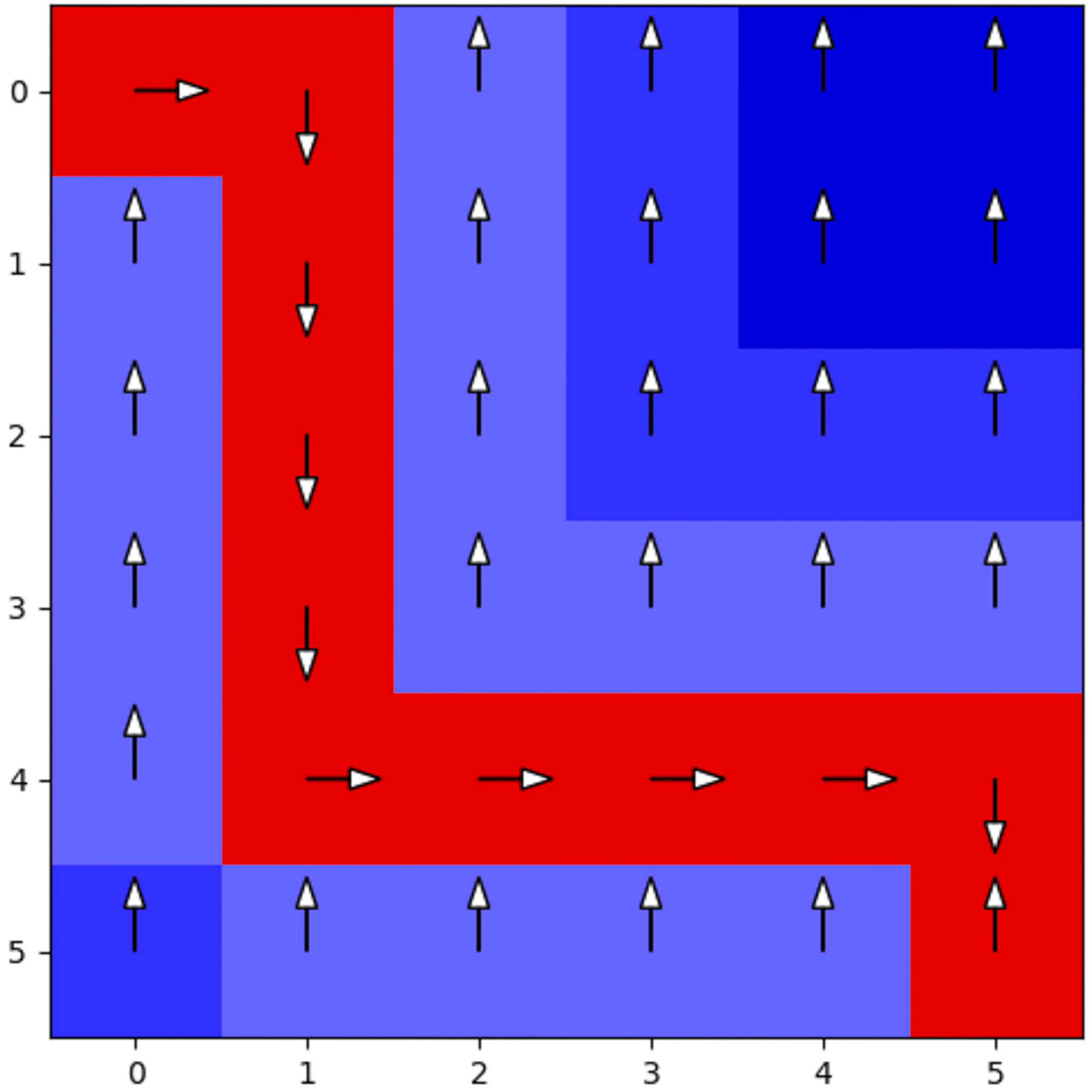}
    \caption{N=20 Rounds}
    \label{fig:figure4}
  \end{subfigure}
  \begin{subfigure}[b]{0.19\textwidth}
    \centering
    \includegraphics[width=\textwidth]{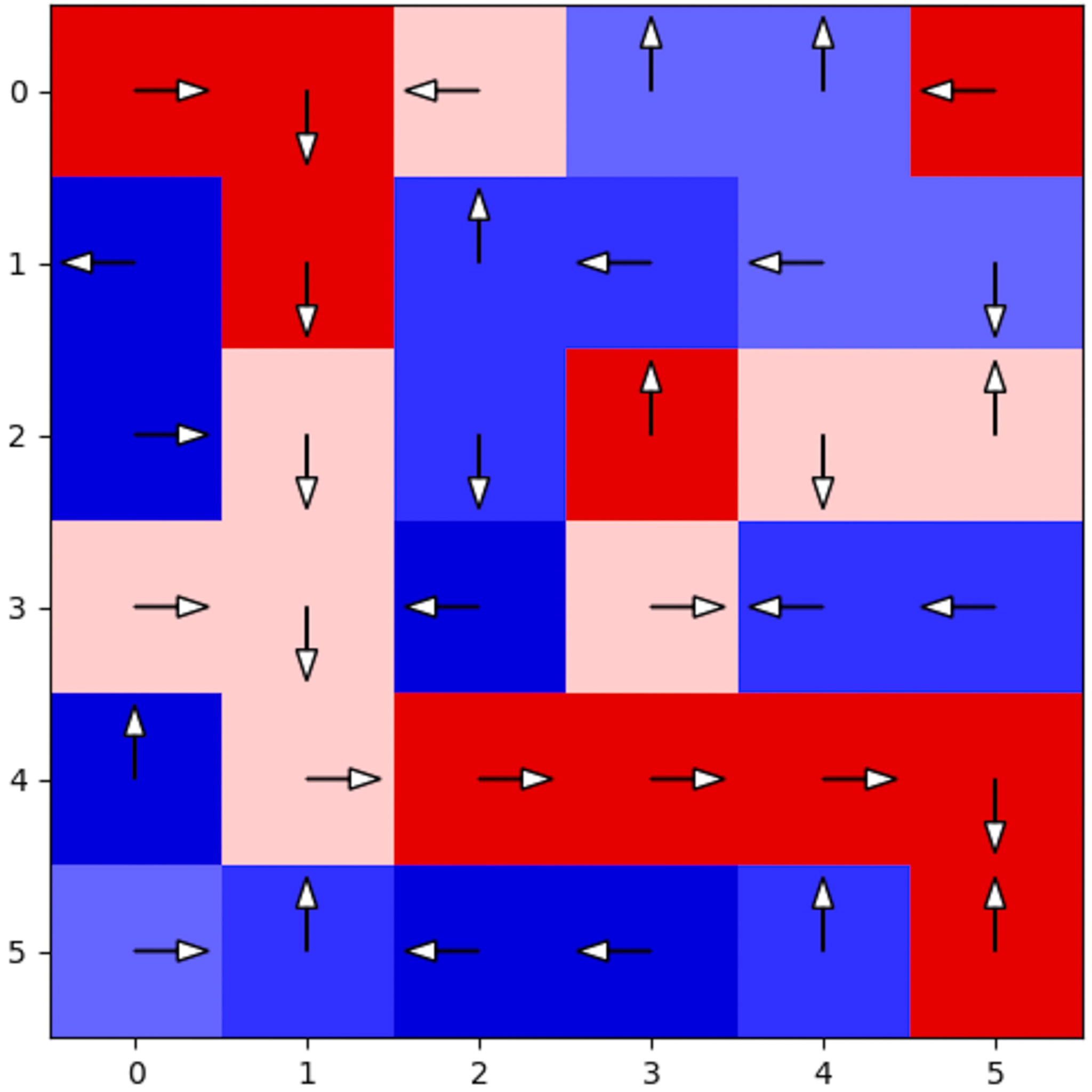}
    \caption{True task values}
    \label{fig:figure5}
  \end{subfigure}
    \vspace{-0.5em}
  \caption{Learned value function at successive rounds of RLIF (red is higher). Over the course of the algorithm, the value function and policy converge on the optimal path through the grid world, only using intervention-based rewards.}
  \label{fig:three-figures}
    \vspace{-1em}
\end{wrapfigure}

We start with a didactic task in a Grid World environment to visualize the behavior of our method over the course of training. The MDP is a 6x6 grid, where the task is to navigate from location $(1, 1)$ to $(6, 6)$. The (unobserved) optimal policy is produced by optimizing a reward that is sampled from $[-0.1, 0]$ at each point on an optimal route, and from $[-1, -0.1]$ for any state that is not on this route, such that a good policy must follow the route precisely. We employ the \textbf{Value-Based Intervention} strategy. We collect five intervention trajectories per round following this strategy. We use value iteration as the RL algorithm since this MDP can be computed exactly. We plot the results in Fig.~\ref{fig:three-figures}, visualizing the value function as well as the actions learned by our method.  
As the expert applies interventions on successive iterations, RLIF determines a reasonable value function, with high (red) values along the desired route and low (blue) values elsewhere, arrow indicating the action actions; despite not actually observing the true task reward. We also plot the true value function of this task in Fig.~\ref{fig:three-figures}(f). While of course the true value function differs from the solution found by RLIF, the policy matches on the path from the start to the goal, indicating that RLIF can successfully learn the policy from only intervention feedback when we abstract away sampling error.
%


\subsection{Robot Experiment Details}\label{subsec:appen_robot}

\paragraph{Task Description.} For real robot experiments, we perform the task of peg insertion into 3D-printed board and cloth unfolding with velcro hooks using a 7-DoF Franka Research 3 robot arm. The RL agent stream control commands to the robot at 10 HZ, each episode is at a maximum of 50 timesteps, which converts to 5 seconds.
The robot obtains visual feedback from the Intel Realsense D405 cameras mounted on its end-effectors.
In our setup, the human operator provides interventions by using a 3D mouse. Whenever the operator touches the mouse, they take over control of the robot, and can return control to the RL agent by disengaging with the mouse. We mark the transition of the previous time step of such an intervention as a negative reward. 
 We use an ImageNet pre-trained EfficientNet-B3~\citep{efficientnet} as a vision backbone for faster policy training. Two cameras mounted on the robot end-effector provide continuous visual feedback. 
For the insertion task, a trial is counted as a success if the peg is inserted into its matching hole with a certain tolerance, and for the unfolding task, a trial is counted as a success if the cloth is successfully unfolded all the way. All success rates are reported based on 20 trials.

\paragraph{Collecting Trajectories.} For RLIF, DAgger-based baselines, and BC, five suboptimal trajectories are used to initialize the replay buffer. During training, human interventions are given when the policy appears to be performing suboptimally. The BC results are trained on the same five trajectories that initialize the RLIF buffer.

\subsection{Experiment Setup Details}\label{subsec:appen_exp_setup}

\paragraph{Success Rates.} For the Adroit tasks, success rate is used as a measure of performance since whether the agent learns a policy that can success in the task can be more informative of how well expert interventions are incorporated than reward. The evaluation is done on the Gymnasium sparse environments AdroitHandPenSparse-v1 and AdroitHandDoorSparse-v1, where a trajectory is determined to be a success if at the end the agent receives a reward of 10 corresponding to success in the Gymnasium environments. Walker2d and Hopper uses average normalized returns since success rate is hard to definitively measure for the locomotion environments. 

\paragraph{Offline Datasets.} To perform controlled experimentation of RLIF against DAgger and HG-DAgger, we prepare offline datasets to use for pretraining on DAgger and HG-DAgger and to initialize to replay buffer for RLIF. The initial datasets of all simulation tasks are subsets of datasets provided in d4rl. The specific dataset used to subsample and sizes of the initial dataset for each task are listed in Table \ref{dataset_table}. The rewards for all timesteps for all of the initial datasets are set to zero.

\newcolumntype{C}[1]{>{\centering\arraybackslash}p{#1}}
\renewcommand{\arraystretch}{1.5}
\begin{table}[H]
\vspace{0.2cm}
\begin{center}
\begin{tabular}{C{3.7cm}C{5cm}C{3cm}}
  \hlineB{2.5}
  Tasks & Dataset & Subsampled Size \\
  \hlineB{1.5}
  Adroit Pen & pen-expert-v1 & 50 trajectories \\
  Locomotion Hopper & hopper-expert-v2 & 50 trajectories \\
  Locomotion Walker2d & walker2d-expert-v2 & 10 trajectories \\ 
  Real-World Robot Experiments & manually collected suboptimal trajectories & 5 trajectories \\ \hlineB{2.5} 
\end{tabular}
\vspace{0.5cm}
\caption{Offline datasets for each task. }
\label{dataset_table}
\end{center}
\end{table}
\vspace{-0.5cm}

\paragraph{Expert Training.} Experts of varying levels are trained for all tasks on the dataset with either BC, IQL, SAC, or RLPD depending on the task. The various levels of the experts are obtained by training on the human dataset or expert datasets subsampled to various sizes. For each level and task, the same expert is used to intervene across all intervention strategies for both RLIF, DAgger, and HG-DAgger.

\subsection{Intervention Strategies}\label{subsec:appen_exp_intv} 
\paragraph{Random Intervention.} For the random intervention strategy, in our experiments interventions are sampled uniformly at random. We consider 30\%, 50\%, and 85\% intervention rates, where the intervention rate is computed as the number of steps under intervention over the total number of steps. We define $i$ as the particular timestep where an expert choose to intervene, and $k$ as the number of steps an expert takes over after such an intervention. At any given timestep $t$ that is not intervened by the expert, we can then define the uniformly random intervention strategy with different probabilities as below:

30\% Intervention Rate: $I \sim \mathcal{U}(t+1, t+10)$ and $k \sim \mathcal{U}(1, 5)$

50\% Intervention Rate: $I \sim \mathcal{U}(t+1, t+5)$ and $k \sim \mathcal{U}(3, 7)$

85\% intervention Rate: $I \sim \mathcal{U}(t+1, t+2)$ and $k \sim \mathcal{U}(12, 16)$.

\paragraph{Value-based Intervention.} For value-based intervention strategy, we compare the value of the current state and policy actions to the value of the current state and expert actions under the reference $Q^{\piref}$ function.

Specifically, we follow the intervention strategy below where an intervention is likely to occur if the expert values are greater than the policy values by a threshold.
\begin{equation}
\label{eq:soft-rational-intervention2}
\bb P(\textit{Intervention} | s) = 
  \begin{cases}
    \beta, & \text{if} \; Q^{\piref}(s, \piexp    (s)) > Q^{\piref}(s, \pi(s)) + \delta\\
    1 - \beta, & \text{otherwise}.
  \end{cases}
\end{equation}

In practice, we found a relative threshold comparison effective, which is stated as below: 
\begin{equation}
\label{eq:soft-rational-intervention3}
\bb P(\textit{Intervention} | s) = 
  \begin{cases}
    \beta, & \text{if} \; Q^{\piref}(s, \piexp    (s)) * \alpha > Q^{\piref}(s, \pi(s))\\
    1 - \beta, & \text{otherwise}.
  \end{cases}
\end{equation}
We choose a value for $\beta$ close to 1 such as 0.95 and a value of $\alpha$ close to 1 such as 0.97.

\subsection{Experiment Plots}\label{subsec:appen_exp_plots}
\vspace{-0.1cm}
We present some representative training plots below to describe the learning process of RLIF and HG-DAgger. The plots cover a variety of expert levels and intervention strategies.

\begin{figure}[H]
\setlength{\tabcolsep}{6pt}
  \centering
  \begin{subfigure}{0.47\textwidth}
    \centering
    \includegraphics[width=\linewidth]{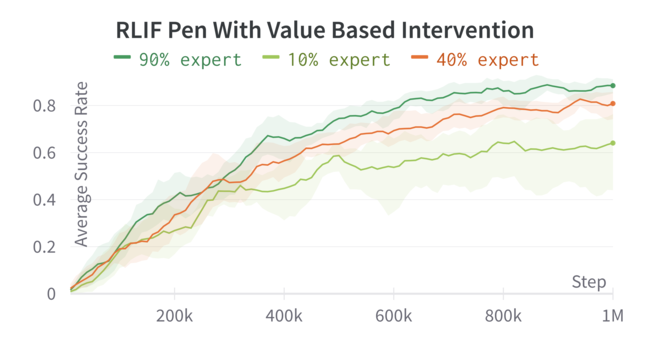}
  \end{subfigure}
  \hspace{0.5cm}
  \begin{subfigure}{0.47\textwidth}
    \centering
    \includegraphics[width=\linewidth]{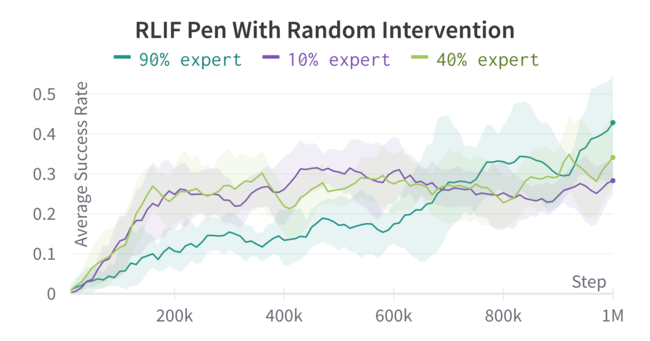}
  \end{subfigure}
\vspace{0.01\textwidth}

\begin{subfigure}{0.47\textwidth}
    \centering
    \includegraphics[width=\linewidth]{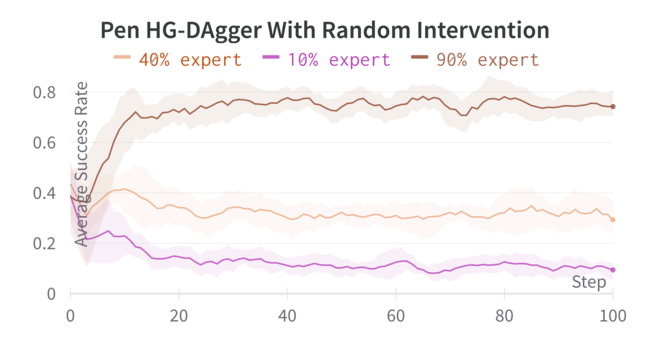}
  \end{subfigure}
  \hspace{0.5cm}
  \begin{subfigure}{0.47\textwidth}
    \centering
    \includegraphics[width=\linewidth]{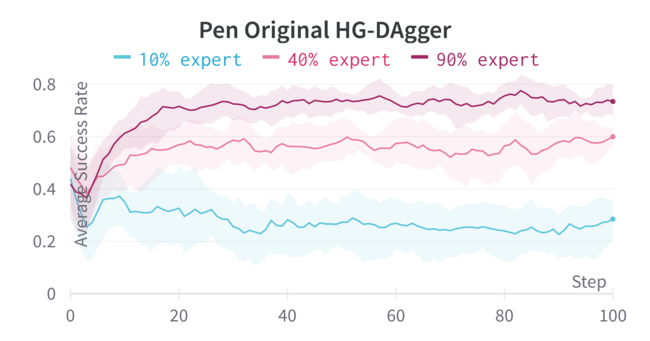}
  \end{subfigure}
\vspace{0.01\textwidth}

\begin{subfigure}{0.47\textwidth}
    \centering
    \includegraphics[width=\linewidth]{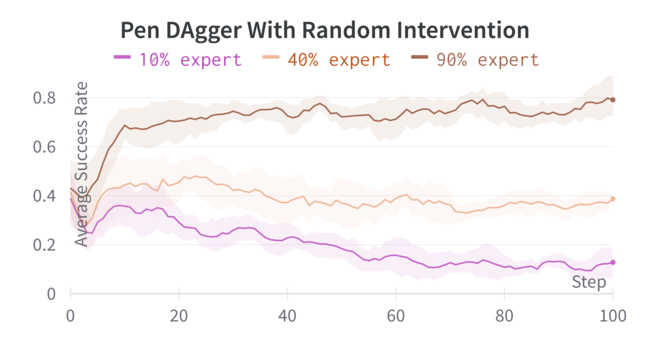}
  \end{subfigure}
  \hspace{0.5cm}
  \begin{subfigure}{0.47\textwidth}
    \centering
    \includegraphics[width=\linewidth]{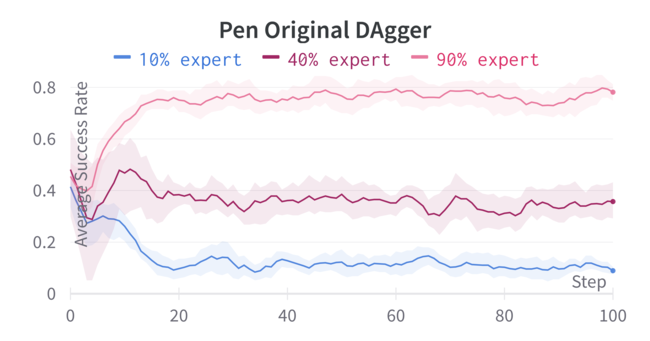}
  \end{subfigure}
\vspace{0.01\textwidth}

\begin{subfigure}{0.47\textwidth}
    \centering
    \includegraphics[width=\linewidth]{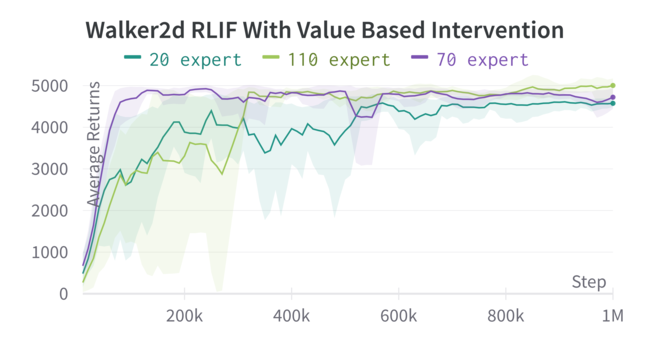}
  \end{subfigure}
  \hspace{0.5cm}
  \begin{subfigure}{0.45\textwidth}
    \centering
    \includegraphics[width=\linewidth]{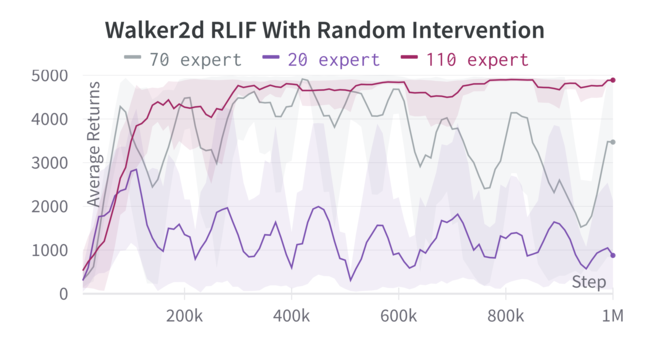}
  \end{subfigure}
  \vspace{0.01\textwidth}

 \begin{subfigure}{0.47\textwidth}
    \centering
    \includegraphics[width=\linewidth]{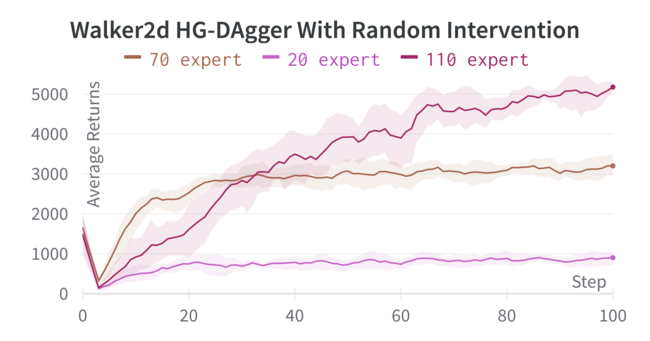}
  \end{subfigure}
  \hspace{0.5cm}
  \begin{subfigure}{0.47\textwidth}
    \centering
    \includegraphics[width=\linewidth]{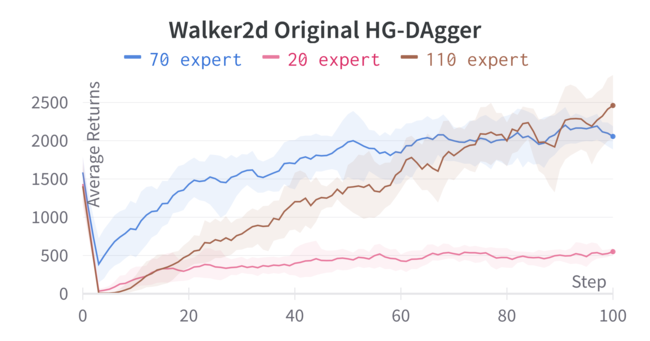}
  \end{subfigure}

    \begin{subfigure}{0.47\textwidth}
        \centering
        \includegraphics[width=\linewidth]{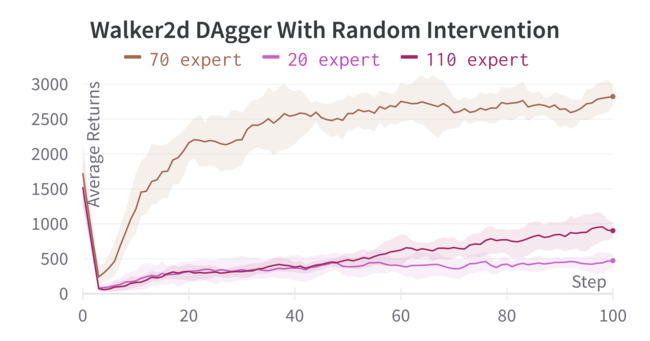}
      \end{subfigure}
      \hspace{0.5cm}
      \begin{subfigure}{0.47\textwidth}
        \centering
        \includegraphics[width=\linewidth]{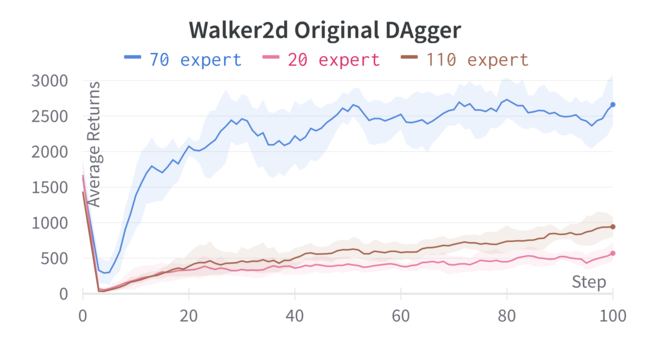}
      \end{subfigure}

\vspace{0.01\textwidth}
 \vspace{1.0cm}
\end{figure}

\begin{figure}[!h]
 \vspace{-0.5cm}

\begin{subfigure}{0.47\textwidth}
    \centering
    
    \includegraphics[width=\linewidth]{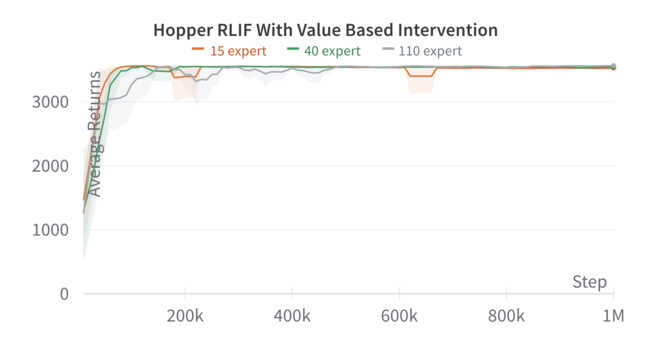}
  \end{subfigure}
  \hspace{0.5cm}
  \begin{subfigure}{0.47\textwidth}
    \centering
    \includegraphics[width=\linewidth]{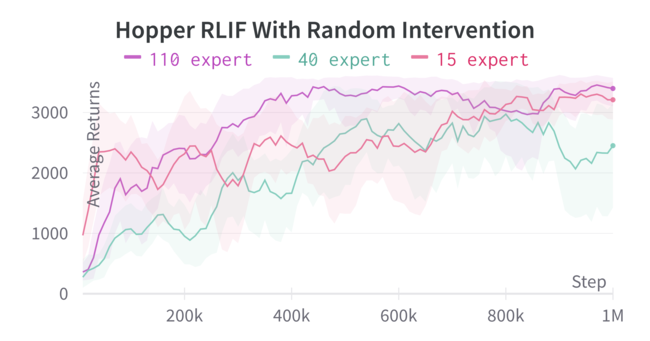}
  \end{subfigure}
\vspace{0.01\textwidth}

\begin{subfigure}{0.47\textwidth}
    \centering
    \includegraphics[width=\linewidth]{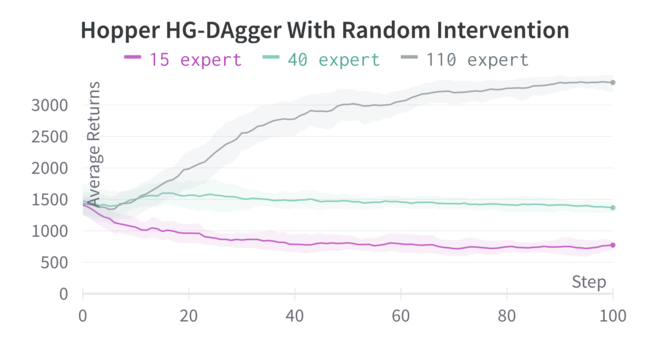}
  \end{subfigure}
  \hspace{0.5cm}
  \begin{subfigure}{0.47\textwidth}
    \centering
    \includegraphics[width=\linewidth]{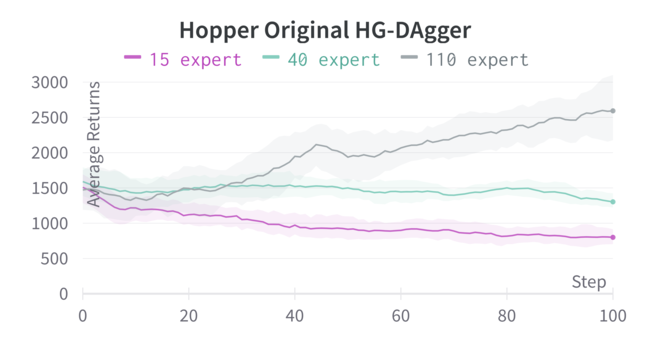}
  \end{subfigure}
\vspace{0.01\textwidth}

\begin{subfigure}{0.47\textwidth}
    \centering
    \includegraphics[width=\linewidth]{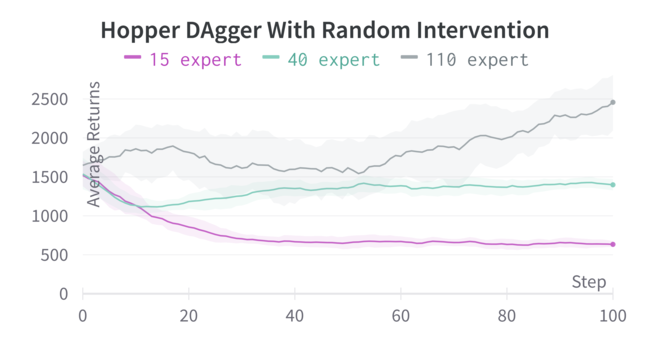}
  \end{subfigure}
  \hspace{0.5cm}
  \begin{subfigure}{0.47\textwidth}
    \centering
    \includegraphics[width=\linewidth]{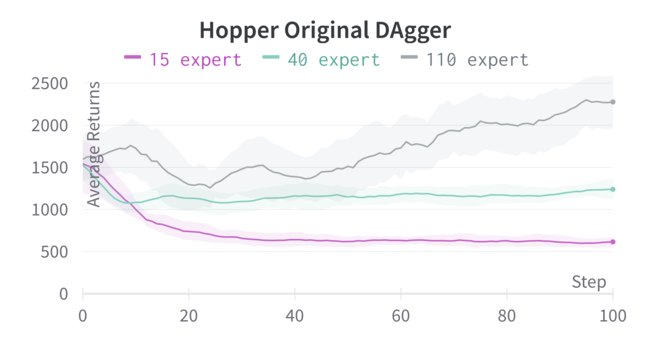}
  \end{subfigure}
\vspace{0.01\textwidth}

  \caption{Training Plots of RLIF, DAgger, and HG-DAgger}
\vspace{-0.4cm}
\end{figure}


\vspace{0.7cm}

\subsection{Ablation Experiment Results}\label{subsec:ablation}
\FloatBarrier
\begin{table*}[h]

\scriptsize{
\begin{center}
\scalebox{0.9}{
\begin{tabular}[H]
{p{1.3cm} p{0.7cm}|| p{2.5cm} p{2.5cm} p{2.5cm} p{2.5cm}}

Domain & Expert Level & 10\% $Q^{\piref}$   & 60\% $Q^{\piref}$  & 110\% $Q^{\piref}$  \\ \hline
\texttt{locomotion} &\texttt{$\sim$110\%} & 78.9$\pm$49.97 &  85.91$\pm$19.31 & 108.99$\pm$5.28 \\
\texttt{-walker} &\texttt{$\sim$40\%}  & 38.78$\pm$20.3 & 71.21$\pm$21.96 & 99.66$\pm$5.9 \\
 & \texttt{$\sim$20\%} & 33.55$\pm$9.63 & 66.8$\pm$8.22 &  102.85$\pm$2.26 \\ 
\end{tabular}
}
\caption{An ablation of RLIF on walker2d with different $Q^{\piref}$ and expert levels. A 10\% $Q^{\piref}$ means that it was trained with the offline dataset generated by a 10\% expert.} 

\label{tab:ablation-performance}

\vspace{-0.5cm}
\end{center}
}
\end{table*}

\subsection{Additional Baselines}\label{subsec:baseline}

\begin{table*}[h]
\scriptsize{
\begin{center}
\scalebox{0.9}{
\begin{tabular}
{p{1.5cm} p{0.7cm}|| p{1.2cm} p{1.3cm}}

Domain & Expert Level & IQL with Value Based Intervention & IQL with Random Intervention \\ \hline
\texttt{adroit-pen} &\texttt{$\sim$90\%}  & 42.5$\pm$2.26 &  68.83$\pm$3.75  \\
 &\texttt{$\sim$40\%}  & 46.58$\pm$2.81 & 25.92$\pm$3.52\\
 & \texttt{$\sim$10\%} & 40.67$\pm$5.16 & 12.08$\pm$4.39 \\ \hline 
 & average & 43.25$\pm$3.41 & 35.61$\pm$3.89 \\ \hline 
\texttt{locomotion-}  &\texttt{$\sim$110\%} & 20.37$\pm$5.48 & 80.22$\pm$12.98 \\
 \texttt{walker2d} &\texttt{$\sim$70\%}  & 47.97$\pm$14.94 & 56.33$\pm$7.45 \\
 & \texttt{$\sim$20\%} & 47.02$\pm$4.83 & 14.7$\pm$1.92 \\ \hline 
 & average & 38.45$\pm$8.42 &  50.42 $\pm$7.45 \\ \hline 
\texttt{locomotion-} & \texttt{$\sim$110\%}  & 26.37$\pm$2.04 &  37.91$\pm$3.48 \\
\texttt{hopper} &\texttt{$\sim$40\%} & 28.98$\pm$4.11 & 27.88$\pm$1.84 \\
 & \texttt{$\sim$15\%}  & 25.32$\pm$4.81 & 16.64$\pm$1.8 \\ \hline 
& average & 26.89$\pm$3.65 & 27.48$\pm$2.37 \\ 
\end{tabular}
}
\caption{A comparison of RLIF, HG-DAgger, DAgger, and BC on continuous control tasks. RLIF consistently performs better than HG-DAgger and DAgger baselines for each individual expert level as well as averaged over all expert levels.} 
\label{tab:performance}
\vspace{-0.5cm}
\end{center}
}
\end{table*}

\subsection{Experiment Hyperparameters}\label{subsec:appen_exp_hyper}

\paragraph{Training Parameters.} We set the number of rounds to $N=100$ and the number of trajectories collected per round to 5. We also use the number of pretraining epochs and pretraining train steps per epoch to 200 and 300, and the epochs and train steps per epoch for each round to 25 and 100 to achieve consistent training.

\newcolumntype{C}[1]{>{\centering\arraybackslash}p{#1}}
\renewcommand{\arraystretch}{1.5}
\begin{table}[H]

\vspace{0.2cm}
\begin{center}
\scalebox{0.75}{
\begin{tabular}{C{3.7cm}C{5cm}C{2cm}}
  \hlineB{2.5}
  Tasks & Parameters & Values \\
  \hlineB{1.5}
  Adroit Pen & UTD Ratio & 5 \\
  Locomotion Hopper & UTD Ratio & 15 \\
  Locomotion Walker2d & UTD Ratio & 1 \\
  All Tasks & Batch Size & 256 \\
  & Learning Rate & 3e-4 \\
  & Weight Decay & 1e-3 \\
  & Discount & 0.99 \\
  & Hidden Dims & (256, 256) \\
  & DAgger \& HG-DAgger Pretrain Steps & 60,000 \\
  & DAgger \& HG-DAgger Steps Per Iteration & 2500 \\ \hlineB{2.5} 
\end{tabular}
}
\caption{RLIF and HG-DAgger parameters for each simulation task. The parameters specified under All Tasks are for both BC and RLIF. }
\end{center}
\end{table}

\begin{table}[H]
\begin{center}
\scalebox{0.75}{
\begin{tabular}{C{3.7cm}C{5cm}C{2cm}}
  \hlineB{2.5}
  Tasks & Parameters & Values \\
  \hlineB{1.5}
  Insertion and Unfolding Tasks on Franka Robot & Pretrained Vision Backbone & EfficientNet-B3 \\
  & Learning Rate & 3e-4 \\
  & MLP Dims & (256, 256) \\
  & Layer Norm & true \\
  & Discount & 0.99 \\
  & UTD Ratio & 4 \\
  & Batch Size & 256 \\ \hline
  
\end{tabular}
}
\caption{RLIF and HG-DAgger parameters for insertion task on Franka robot. }
\end{center}
\end{table}


\newpage
\section{Suboptimality Gap}
\label{sec:appendix:subopt-gap-inf-sample}

\subsection{Suboptimality Gap of \ours{}}

\label{sec:appendix:subopt-gap}

\begin{theorem}[Suboptimality Gap of \ours{}, Thm.~\ref{thm:subopt-gap-main} restated]
    Let $\pitilde\in\Pioptdelta$ denote an optimal policy from maximizing the reward function $\rtilde$ generated by \ours{}. Let ${\eps = \max\Brac{\bb E_{s\sim d_\mu^{\pitilde}}\ell(s,\pi(s)),\bb E_{s\sim d_\mu^{\pitilde}}\ell'(s,\pi(s))}}$ (Def.~\ref{def:BC-loss}). Under Assumption~\ref{assump:rlif-reward-def}, when $V^{\piref}$ is known, the \ours{} suboptimality gap satisfies:
    \begin{equation*}
        \subopt_\mr{\ours{}} = V^{\star}(\mu) - V^{\pitilde}(\mu) \leq\min\Brac{V^{\pistar}(\mu) - V^{\piref}(\mu),V^{\pistar}(\mu) - V^{\piexp}(\mu)}+\frac{\delta\eps}{1-\gamma}.
    \end{equation*}
\end{theorem}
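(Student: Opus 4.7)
The plan is to establish the two bounds inside the $\min$ separately, by writing
\[ V^{\pistar}(\mu) - V^{\pitilde}(\mu) = [V^{\pistar}(\mu) - V^{\pi}(\mu)] + [V^{\pi}(\mu) - V^{\pitilde}(\mu)] \]
for $\pi \in \{\piref, \piexp\}$ and showing that in each case the second bracket is at most $\delta\eps/(1-\gamma)$. The key first ingredient is a pointwise ``no-intervention'' constraint on $\pitilde$'s visited states, derived from $\pitilde \in \Pioptdelta$: under Assumption~\ref{assump:rlif-reward-def}, $\rtilde \in \{0,1\}$ with per-step non-intervention probability $\beta$ when the $\delta$-gap condition fails and $1-\beta<\beta$ when it is triggered, so $\piexp \in \Pioptdelta$ implies $\max_\pi V^\pi_{\rtilde}(\mu) = \beta/(1-\gamma)$. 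Any $\pi \in \Pioptdelta$ must therefore never trigger the intervention condition along its own trajectory, which gives the pointwise bound
\[ Q^{\piref}(s,\piexp(s)) - Q^{\piref}(s,\pitilde(s)) \le \delta,\qquad \forall s \in \supp(d_\mu^{\pitilde}), \]
with equality to $0$ on $\{\pitilde(s)=\piexp(s)\}$ and value at most $\delta$ otherwise.

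Next I apply the performance difference lemma with $\piref$ as the reference policy (the $\piexp$ case is symmetric). For deterministic $\piref$,
\[ V^{\piref}(\mu) - V^{\pitilde}(\mu) = \frac{1}{1-\gamma}\, \bb E_{s \sim d_\mu^{\pitilde}}\brac{Q^{\piref}(s,\piref(s)) - Q^{\piref}(s,\pitilde(s))}. \]
To connect this integrand to the no-intervention constraint, I insert $Q^{\piref}(s,\piexp(s))$ and split into (I) $Q^{\piref}(s,\piref(s)) - Q^{\piref}(s,\piexp(s))$ and (II) $Q^{\piref}(s,\piexp(s)) - Q^{\piref}(s,\pitilde(s))$. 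Piece (II) vanishes on $\{\pitilde(s)=\piexp(s)\}$ and is at most $\delta$ elsewhere, so its expectation is at most $\delta\cdot\bb E_{s\sim d_\mu^{\pitilde}}\ell(s,\pitilde(s)) \le \delta\eps$. For piece (I), I would argue it is zero on $\{\piref(s)=\piexp(s)\}$ and, using that $\piref \in \Pioptdelta$ also satisfies the analogous intervention-free constraint on its own support, bound the disagreement contribution by invoking $\ell'$ to control the measure of states where $\pitilde(s) \ne \piref(s)$. Assembling gives $V^{\piref}(\mu) - V^{\pitilde}(\mu) \le \delta\eps/(1-\gamma)$; the symmetric argument with PDL centered at $\piexp$ yields $V^{\piexp}(\mu) - V^{\pitilde}(\mu) \le \delta\eps/(1-\gamma)$. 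Adding $V^{\pistar}-V^{\pi}$ to each side and taking the minimum gives the theorem.

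The main obstacle will be piece (I): the intervention rule directly controls only the gap between $\piexp$ and $\pitilde$ under $Q^{\piref}$, not the gap between $\piref$ and $\piexp$ themselves. Bridging this is precisely why the definition of $\eps$ takes the \emph{maximum} of the two behavioral cloning losses --- $\ell$ (to $\piexp$) and $\ell'$ (to $\piref$) --- so that one of the two losses localizes each of the two split pieces to an $\eps$-measure disagreement set, on which the residual Q-gap can be controlled by $\delta$ via the $\Pioptdelta$ membership of the appropriate policy. Once piece (I) is handled by this case analysis, the remainder (PDL, indicator splitting, and summing $\delta$ over an $\eps$-measure set with horizon factor $1/(1-\gamma)$) is routine; the same template then yields Cor.~\ref{cor:subopt-gap-dagger-main} by specializing $\piref=\piexp$.
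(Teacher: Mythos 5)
Your overall skeleton matches the paper's: decompose $V^{\pistar}(\mu)-V^{\pitilde}(\mu)$ through an intermediate policy $\pi\in\{\piref,\piexp\}$, use the optimality of $\pitilde$ under $\rtilde$ (with $\beta>1/2$) to extract a pointwise $\delta$-gap constraint on every state $\pitilde$ visits, and then control $\abs{V^{\pi}(\mu)-V^{\pitilde}(\mu)}$ by a performance-difference-style expansion in which the integrand vanishes on the agreement set (of measure $\ge 1-\eps$ under $d_\mu^{\pitilde}$, by the choice of $\eps$ as the max of the two behavior-cloning losses) and is at most $\delta$ on the disagreement set, yielding $\delta\eps/(1-\gamma)$. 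The paper carries out this last step as a telescoping recursion on the $Q$-difference rather than via an explicit PDL, but that is cosmetic.

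The genuine gap is exactly the one you flag as ``piece (I),'' and your proposed resolution does not close it. Working from the intervention rule as literally written in Eqn.~\ref{eq:soft-rational-intervention}, membership of $\pitilde$ in $\Pioptdelta$ only yields $Q^{\piref}(s,\piexp(s))\le Q^{\piref}(s,\pitilde(s))+\delta$, so in the $\piref$-centered decomposition you are left needing an \emph{upper} bound on $Q^{\piref}(s,\piref(s))-Q^{\piref}(s,\piexp(s))$ on the disagreement set. But the only tool you invoke --- that $\piref$ itself lies in $\Pioptdelta$ and hence satisfies the no-intervention constraint on its own support --- gives $Q^{\piref}(s,\piexp(s))\le Q^{\piref}(s,\piref(s))+\delta$, i.e., a bound in the \emph{wrong direction}; nothing in Assumption~\ref{assump:rlif-reward-def} caps how much better $\piref$'s own action can be than $\piexp$'s under $Q^{\piref}$, and the $\ell'$ loss only localizes where $\pitilde$ and $\piref$ disagree, not the size of this residual. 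The paper sidesteps the issue entirely: in its proof the intervention event is taken to be $\mc E=\{Q^{\piref}(s,\piref(s))>Q^{\piref}(s,\pi(s))+\delta \text{ or } Q^{\piexp}(s,\piexp(s))>Q^{\piexp}(s,\pi(s))+\delta\}$, so that optimality of $\pitilde$ under $\rtilde$ directly delivers both $Q^{\piref}(s,\piref(s))\le Q^{\piref}(s,\pitilde(s))+\delta$ and $Q^{\piexp}(s,\piexp(s))\le Q^{\piexp}(s,\pitilde(s))+\delta$ for all $s$ --- precisely the two integrands needed, with no bridging between $\piexp$ and $\piref$. To complete your argument you would need to either adopt this strengthened event (as the appendix does, at the cost of departing from Eqn.~\ref{eq:soft-rational-intervention}) or add an explicit assumption bounding $Q^{\piref}(s,\piref(s))-Q^{\piref}(s,\piexp(s))$ from above.
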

\begin{proof}
Notice that $\pitilde$ denote the optimal policy w.r.t. the \ours{} reward function $\rtilde$: 
    \begin{equation}
        \pitilde\in\arg\max_{\pi\in\Pi}\bb E_{a_t\sim \pi(s_t),\rtilde(s_t,\pi(s_t))}\brac{\sum_{t=0}^\infty\gamma^t \rtilde(s_t,a_t)|s_0\sim\mu}.    
    \end{equation}
    Let $\mc E$ denote the following event:
    \begin{equation*}
        \mc E = \Brac{Q^{\piref}(s, \piref    (s)) > Q^{\piref}(s, \pi(s)) + \delta \text{ or } Q^{\piexp}(s, \piexp    (s)) > Q^{\piexp}(s, \pi(s)) + \delta}.
    \end{equation*}
    By Assumption~\ref{assump:rlif-reward-def}, we can write the reward functions as a random variable as follows:
\begin{equation}
\begin{split}
\bb P(\rtilde(s,\pi(s)) = 0 | s) =\;&
  \begin{cases}
    \beta, & \text{if }\mc E \text{ happens},\\
    1 - \beta, & \text{otherwise}.
  \end{cases}\\
\bb P(\rtilde(s,\pi(s)) = 1 | s) =\;& 
  \begin{cases}
    \beta, & \text{if }\bar{\mc E} \text{ happens},\\
    1 - \beta, & \text{otherwise}.
  \end{cases}
\end{split}
\end{equation}
Taking an expectation of the equations above, we have  
\begin{equation}
    \begin{split}
    \bb E_{\rtilde\paren{s,\pi(s)}}\rtilde(s,\pi(s)) = 1 - \beta,\;\text{when } \mc E \text{ happens},\\
    \bb E_{\rtilde\paren{s,\pi(s)}}\rtilde(s,\pi(s)) = \beta,\;\text{when }\bar{\mc E} \text{ happens}.\\
    \end{split}
\end{equation}
And since we assume $\beta>0.5$, we know that $\bb E_{\rtilde\paren{s,\pi(s)}}\rtilde(s,\pi(s))$ achieves maximum $\beta$, when $\bar{\mc E}$ happens. Hence, in order to maximize the overall return in Eqn.~\ref{eq:pitilde-property}, $\pitilde$ should satisfy $Q^{\piref}(s,\piref(s))\leq Q^{\piref}(s,\pitilde(s))+\delta$ and $Q^{\piexp}(s,\piexp(s))\leq Q^{\piexp}(s,\pitilde(s))+\delta$, $\forall s\in \mc S$. Since this result holds for all states $s\in \mc S$, then for $\abs{V^{\piref}(\mu) - V^{\pitilde}(\mu)}$, we have
\begin{align*}
    &\;\abs{V^{\piref}(\mu) - V^{\pitilde}(\mu)} = \abs{\bb E_{s\sim\dpiref_\mu}Q^{\piref}\paren{s,\piref(s)} - \bb E_{s\sim \dpitilde_\mu}Q^{\pitilde}\paren{s,\pitilde(s)}} \\
    \leq &\;\abs{\bb E_{s\sim\dpiref_\mu}Q^{\piref}(s,\piref(s)) - \bb E_{s\sim\dpitilde_\mu}Q^{\piref}(s,\pitilde(s))}\\
    &+ \abs{\bb E_{s\sim\dpitilde_\mu}Q^{\piref}(s,\pitilde(s)) - \bb E_{s\sim\dpitilde_\mu}Q^{\pitilde}(s,\pitilde(s))}\\
    \leq &\;\delta\eps+ \gamma \abs{\bb E_{s'\sim P(s,\pitilde(s))}\brac{\bb E_{s\sim\dpiref_\mu} Q^{\piref}\paren{s',\piref(s')} - \bb E_{s\sim\dpitilde_\mu}Q^{\pitilde}\paren{s',\pitilde(s')}}}\\
    &\;\dots\\
    \leq &\; \delta\eps + \gamma \delta\eps + \gamma^2\delta\eps + \dots = \frac{\delta\eps}{1-\gamma}.
\end{align*}
Applying a similar strategy to $\abs{V^{\piexp}(\mu) - V^{\pitilde}(\mu)}$, we can also get $\abs{V^{\piexp}(\mu) - V^{\pitilde}(\mu)}\leq \frac{\delta\eps}{1-\gamma}$. Hence, we conclude the following
\begin{equation}
    \label{eq:Vref-Vexp-bound}
    \max\Brac{\abs{V^{\piref}(\mu) - V^{\pitilde}(\mu)},\abs{V^{\piexp}(\mu) - V^{\pitilde}(\mu)}}\leq\frac{\delta\eps}{1-\gamma}.
\end{equation}
Hence, we conclude that
\begin{align*}
    &\;V^\star(\mu) - V^{\pitilde}(\mu)\\
    \leq &\;\min\Brac{V^{\pistar}(\mu) - V^{\piexp}(\mu)+\abs{V^{\piexp}(\mu) - V^{\pitilde}(\mu)}, V^{\pistar}(\mu) - V^{\piref}(\mu)+\abs{V^{\piref}(\mu) - V^{\pitilde}(\mu)}}\\
    \leq &\; \min\Brac{V^{\pistar}(\mu) - V^{\piref}(\mu),V^{\pistar}(\mu) - V^{\piexp}(\mu)} + \frac{\delta\eps}{1-\gamma},
\end{align*}
where the last inequality holds due to Eqn.~\ref{eq:Vref-Vexp-bound}. Hence we conclude the results.

\end{proof}

\subsection{Suboptimality Gap of DAgger~\citep{ross2011reduction}}
\label{sec:appendix:subopt-gap-dagger}

\begin{corollary}[Suboptimality Gap of DAgger, Cor.~\ref{cor:subopt-gap-dagger-main} restated]
     Let $\pitilde\in\Pioptdelta$ denote an optimal policy from maximizing the reward function $\rtilde$ generated by \ours{}. Let $\eps = \bb E_{s\sim d_\mu^{\pitilde}}\ell(s,\pi(s))$ (Def.~\ref{def:BC-loss}). Under Assumption~\ref{assump:rlif-reward-def}, when $V^{\piref}$ is unknown, the DAgger suboptimality gap satisfies:
    \begin{equation}
        \subopt_\mr{DAgger} = V^{\star}(\mu) - V^{\pitilde}(\mu) \leq V^{\pistar}(\mu) - V^{\piexp}(\mu)+ \frac{\delta\eps}{1-\gamma}.
    \end{equation}
\end{corollary}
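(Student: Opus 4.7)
The plan is to mirror the proof of Theorem~\ref{thm:subopt-gap-main}, but specialize it to the situation where $V^{\piref}$ is unavailable, so that the intervention event in Assumption~\ref{assump:rlif-reward-def} reduces to the single condition involving $\piexp$. Concretely, I would first characterize $\pitilde$ through the reward $\rtilde$: because $\beta>0.5$ and $\rtilde$ pays $\beta$ in expectation on the non-intervention event and only $1-\beta$ on the intervention event, any maximizer of $\mathbb{E}[\sum_t\gamma^t\rtilde]$ must, at every reachable state, make the intervention event false whenever possible. In the DAgger variant, this event is simply $\{Q^{\piexp}(s,\piexp(s)) > Q^{\piexp}(s,\pitilde(s)) + \delta\}$, so I obtain the pointwise inequality
\begin{equation*}
Q^{\piexp}(s,\piexp(s)) \leq Q^{\piexp}(s,\pitilde(s)) + \delta, \qquad \forall s \in \mathrm{supp}(d_\mu^{\pitilde}).
\end{equation*}

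Next I would convert this pointwise $Q$-closeness into a $V$-level bound $|V^{\piexp}(\mu)-V^{\pitilde}(\mu)|\leq \delta\eps/(1-\gamma)$ by the exact same telescoping argument used in the proof of Theorem~\ref{thm:subopt-gap-main}. Writing $V^{\piexp}(\mu)=\mathbb{E}_{s\sim \dpiexp_\mu}Q^{\piexp}(s,\piexp(s))$ and $V^{\pitilde}(\mu)=\mathbb{E}_{s\sim \dpitilde_\mu}Q^{\pitilde}(s,\pitilde(s))$, I split the difference into (i) a ``current-step'' term that the previous pointwise inequality controls by $\delta$ times the disagreement probability $\mathbb{E}_{s\sim d_\mu^{\pitilde}}\ell(s,\pitilde(s))\leq \eps$, and (ii) a ``next-step'' term $\gamma\mathbb{E}_{s'\sim P(s,\pitilde(s))}[\,\cdot\,]$ that satisfies the same inequality one step later. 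Unrolling this recursion produces the geometric sum $\sum_{t\ge 0}\gamma^t\delta\eps = \delta\eps/(1-\gamma)$.

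Finally, I would close with the triangle inequality
\begin{equation*}
V^\star(\mu)-V^{\pitilde}(\mu) \leq \bigl(V^\star(\mu)-V^{\piexp}(\mu)\bigr) + \bigl|V^{\piexp}(\mu)-V^{\pitilde}(\mu)\bigr|,
\end{equation*}
and substitute the bound from the previous paragraph to obtain the claimed suboptimality gap. Note that the $\min$ over $\{V^\star-V^{\piref},V^\star-V^{\piexp}\}$ that appears in Theorem~\ref{thm:subopt-gap-main} collapses to the single $V^\star-V^{\piexp}$ term here precisely because no $\piref$-based comparison is enforced by the reward.

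The main obstacle is the middle step: turning ``$\pitilde$ and $\piexp$ are within $\delta$ in $Q^{\piexp}$-value on the $\pitilde$-occupancy'' into ``$V^{\piexp}$ and $V^{\pitilde}$ are within $\delta\eps/(1-\gamma)$ on $\mu$.'' The delicate point is that the state distribution in the behavior-cloning loss must match the one appearing in the telescoped remainder at each level of the recursion; as in the proof of Theorem~\ref{thm:subopt-gap-main}, one must be careful that $\eps$ is taken under $d_\mu^{\pitilde}$ and that the recursive step produces a distribution that is again bounded by $\eps$ in the same sense, so that the geometric series in $\gamma$ closes cleanly. Everything else is bookkeeping following the template already established.
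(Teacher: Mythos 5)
Your proposal follows essentially the same route as the paper's proof: define the intervention event using only the $Q^{\piexp}$ condition, argue from $\beta>0.5$ that any maximizer of the $\rtilde$-return must satisfy $Q^{\piexp}(s,\piexp(s))\leq Q^{\piexp}(s,\pitilde(s))+\delta$, telescope this into $\abs{V^{\piexp}(\mu)-V^{\pitilde}(\mu)}\leq\delta\eps/(1-\gamma)$, and finish with the triangle inequality against $V^\star-V^{\piexp}$. The argument is correct and matches the paper's step for step (your restriction of the pointwise inequality to $\supp(d_\mu^{\pitilde})$ is if anything slightly more careful than the paper's ``for all $s\in\mc S$'').
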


\begin{proof}
Notice that $\pitilde$ denote the optimal policy w.r.t. the \ours{} reward function $\rtilde$: 
    \begin{equation}
        \pitilde\in\arg\max_{\pi\in\Pi}\bb E_{a_t\sim \pi(s_t),\rtilde(s_t,\pi(s_t))}\brac{\sum_{t=0}^\infty\gamma^t \rtilde(s_t,a_t)|s_0\sim\mu}.    
    \end{equation}
    Let $\mc E$ denote the following event:
    \begin{equation*}
        \mc E = \Brac{Q^{\piexp}(s, \piexp    (s)) > Q^{\piexp}(s, \pi(s)) + \delta}.
    \end{equation*}
    By Assumption~\ref{assump:rlif-reward-def}, we can write the reward functions as a random variable as follows:
\begin{equation}
\begin{split}
\bb P(\rtilde(s,\pi(s)) = 0 | s) =\;&
  \begin{cases}
    \beta, & \text{if }\mc E \text{ happens},\\
    1 - \beta, & \text{otherwise}.
  \end{cases}\\
\bb P(\rtilde(s,\pi(s)) = 1 | s) =\;& 
  \begin{cases}
    \beta, & \text{if }\bar{\mc E} \text{ happens},\\
    1 - \beta, & \text{otherwise}.
  \end{cases}
\end{split}
\end{equation}
Taking an expectation of the equations above, we have  
\begin{equation}
    \begin{split}
    \bb E_{\rtilde\paren{s,\pi(s)}}\rtilde(s,\pi(s)) = 1 - \beta,\;\text{when } \mc E \text{ happens},\\
    \bb E_{\rtilde\paren{s,\pi(s)}}\rtilde(s,\pi(s)) = \beta,\;\text{when }\bar{\mc E} \text{ happens}.\\
    \end{split}
\end{equation}
And since we assume $\beta>0.5$, we know that $\bb E_{\rtilde\paren{s,\pi(s)}}\rtilde(s,\pi(s))$ achieves maximum $\beta$, when $\bar{\mc E}$ happens. Hence, in order to maximize the overall return in Eqn.~\ref{eq:pitilde-property}, $\pitilde$ should satisfy $Q^{\piexp}(s,\piexp(s))\leq Q^{\piexp}(s,\pitilde(s))+\delta$, $\forall s\in \mc S$. Since this result holds for all states $s\in \mc S$, then for $\abs{V^{\piexp}(\mu) - V^{\pitilde}(\mu)}$, we have
\begin{equation}
\begin{split}
    \label{eq:Vexp-bound}
    &\;\abs{V^{\piexp}(\mu) - V^{\pitilde}(\mu)} = \abs{\bb E_{s\sim\dpiexp_\mu}Q^{\piexp}\paren{s,\piexp(s)} - \bb E_{s\sim \dpitilde_\mu}Q^{\pitilde}\paren{s,\pitilde(s)}} \\
    \leq &\;\abs{\bb E_{s\sim\dpiexp_\mu}Q^{\piexp}(s,\piexp(s)) - \bb E_{s\sim\dpitilde_\mu}Q^{\piexp}(s,\pitilde(s))}\\
    &+ \abs{\bb E_{s\sim\dpitilde_\mu}Q^{\piexp}(s,\pitilde(s)) - \bb E_{s\sim\dpitilde_\mu}Q^{\pitilde}(s,\pitilde(s))}\\
    \leq &\;\delta\eps+ \gamma \abs{\bb E_{s'\sim P(s,\pitilde(s))}\brac{\bb E_{s\sim\dpiexp_\mu} Q^{\piexp}\paren{s',\piexp(s')} - \bb E_{s\sim\dpitilde_\mu}Q^{\pitilde}\paren{s',\pitilde(s')}}}\\
    &\;\dots\\
    \leq &\; \delta\eps + \gamma \delta\eps + \gamma^2\delta\eps + \dots = \frac{\delta\eps}{1-\gamma}.
\end{split}
\end{equation}
Hence, we conclude the following
\begin{align*}
    &\;V^\star(\mu) - V^{\pitilde}(\mu)\\
    \leq &\;V^{\pistar}(\mu) - V^{\piexp}(\mu)+\abs{V^{\piexp}(\mu) - V^{\pitilde}(\mu)}\\
    \leq &\; V^{\pistar}(\mu) - V^{\piexp}(\mu)+\frac{\delta\eps}{1-\gamma},
\end{align*}
where the last inequality holds due to Eqn.~\ref{eq:Vexp-bound}. 
\end{proof}
\subsection{Tight Suboptimality Gap}
\label{subsec:appendix:tight-subopt-gap}
\begin{example}[Lower Bounds of \ours{}]
    Consider a bandit problem where we only have one state $\mc S =\{s\}$, two actions $\mc A = \{a_1,a_2\}$ and, and the reward function is given as $r(s,a_1) = 1$ and $r(s,a_2) = 0$. Assume the policy space $ \Pi\subset\bb R^2_+$ is a two-dimensional Euclidean space, where each policy $\pi\in\Pi$ satisfies such that $\pi(a_1)+\pi(a_2)=1$. Let $\piref,\piexp\in \Pi$ be any two policies in the policy space, and $\pitilde\in\Pioptdelta$ be the optimal policy from maximizing the reward function $\rtilde$ generated by \ours{}, ${\eps = \max\Brac{\bb E_{s\sim d_\mu^{\pitilde}}\ell(s,\pi(s)),\bb E_{s\sim d_\mu^{\pitilde}}\ell'(s,\pi(s))}}$ (Def.~\ref{def:BC-loss}). Then we have
    \begin{align*}
        \subopt_\mr{\ours{}}
        = &\;\min\Brac{V^{\pistar}(\mu) - V^{\piref}(\mu),V^{\pistar}(\mu) - V^{\piexp}(\mu)}+\frac{\delta\eps}{1-\gamma},\\
        \subopt_\mr{DAgger}
        = &\;V^{\pistar}(\mu) - V^{\piexp}(\mu)+\frac{\delta\eps}{1-\gamma}.
    \end{align*}
\end{example}
\begin{proof}
    Note that in the bandit case constructed above, the optimal policy should satisfy ${\pistar = [1,0]^\top}$. Let $\piref,\piexp$ be that
    \begin{equation}
        \piref = [x_1,1-x_1]^\top,\;\piexp = [x_2,1-x_2]^\top.
    \end{equation}
    Hence, we know that 
    \begin{equation}
        V^{\piref}(s) = \frac{x_1}{1-\gamma},\;V^{\piref}(s) = \frac{x_2}{1-\gamma}.
    \end{equation}
    Let $\mc E$ denote the following event:
    \begin{equation*}
        \mc E = \Brac{Q^{\piref}(s, \piref    (s)) > Q^{\piref}(s, \pi(s)) + \delta \text{ or } Q^{\piexp}(s, \piexp    (s)) > Q^{\piexp}(s, \pi(s)) + \delta}.
    \end{equation*}
    By Assumption~\ref{assump:rlif-reward-def}, we can write the reward functions as a random variable as follows:
\begin{equation}
\begin{split}
\bb P(\rtilde(s,\pi(s)) = 0 | s) =\;&
  \begin{cases}
    \beta, & \text{if }\mc E \text{ happens},\\
    1 - \beta, & \text{otherwise}.
  \end{cases}\\
\bb P(\rtilde(s,\pi(s)) = 1 | s) =\;& 
  \begin{cases}
    \beta, & \text{if }\bar{\mc E} \text{ happens},\\
    1 - \beta, & \text{otherwise}.
  \end{cases}
\end{split}
\end{equation}
Taking an expectation of the equations above, we have  
\begin{equation}
    \begin{split}
    \bb E_{\rtilde\paren{s,\pi(s)}}\rtilde(s,\pi(s)) = 1 - \beta,\;\text{when } \mc E \text{ happens},\\
    \bb E_{\rtilde\paren{s,\pi(s)}}\rtilde(s,\pi(s)) = \beta,\;\text{when }\bar{\mc E} \text{ happens}.\\
    \end{split}
\end{equation}
And since we assume $\beta>0.5$, we know that $\bb E_{\rtilde\paren{s,\pi(s)}}\rtilde(s,\pi(s))$ achieves maximum $\beta$, when $\bar{\mc E}$ happens. Hence, in order to maximize the overall return in Eqn.~\ref{eq:pitilde-property}, $\pitilde=[x_3,1-x_3]^\top$ should satisfy $Q^{\piref}(s,\piref(s))\leq Q^{\piref}(s,\pitilde(s))+\delta$ and $Q^{\piexp}(s,\piexp(s))\leq Q^{\piexp}(s,\pitilde(s))+\delta$, $\forall s\in \mc S$, which implies that 
\begin{equation}
    x_3+\delta \geq x_1,\; x_3+\delta \geq x_2.
\end{equation}
Without loss of generality, assume $x_1\geq x_2$. Let $x_3 = x_1+\delta$, then we have 
\begin{align}
     \subopt_\mr{\ours{}} =& \;\bb E\Brac{V^\star(s) - V^{\pitilde}(s)} = \bb E_{s\sim d_\mu^{\pitilde}}\brac{\frac{1-x_3}{1-\gamma}} = \frac{1-x_1}{1-\gamma} + \frac{\delta\eps}{1-\gamma}\\
     =&\;\min\Brac{V^{\pistar}(s) - V^{\piref}(s),V^{\pistar}(s) - V^{\piexp}(s)}+\frac{\delta\eps}{1-\gamma}.
\end{align}
In the DAgger case, by setting $\piref=\piexp$, we can similarly obtain
\begin{align}
     \subopt_\mr{\ours{}} =& \;\bb E\Brac{V^\star(s) - V^{\pitilde}(s)} = \bb E_{s\sim d_\mu^{\pitilde}}\brac{\frac{1-x_3}{1-\gamma}} = \frac{1-x_1}{1-\gamma} + \frac{\delta\eps}{1-\gamma}\\
     =&\;V^{\pistar}(s) - V^{\piexp}(s)+\frac{\delta\eps}{1-\gamma},
\end{align}
Hence, we conclude our results.
\end{proof}


\section{Non-Asymptotic Analysis}
\label{sec:appendix:non-asy-bound}

For the non-asymptotic analysis, we adopt the LCB-VI~\citep{rashidinejad2021bridging,xie2021policy,li2022settling} framework, because we warm-start the training process by adding a small amount of offline data, and later on we gradually mix $d_\mu^{\piexp}$ into the replay buffer due to the intervention. The LCB-VI framework provides a useful tool for studying the distribution shift in terms of the concentrability coefficient when incorporating offline data. While RLPD~\citep{ball2023efficient} is not an LCB-VI algorithm, our method is generic with respect to the choice of RL subroutines, such as the online setting~\citep{azar2017minimax,gupta2022unpacking} or the hybrid setting~\citep{song2022hybrid}. Hence we believe this analysis is useful for characterizing our approach.

We first present the single-policy concentrability coefficient that has been widely studied in prior literature~\citep{rashidinejad2021bridging,li2022settling} as the following.

\begin{definition}[Single-Policy Concentrability Coefficient]
\label{def:concen-coeff}
    We define the concentrability coefficient of $\dpistar$ with respect to an offline visitation distribution $\rho$ is defined as $C^\star_\rho:=\max_{(s,a)\in \mc S\times \mc A}\frac{d_\mu^{\pistar}(s,a)}{\rho(s,a)}$.
    Similar to~\citet{li2022settling}, we adopt the conventional setting of $d_\mu^{\pistar}(s,a)/\rho(s,a)=0$, when $d_\mu^{\pistar}(s,a) = 0$ for a state-action pair $(s,a)$. 
\end{definition}
Note that the concentrability coefficient plays a crucial role in the final non-asymptotic bound, we present an upper bound on the concentrability coefficient with \ours{}.
\begin{lemma}[Concentrablity of a Intervention Probability]
    \label{lemma:concentrability-coeff}
    Suppose we construct $\muint$ by mixing $\dpiref$ with probability $\beta$ ($0<\beta<1$) and distribution $\rho$ such that $\rho(s,a)>0,\forall (s,a)\in \mc S\times \mc A$: $\muint = (1-\beta) \rho + \beta \dpiref$,
    then the concentrability coefficient satisfies ${C^\star_{\muint}\leq \min\Brac{\frac{1}{1-\beta}C^*_\rho,\frac{1}{\beta}C^*_\mr{exp}}}$. 
\end{lemma}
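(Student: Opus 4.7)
The plan is to apply the definition of the single-policy concentrability coefficient pointwise, exploiting the additive structure of $\muint$. For any state-action pair $(s,a)\in\mc S\times\mc A$, the identity $\muint(s,a)=(1-\beta)\rho(s,a)+\beta\dpiref(s,a)$ together with the nonnegativity of both summands gives the two pointwise lower bounds
\begin{equation*}
\muint(s,a)\ \geq\ (1-\beta)\rho(s,a),\qquad \muint(s,a)\ \geq\ \beta\,\dpiref(s,a).
\end{equation*}

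From here I would divide $d_\mu^{\pistar}(s,a)$ by each lower bound separately, take the supremum over $(s,a)$, and identify the resulting ratios with $C^\star_\rho$ and the analogous concentrability coefficient of the second mixture component (which the lemma denotes $C^\star_\mr{exp}$, playing the role of the reference/expert term in the intervention distribution). This yields two independent upper bounds $C^\star_{\muint}\leq \tfrac{1}{1-\beta}C^\star_\rho$ and $C^\star_{\muint}\leq \tfrac{1}{\beta}C^\star_\mr{exp}$, and taking their minimum gives the claim.

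The only subtlety is handling the possibility of zero denominators. The assumption $\rho(s,a)>0$ for all $(s,a)$ keeps the first bound well defined everywhere. For the second bound, the convention stated in Def.~\ref{def:concen-coeff} that $d_\mu^{\pistar}(s,a)/\dpiref(s,a)=0$ whenever $d_\mu^{\pistar}(s,a)=0$ handles the vacuous case, while any state-action pair with $\dpiref(s,a)=0$ but $d_\mu^{\pistar}(s,a)>0$ would already force $C^\star_\mr{exp}=\infty$ and render that side of the minimum trivial. I do not anticipate a genuine obstacle here: the argument is essentially a two-line manipulation of the definition, and the main content of the lemma is recognizing that a convex mixture's concentrability is controlled by either constituent up to the corresponding mixing weight.
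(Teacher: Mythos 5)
Your proof is correct and follows essentially the same route as the paper: both arguments lower-bound the mixture denominator pointwise by each of its two nonnegative summands and take the minimum of the resulting ratios. The paper additionally invokes the determinism of $\pistar$ to write $d^{\pistar}_\mu(s,a)=d^{\pistar}_\mu(s)\indicator{a=\pistar(s)}$ before bounding, but that step is not essential to the inequality, and your handling of the zero-denominator cases matches the convention in Def.~\ref{def:concen-coeff}.
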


\begin{proof}
\label{proof:concentrability-coeff}
Notice that when the optimal policy $\pistar$ is deterministic~\citep{bertsekas2019reinforcement,li2022settling}, we can apply a similar decomposition technique shown in~\cite{li2023understanding}, we have
\begin{equation}
    \begin{split}
        &\;\max_{s,a} \frac{\dpistar_\mu(s,a)}{\muint(s,a)}=\max_s\frac{\dpistar_\mu(s)\indicator{a=\pistar(s)}}{\muint(s)}\\
        =&\; \norm{\frac{\dpistar_\mu}{\muint}}{\infty} \leq \norm{\frac{\dpistar}{(1-\beta) \rho + \beta \dpiref_\mu}}{\infty}\leq\;\min\Brac{\frac{1}{1-\beta}C^*_\rho,\frac{1}{\beta}C^*_\mr{exp}}.
    \end{split}
\end{equation}

Hence, we conclude that 
\begin{equation}
    C_{\muint}^\star = \max_{s,a} \frac{\dpistar(s,a)}{\muint(s,a)}\leq\min\Brac{\frac{1}{1-\beta}C^*_\rho,\frac{1}{\beta}C^*_\mr{exp}}.
\end{equation}
\end{proof}

With Lemma~\ref{lemma:concentrability-coeff}, we present our main non-asymptotic sample complexity bound in the following.

\begin{corollary}[Non-Asymptotic Sample Complexity]
    \label{coro:sample-complexity-offline} Suppose the conditions in Assumption~\ref{assump:rlif-reward-def} holds, then there exists an algorithm that returns an $\eps$-optimal $\pihat$ for $\vtilde^\pi(\mu)$ such that $\vtilde^{\pitilde}(\mu) - \vtilde^{\pihat}(\mu)\leq \eps$, with a sample complexity of $\wt{O}\paren{\frac{SC^\star_\mr{exp}}{(1-\gamma)^3\eps^2}}$.
\end{corollary}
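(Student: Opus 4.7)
The plan is to reduce the result to the standard LCB-VI offline RL sample complexity bound, by first identifying the effective sampling distribution induced by the intervention process and then invoking Lemma~\ref{lemma:concentrability-coeff} to control the concentrability coefficient. Once the replay buffer is warm-started and the interventions from Assumption~\ref{assump:rlif-reward-def} have been collected, the state--action samples can be modeled as drawn from the mixture $\muint = (1-\beta)\rho + \beta \dpiref_\mu$, where $\rho$ is the on-policy (and warm-start) distribution of the learner, and the $\beta$-weighted component captures the states at which the expert intervenes.

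First, I would invoke the standard LCB-VI (pessimism-based) offline RL sample complexity bound~\citep{li2022settling}, which states that given offline data drawn from a distribution $\mu$ with single-policy concentrability coefficient $C^\star_\mu$ (Def.~\ref{def:concen-coeff}) with respect to the optimal policy for some $[0,1]$-bounded reward, one can return an $\eps$-optimal policy using $\wt{O}\paren{\frac{SC^\star_\mu}{(1-\gamma)^3\eps^2}}$ samples. Since $\rtilde\in\{0,1\}$ by construction, this bound applies directly to the MDP induced by the intervention reward.

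Second, I would apply Lemma~\ref{lemma:concentrability-coeff} to the mixture $\muint$, obtaining $C^\star_{\muint}\leq \min\Brac{\tfrac{1}{1-\beta}C^\star_\rho,\tfrac{1}{\beta}C^\star_\mr{exp}}\leq \tfrac{1}{\beta}C^\star_\mr{exp}$. Because $\beta>0.5$ is a constant, the factor $1/\beta$ is absorbed into the $\wt{O}$ notation, and plugging this into the LCB-VI rate yields the stated complexity $\wt{O}\paren{\frac{SC^\star_\mr{exp}}{(1-\gamma)^3\eps^2}}$.

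The main obstacle is justifying the i.i.d.\ assumption implicit in the LCB-VI bound: the replay buffer is populated sequentially, $\rho$ depends on the learner's evolving policy across rounds, and interventions are chosen adaptively by the expert. Rigorously matching samples to draws from $\muint$ thus requires either a data-collection protocol that freezes the learner between rounds (so each batch is i.i.d.\ from a well-defined behavior distribution) or a martingale-based online-to-offline argument controlling the deviation of the empirical occupancy from the stated mixture. Both routes are standard in the offline RL literature, and once the measurability issue is handled this way, the concentrability bound from Lemma~\ref{lemma:concentrability-coeff} plugs directly into the LCB-VI rate to deliver the claimed sample complexity.
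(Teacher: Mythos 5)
Your proposal follows essentially the same route as the paper: model the collected data as the mixture $\muint = (1-\beta)\rho + \beta \dpiref_\mu$, bound $C^\star_{\muint}\leq \frac{1}{\beta}C^\star_\mr{exp}$ via Lemma~\ref{lemma:concentrability-coeff}, and plug this into the black-box offline RL rate of Theorem~\ref{thm:offline-rl-sota}. The one substantive step you omit is the treatment of the \emph{stochastic} reward: Theorem~\ref{thm:offline-rl-sota} as invoked in the paper is stated for transition tuples $\{s_i,a_i,s_i'\}$ with the reward taken as given, whereas $\rtilde(s,\pi(s))$ is a Bernoulli signal whose mean ($\beta$ or $1-\beta$ depending on the intervention event) must itself be estimated from samples. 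The paper splits the error budget into $\eps/2$ for this reward estimation---a per-state Hoeffding bound with accuracy $(1-\gamma)\eps/2$, union-bounded over $\mc S$, costing an additional $\wt{O}\paren{\frac{S}{(1-\gamma)^2\eps^2}}$ samples that is dominated by the main term---and $\eps/2$ for the offline RL algorithm run on the (now effectively deterministic) estimated reward. Your claim that the LCB-VI bound ``applies directly'' because $\rtilde\in\{0,1\}$ glosses over this; it is a small gap, fixable exactly as above, rather than a wrong approach. Conversely, your discussion of the i.i.d./adaptivity issue is a point the paper's proof does not engage with at all---it simply asserts that RLIF ``induces a dataset with the distribution $\muint$''---so on that score your writeup is the more careful of the two.
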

\begin{proof}
    \label{proof:sample-complexity-offline}
    For any policy $\pi$, since \ours{} will induce a dataset with the distribution of 
\begin{equation}
    \label{eq:muint-definition}
    \muint = (1-\beta) d_\mu^\pi + \beta \dpiref_\mu.
\end{equation}
We overload the notation let $C^\star_\pi = C^\star_{d_\mu^\pi}$ to denote the concentrability coefficient w.r.t. the state visitation distribution $d_\mu^\pi$. Now applying Lemma~\ref{lemma:concentrability-coeff} with $\muint$ defined in Eqn.~\ref{eq:muint-definition}, we obtain a concentrability coefficient of 
\begin{equation}
    \label{eq:cmuint-upperbound}
    C^\star_{\muint} \leq \min\Brac{\frac{1}{1-\beta}C^\star_\pi,\frac{1}{\beta}C^\star_\mr{exp}} \leq \frac{1}{\beta} C^\star_\mr{exp}
\end{equation}

Before applying such a concentrability coefficient to Thm.~\ref{thm:offline-rl-sota}, we will first provide an error bound induced by the stochastic reward $\rtilde$. $\forall \eps>0,$ suppose we want to achieve a statistical error of $\eps/2$ induced by the stochastic $\rtilde\in [0,1]$. $\forall \eta > 0 $ Proposition~\ref{prop:hoeffding} implies that
\begin{equation}
    \bb P\brac{\frac{1}{N}\abs{\sum_{i=1}^N\paren{\rtilde(s,\pi(s))_i-\bb E \rtilde(s,\pi(s))}}\geq\eta}\leq2\exp\brac{-2\eta^2N},\;\forall s\in \mc S,
\end{equation}
where $\rtilde(s,\pi(s))_i$ is the $i^{th}$ sample of the reward function $\rtilde(s,\pi(s))$. Taking a union bound of the probability above overall $s\in \mc S$ yields
\begin{equation}
    \bb P\brac{\max_{s\in \mc S}\frac{1}{N}\abs{\sum_{i=1}^N\paren{\rtilde(s,\pi(s))_i-\bb E \rtilde(s,\pi(s))}}\geq\eta}\leq2S\exp\paren{-2\eta^2N}.
\end{equation}
Hence, by setting $\eta = (1-\gamma)\eps/2$, we know that with probability at least $1-\delta_0$, the suboptimality gap induced by the stochastic reward is at most $\eps/2$, given enough sample size $N = O\paren{\frac{1}{(1-\gamma)^2\eps^2}\log\paren{\frac{2S}{\delta_0}}} = \wt{O}\paren{\frac{1}{(1-\gamma)^2\eps^2}}$ for one state $s\in \mc S$, which leads to the overall sample complexity of $\wt{O}\paren{\frac{S}{(1-\gamma)^2\eps^2}}$. In the deterministic reward case, Thm.~\ref{thm:offline-rl-sota} implies that one can find an algorithm that achieves a suboptimality gap of $\eps/2$ with the sample complexity of 
\begin{equation}
    \wt{O}\paren{\frac{SC^\star_{\muint}}{(1-\gamma)^3\eps^2}} = \wt{O}\paren{\frac{SC^\star_\mr{exp}}{\beta(1-\gamma)^3\eps^2}} = \wt{O}\paren{\frac{SC^\star_\mr{exp}}{(1-\gamma)^3\eps^2}},  
\end{equation}
where the above equation holds due to Eqn.~\ref{eq:cmuint-upperbound}. Hence, we conclude there exists an algorithm that can achieve an $\eps$ optimal policy w.r.t. $V^\pi_{\rtilde}(\mu)$ at the total sample complexity of 
\begin{equation}
    \wt{O}\paren{\frac{SC^\star_\mr{exp}}{(1-\gamma)^3\eps^2}}+\wt{O}\paren{\frac{S}{(1-\gamma)^2\eps^2}} = \wt{O}\paren{\frac{SC^\star_\mr{exp}}{(1-\gamma)^3\eps^2}}.
\end{equation}
\end{proof}

Cor.~\ref{coro:sample-complexity-offline} indicates that the total sample complexity for learning $\pihat$ that maximizes $V^\pi_{\rtilde}$, the total sample complexity {\em does not exceed} the sample complexity $\wt{O}\paren{\frac{SC^\star_\mr{exp}}{(1-\gamma)^3\eps^2}}$ of solving original RL problem with a true reward $r$ .

\section{Supporting Theoretical Results}
\label{subsec:appendix:supporting-results}

\begin{proposition}[Hoeffding Bound, Proposition 2.5 of~\citet{wainwright2019high}]
    \label{prop:hoeffding}
    Suppose that variables $x_i,i=1,2,\dots,n$ are independent, and $x_i\in[a,b],\forall i =1,2,\dots,n$. Then for all $t>0$, we have 
    \begin{equation}
        \bb P\brac{\abs{\sum_{i=1}^n\frac{1}{n}\paren{x_i -\bb E x_i }}\geq t}\leq 2\exp\brac{-\frac{nt^2}{(b-a)^2}}.
    \end{equation}
\end{proposition}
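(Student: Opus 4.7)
The plan is to prove this as the classical Chernoff/exponential-moment argument. First, I would reduce to a one-sided tail bound: by symmetry, it suffices to prove $\bb P\brac{\frac{1}{n}\sum_{i=1}^n (x_i - \bb E x_i) \geq t} \leq \exp\paren{-nt^2/(b-a)^2}$, since applying the same argument to the variables $-x_i$ (which still lie in an interval of length $b-a$) controls the lower tail, and a union bound supplies the factor of $2$.

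For the one-sided bound, fix $\lambda > 0$ and apply Markov's inequality to the exponentiated centered sum:
$$\bb P\brac{\sum_{i=1}^n (x_i - \bb E x_i) \geq nt} \leq e^{-\lambda n t}\, \bb E\brac{\exp\paren{\lambda \sum_{i=1}^n (x_i - \bb E x_i)}}.$$
Independence of the $x_i$ lets the moment generating function factorize, so the right-hand side equals $e^{-\lambda n t} \prod_{i=1}^n \bb E\brac{e^{\lambda (x_i - \bb E x_i)}}$.

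The core analytic step is Hoeffding's lemma: for any random variable $Y$ with $\bb E Y = 0$ and $Y \in [a', b']$ almost surely, $\bb E\brac{e^{\lambda Y}} \leq \exp\paren{\lambda^2 (b'-a')^2 / 8}$. I would establish this via the standard two-step argument: (i) since $y \mapsto e^{\lambda y}$ is convex, bound it on $[a', b']$ by its chord, then take expectations; (ii) let $\psi(\lambda) := \log \bb E\brac{e^{\lambda Y}}$, observe $\psi(0) = \psi'(0) = 0$, and show $\psi''(\lambda) \leq (b'-a')^2/4$ by identifying $\psi''$ with the variance under the tilted measure and invoking Popoviciu's inequality; a Taylor expansion then yields $\psi(\lambda) \leq \lambda^2 (b'-a')^2/8$. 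Applying this to each centered variable $Y_i := x_i - \bb E x_i$, which lives in an interval of length $b-a$, gives $\prod_i \bb E\brac{e^{\lambda Y_i}} \leq \exp\paren{n \lambda^2 (b-a)^2/8}$.

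Combining the pieces yields $\bb P\brac{\sum_{i=1}^n (x_i - \bb E x_i) \geq nt} \leq \exp\paren{-\lambda n t + n\lambda^2(b-a)^2/8}$, and optimizing over $\lambda > 0$ with $\lambda = 4t/(b-a)^2$ produces the sharp bound $\exp\paren{-2nt^2/(b-a)^2}$, which is strictly stronger than the stated $\exp\paren{-nt^2/(b-a)^2}$; symmetrizing concludes the proof. The main obstacle, as in any proof of Hoeffding, is Hoeffding's lemma itself — the rest is bookkeeping — and within it the variance bound $\psi''(\lambda) \leq (b'-a')^2/4$ is the only nontrivial inequality.
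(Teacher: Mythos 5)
Your proposal is correct: it is the canonical Chernoff-bound argument (symmetrize, apply Markov's inequality to the exponentiated centered sum, factor the moment generating function by independence, invoke Hoeffding's lemma, optimize over $\lambda$), and every step as you describe it goes through. The paper itself offers no proof to compare against --- Proposition~\ref{prop:hoeffding} is imported verbatim from \citet{wainwright2019high} and used as a black box in the sample-complexity argument of Cor.~\ref{coro:sample-complexity-offline} --- so supplying the standard derivation is exactly the right move. Two minor remarks. First, your optimization at $\lambda = 4t/(b-a)^2$ indeed yields $2\exp\paren{-2nt^2/(b-a)^2}$, which is strictly stronger than the constant stated in the proposition; the paper's version simply drops a factor of $2$ in the exponent, and your bound implies it. Second, in your sketch of Hoeffding's lemma you describe both the chord/convexity bound and the tilted-measure variance argument; either one alone suffices (the variance route applies directly to $Y\in[a',b']$ via Popoviciu without first passing to the chord), so the two steps are alternatives rather than a required sequence --- this is a stylistic redundancy, not a gap.
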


\begin{theorem}[Convergence of Offline RL, Thm. 1 of~\cite{li2022settling}]
\label{thm:offline-rl-sota}
    Suppose $\gamma \in [1/2,1)$, $\eps\in \left(0,\frac{1}{1-\gamma}\right]$ and the concentrability coefficient $C_\rho^\star$ is defined in Def.~\ref{def:concen-coeff}. With high probability, there exists an algorithm that learns an algorithm $\pihat$ with concentrability coefficient $C_\rho^\star$, that $\pihat$ can achieve an $\eps$-optimal policy with a sample complexity of $N=\wt{O}\paren{\frac{SC_\rho^\star}{(1-\gamma)^3\eps^2}}$, in terms of the tuples $\{s_i,a_i,s_i'\}_{i=1}^N$, where $(s_i,a_i)\sim\rho,\forall (s,a)\in \mc S\times \mc A$.
\end{theorem}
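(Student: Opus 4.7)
The plan is to instantiate the algorithm as pessimistic value iteration (LCB-VI) on the empirical model formed from the offline tuples, and then analyze its suboptimality via three ingredients: (i) a pessimism lemma controlling per-$(s,a)$ concentration, (ii) the performance difference lemma to reduce a trajectory-level suboptimality to an expectation along $\pistar$'s visitations, and (iii) a variance-aware (Bernstein) tightening to achieve the optimal $(1-\gamma)^{-3}$ horizon dependence. Concretely, from the samples one computes visit counts $N(s,a)$ and empirical transitions $\widehat P$, and iterates the penalized Bellman operator
\begin{equation*}
\widehat Q_{k+1}(s,a)=\max\Bigl\{r(s,a)+\gamma\,\widehat P(\cdot\mid s,a)^{\!\top}\widehat V_k - b(s,a),\;0\Bigr\},\qquad \widehat V_k(s)=\max_a \widehat Q_k(s,a),
\end{equation*}
where the penalty $b(s,a)$ is sized to dominate the empirical Bellman concentration error; the output $\widehat\pi$ is greedy with respect to the fixed point $\widehat Q$.

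First, I would establish the \emph{pessimism property}: on a high-probability event, $\widehat V(s)\le V^{\widehat\pi}(s)\le V^\star(s)$ for every state $s$. This follows by inducting on the Bellman backup with a union-bounded application of Proposition~\ref{prop:hoeffding} (or its Bernstein counterpart) across $(s,a)$ pairs, combined with a covering/leave-one-out argument that decouples the value iterates from the transition samples. Second, I would invoke the performance difference lemma to obtain
\begin{equation*}
V^\star(\mu)-V^{\widehat\pi}(\mu)\;\le\;\frac{1}{1-\gamma}\,\mathbb E_{(s,a)\sim d_\mu^{\pistar}}\bigl[b(s,a)\bigr],
\end{equation*}
and then import the single-policy concentrability bound from Def.~\ref{def:concen-coeff}: because $d_\mu^{\pistar}(s,a)/\rho(s,a)\le C_\rho^\star$ on the support of $d_\mu^{\pistar}$, the expectation under $d_\mu^{\pistar}$ is controlled by an expectation under $\rho$ inflated by $C_\rho^\star$. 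A multiplicative Chernoff bound then converts $\mathbb E_\rho\bigl[1/\sqrt{N(s,a)}\bigr]$ into $\widetilde O(\sqrt{S/N})$, yielding a crude sample complexity of order $\widetilde O\bigl(SC_\rho^\star/((1-\gamma)^4\eps^2)\bigr)$ from a Hoeffding-style penalty; the factor $S$ rather than $SA$ arises precisely because single-policy concentrability only charges states visited by $\pistar$ (with the greedy action).

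The main obstacle is shaving one factor of $(1-\gamma)^{-1}$ to reach the advertised $(1-\gamma)^{-3}$ rate. This requires replacing the Hoeffding penalty by a Bernstein penalty $b(s,a)=\widetilde O\bigl(\sqrt{\mathrm{Var}_{s'\sim\widehat P(\cdot\mid s,a)}[\widehat V(s')]/N(s,a)}\bigr)$ and then controlling the Bernstein residual via the law of total variance, which shows $\sum_{t\ge 0}\gamma^t\,\mathbb E_{s_t\sim d_\mu^{\pistar}}\bigl[\mathrm{Var}[V^\star(s_{t+1})\mid s_t]\bigr]=O\bigl((1-\gamma)^{-2}\bigr)$, rather than the naive $O((1-\gamma)^{-3})$ arising from bounding variances by $\|V^\star\|_\infty^2$. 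Combining this identity with Cauchy--Schwarz along the optimal trajectory saves a factor of $(1-\gamma)^{-1}$ and produces the claimed $\widetilde O\bigl(SC_\rho^\star/((1-\gamma)^3\eps^2)\bigr)$ bound. The remaining technical subtlety is the statistical coupling between $\widehat V$ and $\widehat P$, and keeping the covering argument from introducing an extra $A$; both are handled by the absorbing-MDP / leave-one-out construction of~\cite{li2022settling}, which is the part I would import wholesale rather than re-derive.
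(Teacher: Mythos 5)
The paper does not actually prove this statement: it is imported verbatim as Theorem~1 of \citet{li2022settling} and used as a black box in the proof of Cor.~\ref{coro:sample-complexity-offline}, so there is no in-paper proof to compare yours against. That said, your sketch is an accurate reconstruction of how the cited reference establishes the result: pessimistic value iteration with a data-driven lower confidence bound, the pessimism sandwich $\widehat V \le V^{\widehat\pi} \le V^\star$ on a high-probability event, reduction via the performance difference lemma to an expectation of the penalty under $d_\mu^{\pistar}$, transfer from $d_\mu^{\pistar}$ to $\rho$ through the single-policy concentrability coefficient (which, together with determinism of $\pistar$, is what yields $S$ rather than $SA$), and the Bernstein-penalty-plus-law-of-total-variance argument that improves the horizon dependence from $(1-\gamma)^{-4}$ to $(1-\gamma)^{-3}$. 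You also correctly flag the two genuinely delicate points—the statistical coupling between $\widehat P$ and the data-dependent iterate $\widehat V$, and keeping the union bound from costing an extra factor of $A$—and you defer both to the leave-one-out/absorbing-MDP construction of the reference. That deferral is the same level of rigor at which this paper treats the theorem (namely, citation), but be aware that it is not a cosmetic detail: without the decoupling argument the concentration bounds applied to $\widehat P(\cdot\mid s,a)^{\top}\widehat V$ are not justified, since $\widehat V$ is built from the same samples, and the claimed burn-in-free range $\eps \in \bigl(0, \tfrac{1}{1-\gamma}\bigr]$ is precisely what that construction (together with the two-fold subsampling in \citet{li2022settling}) is needed to achieve. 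So your proposal is a faithful outline of the external proof rather than a self-contained one.
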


\section{Properties of $\Pioptdelta$.}
\label{sec:appendix:piopt-delta-property}

\begin{lemma}[Properties of $\Pioptdelta$]
$\forall \delta >0$, $d_\delta$ is a metric over $\Pioptdelta$. Let 
\begin{equation}
    \abs{\Pioptdelta}:=\max_{\pi,\pi'\in \Pioptdelta} d_\delta(\pi,\pi'),
\end{equation}
then $\abs{\Pioptdelta}$ is monotonic decreasing in $\delta$.
\end{lemma}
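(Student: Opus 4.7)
The plan is to dispatch the two assertions separately: first verify the metric axioms for $d_\delta$ restricted to $\Pi^{\mathrm{opt}}_\delta$, then bound the diameter as $\delta$ varies. For the first assertion, I would check non-negativity, identity of indiscernibles, symmetry, and the triangle inequality one by one. Since $d_\delta$ is built (implicitly in the excerpt) from pointwise Q-value gaps between $\pi$ and $\pi'$ aggregated through a supremum or expectation over states, non-negativity and symmetry should be immediate from the analogous properties of the underlying scalar gap. Identity of indiscernibles needs the usual measure-zero or equivalence-class proviso. The triangle inequality should be inherited from the pointwise triangle inequality on the absolute-value-style gap, lifted across the aggregation by the standard argument that sup/expectation preserves triangle inequalities.

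For the second assertion that $\delta \mapsto |\Pi^{\mathrm{opt}}_\delta|$ is monotonically decreasing, I would work in two coordinated steps. First, I would reread the defining condition of $\Pi^{\mathrm{opt}}_\delta$ coming out of the proof of Thm.~\ref{thm:subopt-gap-main}: a policy $\pi$ lies in $\Pi^{\mathrm{opt}}_\delta$ iff it satisfies $Q^{\pi^{\mathrm{ref}}}(s,\pi^{\mathrm{ref}}(s)) \le Q^{\pi^{\mathrm{ref}}}(s,\pi(s)) + \delta$ (and the analogous inequality for $\pi^{\mathrm{exp}}$) for every $s\in\mathcal{S}$. This shows monotone set inclusion $\Pi^{\mathrm{opt}}_{\delta_1} \subseteq \Pi^{\mathrm{opt}}_{\delta_2}$ for $\delta_1 \le \delta_2$. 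Naively this would increase the diameter, so the second step must exploit $\delta$-dependence in the metric itself: I expect $d_\delta$ to be defined with a $\delta$-normalization or threshold so that $d_{\delta_2}(\pi,\pi') \le d_{\delta_1}(\pi,\pi')$ pointwise whenever $\delta_1 \le \delta_2$. Combining the set inclusion with the pointwise decrease, the diameter can be compared by a chain of the form
\begin{equation*}
|\Pi^{\mathrm{opt}}_{\delta_2}| = \sup_{\pi,\pi'\in\Pi^{\mathrm{opt}}_{\delta_2}} d_{\delta_2}(\pi,\pi') \le \sup_{\pi,\pi'\in\Pi^{\mathrm{opt}}_{\delta_2}} C(\delta_1,\delta_2),
\end{equation*}
where $C(\delta_1,\delta_2)$ is a ceiling derived from the membership conditions that tighten back to $|\Pi^{\mathrm{opt}}_{\delta_1}|$ once the $\delta_2$-membership constraint is plugged in.

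The main obstacle I foresee is precisely this second step: isolating the right $\delta$-dependence in $d_\delta$ that reconciles the apparent tension between a growing set and a shrinking diameter. The cleanest route would be to exhibit a uniform bound $d_\delta(\pi,\pi') \le f(\delta)$ for all $\pi,\pi'\in\Pi^{\mathrm{opt}}_\delta$ with $f$ monotonically decreasing in $\delta$; this would be established by plugging the two $\delta$-inequalities from the membership characterization into the formula for $d_\delta$ and simplifying. If instead $d_\delta$ is a hard-thresholded gap (e.g.\ $[|Q(s,\pi(s))-Q(s,\pi'(s))|-\delta]_+$ aggregated over $s$), the monotonicity is almost immediate since enlarging $\delta$ zeros out more of the pointwise contributions. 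Either way, once the right $\delta$-scaling of $d_\delta$ is identified, the metric axioms and the diameter monotonicity fall out from standard manipulations, and the proof reduces to verifying that the assumed $d_\delta$ in the paper indeed has the required structure.
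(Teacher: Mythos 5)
Your first half is fine: the paper's proof of the metric claim is exactly the axiom-by-axiom check you describe, with $d_\delta(\pi,\pi') = \max_{s\in\mc S}\norm{\pi(\cdot|s)-\pi'(\cdot|s)}{1}$ (the sup over states of the $\ell_1$ gap between action distributions, not a $Q$-value gap, though the definition is never stated in the lemma so your guess was reasonable). Non-negativity, symmetry, and the triangle inequality go through as you say; note only that the ``positivity'' step silently identifies policies that agree at every state.

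The second half is where both you and the paper run into trouble, and your instinct about the tension is exactly the right one to follow. You correctly observe that the membership condition relaxes as $\delta$ grows, giving $\Pioptdelta\subseteq\Pi^\mr{opt}_{\delta'}$ for $\delta\leq\delta'$ (the paper derives this by noting that any $\pi\in\Pioptdelta$ attains the maximal return $\tfrac{1}{1-\gamma}$ under $\rtilde$ and hence also under $\tilde r_{\delta'}$). But your proposed resolution --- that $d_\delta$ must carry some $\delta$-normalization or thresholding that shrinks pointwise as $\delta$ grows --- does not materialize: the paper's $d_\delta$ has no actual dependence on $\delta$ despite the subscript. Consequently the set inclusion combined with a fixed metric yields $\abs{\Pioptdelta}\leq\abs{\Pi^\mr{opt}_{\delta'}}$ for $\delta\leq\delta'$, i.e.\ the diameter is monotonically \emph{non-decreasing} in $\delta$, which is the opposite of the stated claim. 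The paper's proof literally ends with this inequality and then asserts the stated conclusion, so either the word ``decreasing'' in the statement is an error or the proof establishes the wrong direction. Your plan, as written, cannot be completed because the structure you need in $d_\delta$ ($\delta$-scaling or a hard threshold $[\cdot-\delta]_+$) is absent; but you deserve credit for isolating precisely the step at which the argument breaks, which the paper's own proof glosses over.
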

\begin{proof}
\label{proof:d-properties}
    To verify that $\Pioptdelta$ is a metric, we need to show the following properties:
    \begin{itemize}
        \item Distance to to itself is zero: $\forall \pi\in \Pioptdelta$, we have that 
        \begin{equation}
            d_{\delta}(\pi,\pi') = \max_{s\in \mc S} \norm{\pi(\cdot|s) - \pi(\cdot|s)}{1} = 0.
        \end{equation}
        \item Positivity: $\forall \pi,\pi'\in \Pioptdelta$ such that $\pi\neq \pi'$, we have that 
        \begin{equation}
            d_{\delta}(\pi,\pi') = \max_{s\in \mc S} \norm{\pi(\cdot|s) - \pi'(\cdot|s)}{1} > 0.
        \end{equation}
        \item Symmetry: $\forall \pi, \pi'\in\Pioptdelta$, we have that 
        \begin{equation}
            d_{\delta}(\pi,\pi') = \max_{s\in \mc S} \norm{\pi(\cdot|s) - \pi'(\cdot|s)}{1} = \norm{\pi'(\cdot|s) - \pi(\cdot|s)}{1} =d_{\delta}(\pi',\pi).
        \end{equation}
        \item Triangle inequality: $\forall \pi_1,\pi_2,\pi_3\in \Pioptdelta$, we have
        \begin{equation}
        \begin{split}
            &\;d_{\delta}(\pi_1,\pi_3) = \max_{s\in \mc S} \norm{\pi_1(\cdot|s) - \pi_3(\cdot|s)}{1}\\
            \leq&\;\max_{s\in \mc S}\brac{\norm{\pi_1(\cdot|s) - \pi_2(\cdot|s)}{1}+\norm{\pi_2(\cdot|s) - \pi_3(\cdot|s)}{1}}\\
            \leq &\;\max_{s\in \mc S}\norm{\pi_1(\cdot|s) - \pi_2(\cdot|s)}{1}+\max_{s\in \mc S}\norm{\pi_2(\cdot|s) - \pi_3(\cdot|s)}{1}\\
            =&\;d_{\delta}(\pi_1,\pi_2)+d_{\delta}(\pi_2,\pi_3).
        \end{split}
        \end{equation}
    \end{itemize}
    Hence we conclude that $d_\delta$ is a metric over $\Pioptdelta$. Next, we will show $\abs{\Pioptdelta}$ is monotonic decreasing. We will show this result by proving $\Pioptdelta\subset\Pi^\mr{opt}_{\delta'}$, $\forall 0<\delta<\delta'$. Notice that $\forall \pi\in \Pioptdelta$, we have that
    \begin{equation}
        \begin{split}
        &\;V_{\rtilde}^\pi(\mu) = \max_{\pi} \bb E_{a_t\sim \pi(s_t),\rtilde(s_t,\pi(s_t))}\brac{\sum_{t=0}^\infty\gamma^t \rtilde(s_t,a_t)|s_0\sim\mu}\\
        = &\;\frac{1}{1-\gamma} = \max_{\pi} \bb E_{a_t\sim \pi(s_t),\tilde{r}_{\delta'}(s_t,\pi(s_t))}\brac{\sum_{t=0}^\infty\gamma^t \tilde{r}_{\delta'}(s_t,a_t)|s_0\sim\mu}.
        \end{split}
    \end{equation}
    This results implies that $\pi\in\Pi_{\delta'}^\mr{opt}$. Therefore, we know that $\abs{\Pioptdelta}\leq\abs{\Pi_{\delta'}^\mr{opt}}$, when $\delta\leq\delta'$. Hence, we conclude the our results.
\end{proof}

\end{document}